\documentclass[letterpaper]{article} 
\usepackage[preprint]{aaai2027}  
\usepackage{times}  
\usepackage{helvet} 
\usepackage{courier} 
\usepackage[hyphens]{url}  
\usepackage{graphicx} 
\usepackage{natbib}  
\usepackage{caption} 
\usepackage{float}  

\usepackage{array}
\usepackage{amsmath, amssymb, amsthm, mathtools}
\usepackage{algorithm}
\usepackage{algorithmic}
\usepackage{booktabs}
\usepackage{multicol}

\theoremstyle{plain}
\newtheorem{theorem}{Theorem}
\newtheorem{lemma}{Lemma}

\newtheorem{proposition}{Proposition}

\theoremstyle{definition}

\theoremstyle{remark}

\newcommand{\R}{\mathbb{R}}

\title{Convergence Theory of Knowledge Distillation in Asynchronous P2P Gossip Learning Network}

\author{
\mdseries
Lucas Qingyang Fang$^{1}$,
Tiyao Liu$^{2}$,
Jinhao Jing$^{3}$,
Zeji Li$^{4}$,
Kaijie Chen$^{5}$,\\
Harikrishna Kuttivelil$^{1}$\corresponding,
Katia Obraczka$^{1}$\corresponding
}
\affiliations{
$^{1}$University of California, Santa Cruz, USA\\
$^{2}$China University of Petroleum, China\\
$^{3}$The Chinese University of Hong Kong, Shenzhen, China\\
$^{4}$City University of Hong Kong, Hong Kong SAR, China\\
$^{5}$University of Virginia, USA
}

\begin{document}

\maketitle

\begin{abstract}
Decentralized, serverless learning increasingly connects devices running different architectures, where the standard tool, decentralized SGD, is undefined as models with different parameter counts cannot be averaged. Knowledge distillation (KD) exchanges soft predictions rather than weights and sidesteps this obstacle, yet convergence theory for fully decentralized, asynchronous peer-to-peer (P2P) KD is lacking. We provide one, relocating consensus from parameter space to function (output) space: a KD event is a geometric contraction operator in logit space on the peers' predictive distributions, which we analyse in the Hilbert space of predictions on a reference measure. Under standard smoothness/variance assumptions and two realizability assumptions, one bridging parameter SGD to the functional step and one controlling restricted task/KD alignment, the time-averaged functional stationarity and function-space disagreement converge at rate $\mathcal{O}(1/(\eta T))$ to an $\mathcal{O}(\eta)+\mathcal{O}(B_f^2)+\mathcal{O}(\zeta_f^2)$ neighbourhood. Here $B_f$ is the distance from the task optimum to the peers' reachable classes and $\zeta_f$ measures persistent local-task heterogeneity. Across homogeneous, width-heterogeneous, and mixed-family networks of the experiments, KD contracts function disagreement by $40$--$61\times$, while isolated training does not. The sampled stationarity diagnostic has late-transient exponents $0.99$--$1.90$ on the shared-skeleton main runs, and the four-point step-size sweep exhibits the predicted transient--neighbourhood tradeoff.
\end{abstract}
\section{Introduction}
\label{sec:intro}

Gossip learning removes the central server from distributed training. Each peer keeps
private data, performs local updates, and communicates with neighbours over a peer-to-peer
(P2P) network without a global clock. In decentralized SGD, the communication step is a
weighted average of parameter vectors, $x_i\leftarrow\sum_j W_{ij}x_j$. This operation
requires all peers to use the same parameter space $\R^d$ \citep{lian2017dpsgd,boyd2006gossip},
so it is undefined when the
network contains different architectures.

Knowledge distillation (KD) provides a natural alternative because peers exchange soft
predictions rather than parameters. Predictions lie in the class simplex regardless of the
network that produced them. Federated distillation methods use this common output space,
but the methods considered in this line of work rely on a coordinator or synchronized
rounds \citep{hinton2015distill,li2019fedmd,lin2020feddf,chang2019cronus} and do not analyze the fully decentralized, asynchronous setting. We explore whether an asynchronize P2P KD process admits a convergence theory
when the peers have different reachable function classes.

We measure consensus on a reference measure $\mu$. For peer
$i$, let $p_i=\mathrm{softmax}(z_i/\tau)$ denote the temperature-scaled prediction. The
predictions belong to the common Hilbert space $L^2(\mu;\R^C)$, where we measure disagreement
by $\Psi^f_t=\tfrac1N\sum_i\mathbb E\|p_i-\bar p\|_\mu^2$. A KD exchange has logit
gradient $\tau(p_i-p_j)$, so it acts as a function-space mixing step. Its contraction rate
is proportional to $\rho_f=\Theta(\eta\alpha p_{\min}\lambda_2(\mathcal L)/|E|)$.
We couple this contraction with descent of the task risk in a Lyapunov argument. The proof
uses A4 (detailed and justified in Appendix~\ref{app:assumption-validity}, along with A10) to relate a parameter update to its functional effect and A10 to control directions that are outside an individual peer's reachable class.

\paragraph{Contributions.}
We establish a convergence bound for asynchronous random-edge P2P KD that controls both
time-averaged functional stationarity and function-space disagreement within an
$\mathcal{O}(\eta)+\mathcal{O}(B_f^2)+\mathcal{O}(\zeta_f^2)$ neighbourhood.
The three terms separate stochastic discretization, representational error, and
persistent local-task heterogeneity. We characterize the function-space
communication operator that remains defined for heterogeneous peers and keep parameter
averaging as a homogeneous-model control. The appendix gives a self-contained proof and
states the two modelling bridges explicitly, including sufficient verification models
for A10. In the experiments, we record the theorem's
observables and use isolated training and homogeneous D-SGD only as mechanism controls.
The experiments do not make a task-accuracy ranking claim or determine $B_f$ from
architecture.

\subsection{Related Work}
\label{sec:related}
\paragraph{Decentralized optimization and gossip SGD.}
Decentralized-SGD theory covers non-convex optimization with parameter averaging
and has been extended to changing topology, directed graphs, data heterogeneity,
and asynchronous updates \citep{lian2017dpsgd,koloskova2020unified,assran2019pushsum}. These analyses use local stochastic gradients in a
shared parameter space~\cite{dandi2022mixing, sun2019sonata}. Thus, this mechanism is naturally undefined in a system with models that have different parameter size and architecture, while Knowledge Distillation offers a solution because it operates at output space.

\paragraph{Knowledge distillation (KD) and federated KD.}
KD transfers a teacher's soft predictions to a student~\citep{hinton2015distill}.
Because the exchanged object is a prediction, KD can connect models with different
architectures. FedMD distils on a shared public set~\citep{li2019fedmd}, FedDF uses
server-side ensemble distillation~\citep{lin2020feddf}, and Cronus studies black-box
soft-label exchange under malicious clients~\citep{chang2019cronus}. These methods
use a coordinator or a synchronized protocol. Recent decentralized federated KD
methods retain central coordination~\cite{taya2022decentralized}, including IMFL's gradient aggregator
\citep{ying2026_decen_fed_kd_incentive_mechanism} and TopoMoDistill's synchronization
server~\citep{wu2026topology_fed_kd}. Serverless gossip distillation~\citep{khowaja2026splitgossip_gossip_distillation} has so far been
evaluated, but its general convergence behaviour is not established. Our setting
is the asynchronize serverless case, for which we provide the first convergence analysis in function space. The more comprehensive related work is in Appendix~\ref{sec:expanded-related-work}.

\section{Preliminaries}
\label{sec:preliminaries}

\subsection{Important Notations}
The full table can be found in Appendix~\ref{app: whole notation}. 
\begin{table}[H]
\centering
\label{tab:notation}
\begin{tabular}{@{}lp{0.72\columnwidth}@{}}
\toprule
\textbf{Symbol} & \textbf{Description} \\
\midrule
$N$, $\lambda_2(\mathcal{L})$ & $N$ peers on a graph with algebraic connectivity $\lambda_2(\mathcal{L})$. \\
$z_i^{(t)}$, $p_i^{(t)}$ & Logits $z_i^{(t)} \in \mathbb{R}^C$; distribution $p_i^{(t)} := \mathrm{softmax}(z_i^{(t)}/\tau)$ at temperature $\tau>0$. \\
$\mathcal{H}$ & Hilbert space $L^2(\mu; \mathbb{R}^C)$ with $\|p_i - p_j\|_\mu^2 = \int_{\mathcal{X}} \|p_i(x) - p_j(x)\|^2 d\mu(x)$. \\
$F(p)$, $F^\star$ & Global objective $F(p) := \frac{1}{N}\sum_i F_i(p)$; task minimiser $F^\star := \inf_{p} F(p)$. \\
$\Psi^f_t$ & Consensus error $\frac{1}{N}\sum_i \mathbb{E}\|p_i^{(t)} - \bar{p}^{(t)}\|_\mu^2$, with mean predictor $\bar{p}^{(t)}$. \\
$\rho_f$ & Geometric contraction rate at which KD gossip shrinks $\Psi^f_t$. \\
$A_t^f$ & Teacher lag (drift from distilling a lagging EMA teacher $p_j^{\mathrm{EMA}}$). \\
$B_f^2$ & Capacity floor (A3); oracle distance measuring how far the required optimum lies outside the weakest architecture's reach. \\
\bottomrule
\end{tabular}
\end{table}

\subsection{Core Assumptions}
\label{asp:realizability}
The standard assumptions listed in Appendix~\ref{app: detailed assumptions} follow
those used in D-SGD theory~\citep{zeng2025_dsgd_bounds,dandi2022mixing}. For the assumptions specific to P2P KD, they are listed below. For A4 and A10, though remaining hypotheses, they are well-grounded assumptions that are strongly defended both mathematically and from related work that established on that in Appendix~\ref{app:assumption-validity}. 

\label{asp:a3-capacity-floor}
\textbf{(A3) Representational capacity floor:}
A3 names the irreducible approximation error that appears when heterogeneous
peers are asked to approach the same task-optimal predictor. Let
$\mathcal P_i:=\{p_\theta:\theta\in\mathbb R^{d_i}\}\subseteq\mathcal H$ be
the reachable prediction class of peer $i$ on the reference measure, and let
$q^\star:=\arg\min_{p\in\mathcal S}F(p)$ be the constrained task minimizer
over $\mathcal S:=\{p=\operatorname{softmax}(z/\tau):\|z\|_\infty\le G_z\}$.
The oracle capacity floor is finite:
\[
\begin{aligned}
B_f^2&:=\frac1N\sum_{i=1}^N
\operatorname{dist}_\mu(q^\star,\mathcal P_i)^2,\\
\operatorname{dist}_\mu(q,\mathcal P_i)
&:=\inf_{p\in\mathcal P_i}\|q-p\|_\mu .
\end{aligned}
\]
This floor bridges the gap between a common output-space optimum and
peer-specific reachable function classes. The proof uses $B_f^2$ only as an
approximation-error parameter in the terminal neighbourhood. Appendix
\ref{app: detailed assumptions} gives the constrained-minimizer convention and
explains why $B_f$ is an oracle quantity.\\

\label{asp:a4-kernel-bridge}
\textbf{(A4) Parameter-to-logit realizability (kernel geometry):}
This states when a small parameter-space SGD step can be interpreted as a
controlled finite-support step in logit and prediction space. Let
$\mu_M=M^{-1}\sum_{r=1}^M\delta_{x_r}$  be the empirical reference measure and
let
\[
\mathcal H_z:=\{Z\in\mathbb R^{M\times C}:Z_{r,:}\mathbf 1=0\}
\]
with the $M^{-1}$ Frobenius inner product. For peer $i$, define centered
logits $Z_i(\theta_i)$, the logit Jacobian
$J_i^z:=D_{\theta_i}Z_i(\theta_i)$, the restricted logit kernel
$\Theta_i:=J_i^z(J_i^z)^*$, the tangent range
$T_i^z:=\operatorname{range}(J_i^z)$, and the orthogonal projector
$\Pi_i^z:=\Pi_{T_i^z}$. In the frozen finite-support kernel region used by
Theorem~\ref{thm:main_convergence},
\[
\kappa_-\Pi_i^z\preceq\Theta_i\preceq\kappa_+\Pi_i^z .
\]
For the prediction Jacobian $J_i^p:=D_{\theta_i}P_i$ and prediction kernel
$K_i:=J_i^p(J_i^p)^*$, with tangent range $T_i^p$ and projector $\Pi_i^p$,
there are constants $0<\underline\kappa_p\le\overline\kappa_p<\infty$ such
that
\[
\underline\kappa_p\Pi_i^p\preceq K_i\preceq
\overline\kappa_p\Pi_i^p .
\]
The parameter-to-function Taylor remainders and stochastic pushforward noise
satisfy the conditional moment bounds used in Appendix
\ref{app: detailed assumptions}. For the private-batch to reference-support
task pushforward, write
\[
\begin{aligned}
G_i^{\rm task}&=K_i g_i^t+\delta_i^t,\\
\frac1N\sum_i\|\delta_i^t\|_\mu^2
&\le
\chi_G\frac1N\sum_i\|g_i^t\|_\mu^2+\chi_BB_f^2+\chi_\zeta\zeta_f^2 .
\end{aligned}
\]
Here $g_i^t=\nabla_fF_i(p_i^t)$. This cross-kernel fidelity clause is part of
A4 and is not asserted by A6. A4 bridges actual parameter SGD and the
finite-support functional recursion used in the proof. It does not assert
full-space surjectivity and does not imply A10. The chain-rule identity,
Taylor remainder, and kernel-persistence details are deferred to Appendix
\ref{app: detailed assumptions} (the assumption itself is further justified in Appendix~\ref{app:assumption-validity}) and Lemma~\ref{lem:f-kd-gradient}.\\
\label{asp:a10-alignment}
\textbf{(A10) Restricted task/KD alignment:}
This limits how much of the task gradient and directed KD pull lies outside the
tangent directions that an individual peer can realize. For every event time
$t$, conditional on $\mathcal F_t$, let
$g_i^t:=\nabla_fF_i(p_i^t)$,
$\psi_t^f:=N^{-1}\sum_i\|p_i^t-\bar p^t\|_\mu^2$, and
$\psi_t^z:=N^{-1}\sum_i\|Z_i^t-\bar Z^t\|_{\mu_M}^2$. The task-gradient normal
component obeys
\[
\begin{aligned}
&\frac1N\sum_i\|(I-\Pi_i^{p,t})g_i^t\|_\mu^2\\
&\le \gamma_{\mathrm{task}}\frac1N\sum_i\|g_i^t\|_\mu^2
+\nu_{\mathrm{task},\Psi}\psi_t^f\\
&\quad+\nu_{\mathrm{task},B}B_f^2,\qquad
0\le\gamma_{\mathrm{task}}<1 .
\end{aligned}
\]
For the KD clause, define the centered finite-support pull
\[
\begin{aligned}
d_{ij}^z&:=A_{ij}(Z_i-Z_j)=P_i-P_j,\\
A_{ij}&:=\int_0^1D_Z\operatorname{softmax}
\left(\frac{Z_j+s(Z_i-Z_j)}{\tau}\right)\,ds .
\end{aligned}
\]
Both $Z_i-Z_j$ and $P_i-P_j$ are represented in the class-sum-zero
finite-support ambient space, so $\Pi_i^z$ acts on $d_{ij}^z$ in that common
coordinate system. The directed KD-pull normal components obey
\[
\begin{split}
\frac1{|\vec E|}\sum_{(i,j)\in\vec E}
\|(I-\Pi_i^{z})d_{ij}^z\|_{\mu_M}^2
\le{}&\gamma_{\mathrm{KD}}\psi_t^z+\nu_{\mathrm{KD},B}B_f^2,
\\
\vec E:=\{(i,j),(j,i):(i,j)\in E\}.
\end{split}
\]
A10 bridges common output-space KD pulls and peer-specific tangent spaces. The
small-gain conditions use only the transient coefficients
$\gamma_{\mathrm{task}}$ and $\gamma_{\mathrm{KD}}$; the coefficients multiplying
$B_f^2$ need only be finite. Appendix Propositions~\ref{prop:a10-graph}--%
\ref{prop:a10-counterexample} give sufficient cases and explain why the
relative task-gradient term is necessary (the assumption itself is further justified in Appendix~\ref{app:assumption-validity}).

\section{System Architecture}
\label{sec:architecture}
We use the gossip-distillation architecture shown in Figure~\ref{fig:Asynchronize P2P KD Architecture}.
\begin{figure}
    \centering
        \caption{Asynchronous P2P knowledge-distillation architecture. Each peer alternates
    between the student and teacher roles as gossip pairs form stochastically. No
    parameters are exchanged.}
    \includegraphics[width=1.03\linewidth]{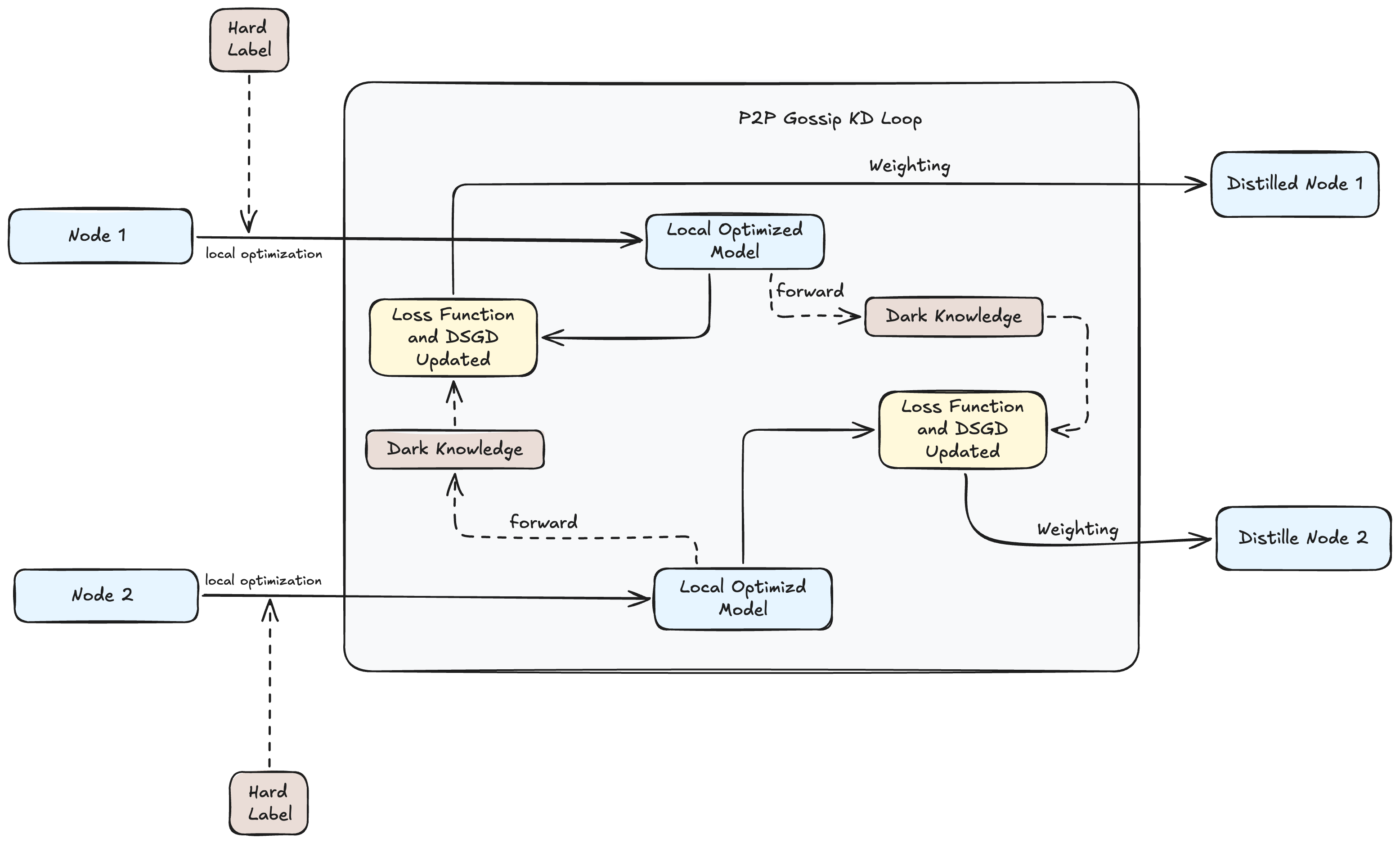}
    \label{fig:Asynchronize P2P KD Architecture}
\end{figure}

\subsection{Network Topology and Heterogeneous Peers}
The communication graph is an undirected connected graph $\mathcal{G}=(V,E)$ with
peer set $V$ and edge set $E$. There are $N=|V|$ peers. Peer $i$ has parameters
$\theta_i\in\mathbb{R}^{d_i}$, and the dimensions $d_i$ may differ. Parameter averaging
is then undefined, so peers exchange output distributions. Given an input $x$,
peer $i$ produces logits $z_i(x;\theta_i)\in\mathbb{R}^C$ and the temperature-scaled
distribution $p_i(x;\theta_i)\in\Delta^{C-1}$:
\begin{equation}
p_{i}(x)_c = \frac{\exp(z_{i}(x)_c / \tau)}{\sum_{k=1}^C \exp(z_{i}(x)_k / \tau)}
\end{equation}
where $\tau > 0$ is the distillation temperature governing the softness of the distribution.

\subsection{Random-Edge Asynchrony and EMA Teachers}
We use a random-edge asynchrony model with no global clock. At event $t$, edge
$e=(i,j)\in E$ is activated and both endpoints act once as student and teacher. The
model captures uncoordinated activation but does not model concurrent writes. A bounded
number of such races can be represented by increasing the lag bound $A$ below. Peer $j$
transmits soft probabilities on a shared reference batch. In the EMA variant, those
probabilities come from an exponential moving average of its parameters, denoted by
$p_j^{\mathrm{EMA}}$. The live-network variant sets the EMA parameters equal to the
current parameters.

\subsection{The P2P-KD Loss and Student Update}
When edge $(i,j)$ is selected, peer $i$ takes an SGD step on a loss that combines the
supervised task with a distillation pull toward peer $j$:
\begin{equation}
\begin{split}
\mathcal{L}_i(\theta_i) = (1-\alpha)\, \ell_{\mathrm{CE}}\big(p_i(x;\theta_i), y\big) +
\\
\alpha\, \tau^2\, D_{\mathrm{KL}}\big(p_j^{\mathrm{EMA}}(x) \parallel p_i(x;\theta_i)\big),
\end{split}
\end{equation}
Here $\ell_{\mathrm{CE}}$ is the cross-entropy with label $y$, $D_{\mathrm{KL}}$ is the
teacher-to-student divergence, and $\alpha\in(0,1)$ controls their relative weight.

\begin{algorithm}
\caption{Asynchronous random-edge P2P knowledge distillation}
\label{alg:p2pkd}
\begin{algorithmic}[1]
\REQUIRE peers $\{\theta_i\}_{i=1}^N$ (heterogeneous), graph $\mathcal G=(V,E)$, reference measure $\mu$, weight $\alpha$, temperature $\tau$, step $\eta$, EMA decay $\beta$
\STATE initialise each $\theta_i$ (local CE-only pretraining sets $\Psi^f_0$), then set $\theta_i^{\mathrm{EMA}}\!\leftarrow\!\theta_i$
\FOR{event $t=0,1,2,\dots$}
  \STATE sample an active edge $(i,j)\sim\mathrm{Unif}(E)$ (clock)
  \STATE draw a reference/training batch $x$ and form teacher labels $p_j^{T}\!\leftarrow\!\mathrm{softmax}(z_j^{\mathrm{EMA}}(x)/\tau)$ (live: $\theta_j^{\mathrm{EMA}}\!=\!\theta_j$)
  \STATE $\theta_i \leftarrow \theta_i-\eta\nabla_{\theta_i}\!\big[(1-\alpha)\ell_{\mathrm{CE}}(p_i(x),y)+\alpha\tau^2 D_{\mathrm{KL}}(p_j^{T}\|p_i(x))\big]$
  \STATE symmetrically update $\theta_j$ with teacher $p_i^{T}$ \COMMENT{both endpoints are student and teacher}
  \STATE $\theta_i^{\mathrm{EMA}}\!\leftarrow\!\beta\theta_i^{\mathrm{EMA}}+(1-\beta)\theta_i$ and update $j$ likewise
\ENDFOR
\end{algorithmic}
\end{algorithm}

\subsection{Network Objective}
Let $\mathcal D_i$ denote the private data distribution of peer $i$. Its local risk is
$$F_i(p):=\mathbb E_{(x,y)\sim\mathcal D_i}\,\ell_{\mathrm{CE}}(p(x),y),$$
and the network objective is the average local risk:
\begin{equation}
     \begin{split}
         F(p):=\frac1N &\sum_{i=1}^N F_i(p)\ ,\quad F^\star:=\inf_{p\in\mathcal H}F(p)>-\infty,\\
         &\Delta_F:=F(\bar p^{(0)})-F^\star
     \end{split}
 \end{equation}
The identity $\nabla_{\!f}F=\tfrac1N\sum_i\nabla_{\!f}F_i$ follows from this definition.

\section{Theoretical Convergence Analysis}
\label{sec:convergence}
We analyse the event process on the finite reference measure $\mu_M$ and use
$\mu:=\mu_M$ in all theorem norms below. This is
the space on which heterogeneous peers have a common state. A4 separates the
exact parameter-to-logit chain rule from the local frozen-kernel condition, and
A10 controls only the components that the peer-specific tangent spaces cannot
realize. The proof uses the actual preconditioned KD update.

\subsection{KD Geometry}
\begin{lemma}[KD logit gradient]
\label{lem:main-kd-gradient}
For $P_i=\operatorname{softmax}(Z_i/\tau)$ and a detached teacher $P_j^T$,
\begin{equation}
\nabla_{Z_i}\!\left[\tau^2D_{\rm KL}(P_j^T\|P_i)\right]
=\tau(P_i-P_j^T).
\label{eq:main-kd-gradient}
\end{equation}
Consequently, the first-order logit update is
$-\eta\alpha\tau\Theta_i(P_i-P_j^T)$.
\end{lemma}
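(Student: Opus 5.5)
The identity is checked pointwise in the reference inputs, with a chain-rule step through the softmax; the update formula then follows from the parameter-to-logit chain rule of A4. Fix a support point $x_r$ and write $z=Z_{i,r,:}$, $p=\operatorname{softmax}(z/\tau)$, $q=P^T_{j,r,:}$. Because the teacher is detached, $q$ does not depend on $z$, so
\[
\tau^2 D_{\rm KL}(q\|p)=\tau^2\sum_c q_c\log q_c-\tau^2\sum_c q_c\log p_c .
\]
The entropy term has zero gradient in $z$. For the cross term we use $\log p_c = z_c/\tau-\log\sum_k e^{z_k/\tau}$, which gives
\[
\partial_{z_k}\log p_c=\tau^{-1}(\delta_{ck}-p_k).
\]
Summing against $q_c$ and using $\sum_c q_c=1$ yields
\[
\nabla_z\bigl[-\tau^2\textstyle\sum_c q_c\log p_c\bigr]=-\tau^2\cdot\tau^{-1}(q-p)=\tau(p-q).
\]
Stacking over $r=1,\dots,M$ gives \eqref{eq:main-kd-gradient} row by row. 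The $M^{-1}$ weighting of $\mu_M$ rescales both sides identically if the loss is the $\mu_M$-average, so it is harmless. The resulting gradient lies in $\mathcal H_z$, since both $P_i$ and $P_j^T$ have row sums equal to $1$, so each row of $P_i-P_j^T$ sums to zero. This is the point that matters: the class-sum-zero constraint is respected, so the gradient computed in $\mathbb R^{M\times C}$ is the gradient in $\mathcal H_z$ with no extra projection.

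For the consequence, apply the chain rule. Since $\theta_i$ enters the KD term only through $Z_i(\theta_i)$, we have
\[
\nabla_{\theta_i}\bigl[\alpha\tau^2D_{\rm KL}\bigr]=(J_i^z)^*\,\alpha\tau(P_i-P_j^T),
\]
where $(J_i^z)^*$ is the adjoint with respect to the $M^{-1}$ Frobenius inner product of A4. The parameter step is $\Delta\theta_i=-\eta(J_i^z)^*\alpha\tau(P_i-P_j^T)$. Its first-order logit effect is $J_i^z\Delta\theta_i=-\eta\alpha\tau J_i^z(J_i^z)^*(P_i-P_j^T)=-\eta\alpha\tau\Theta_i(P_i-P_j^T)$. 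The second-order Taylor remainder is exactly what A4's remainder bounds absorb, so it is not part of this lemma.

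The main point requiring care is bookkeeping rather than difficulty. The adjoint must be taken in the same inner product as the one defining $\Theta_i$, which is the $M^{-1}$-weighted Frobenius product on $\mathcal H_z$. Centering of the logits must also be consistent: $Z_i$ are the centered logits, and softmax is invariant to adding a constant per row. So the derivative computed on uncentered logits, restricted to $\mathcal H_z$, agrees with the one on centered logits, because the gradient already lies in $\mathcal H_z$. I would verify this invariance explicitly, then conclude.
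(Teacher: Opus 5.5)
Your proposal is correct and follows essentially the same route as the paper. Both compute the pointwise log-softmax derivative with the detached teacher; the paper routes it through $s=Z_i/\tau$ and $D_{Z_i}s=\tau^{-1}I$, and your centering remark is a harmless extra check. Both then apply the adjoint chain rule in the $\mu_M$ inner product to get $J_i^z\Delta\theta_i=-\eta\alpha\tau\Theta_i(P_i-P_j^T)$, and the paper's appendix version additionally proves the integral Taylor-remainder bound $\|r_i^z\|\le\tfrac{L_J}{2}\|\Delta\theta_i\|^2$, which you rightly treat as outside the first-order claim.
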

See Lemma~\ref{lem:f-kd-gradient} for the chain-rule and Taylor-remainder
proof. The temperature factor in Equation~\ref{eq:main-kd-gradient} is exact.

\begin{lemma}[Softmax geometry]
\label{lem:main-kl-geometry}
Under A1, centered logits and probabilities satisfy
\begin{equation}
\frac{p_{\min}}{\tau}\|Z-Z'\|_{\mu_M}
\le \|P-P'\|_{\mu_M}
\le\frac1{2\tau}\|Z-Z'\|_{\mu_M},
\label{eq:main-softmax-equivalence}
\end{equation}
and $\tau^2D_{\rm KL}(P'\|P)$ is bounded above and below by fixed multiples
of $\|Z-Z'\|_{\mu_M}^2$.
\end{lemma}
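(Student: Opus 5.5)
The plan is to work pointwise at each support point $x_r$ and then average over $\mu_M$. Assumption A1, stated in the appendix, is read here as the bounded-logit condition $\|z\|_\infty\le G_z$. Under it, every probability is bounded below by some $p_{\min}>0$; for example $p_{\min}\ge e^{-2G_z/\tau}/C$.

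For two centered logit vectors $z,z'\in\R^C$ with $z^\top\mathbf 1=0$, use the mean-value representation already introduced in A10:
\[
p-p'=A\,(z-z'),\qquad A=\int_0^1 D_z\operatorname{softmax}\bigl((z'+s(z-z'))/\tau\bigr)\,ds .
\]
At a point $q=\operatorname{softmax}(w/\tau)$ the softmax Jacobian is $\tau^{-1}(\operatorname{diag}(q)-qq^\top)$. This is symmetric positive semidefinite, has kernel $\operatorname{span}(\mathbf 1)$, and maps $\mathbf 1^\perp$ into $\mathbf 1^\perp$. The segment between $z'$ and $z$ stays in the convex set $\{\|w\|_\infty\le G_z\}$, so every $q$ along it has entries at least $p_{\min}$.

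\textbf{Upper bound.} The largest eigenvalue of $\operatorname{diag}(q)-qq^\top$ is at most $\tfrac12$. One way to see this: for $v\perp\mathbf 1$, $v^\top(\operatorname{diag}(q)-qq^\top)v=\operatorname{Var}_q(v)$, and Popoviciu's inequality gives $\operatorname{Var}_q(v)\le \tfrac14(\max v-\min v)^2\le\tfrac12\|v\|^2$. Averaging over $s$ gives $\|A\|_{\rm op}\le 1/(2\tau)$. Squaring and averaging over $r$ yields the right-hand inequality.

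\textbf{Lower bound.} This is the step requiring care. On $\mathbf 1^\perp$,
\[
v^\top(\operatorname{diag}(q)-qq^\top)v=\sum_c q_c\,(v_c-\bar v_q)^2,\qquad \bar v_q:=q^\top v .
\]
Each term satisfies $q_c(v_c-\bar v_q)^2\ge p_{\min}(v_c-\bar v_q)^2$. Moreover $\sum_c(v_c-\bar v_q)^2\ge\|v\|^2$, because $v\perp\mathbf 1$ makes $v$ the minimizer of $\|v-a\mathbf 1\|$ over $a$. Hence the averaged operator $A$, restricted to $\mathbf 1^\perp$, is symmetric with smallest eigenvalue at least $p_{\min}/\tau$. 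A symmetric operator with smallest eigenvalue $\lambda$ satisfies $\|Av\|\ge\lambda\|v\|$. This gives $\|p-p'\|\ge(p_{\min}/\tau)\|z-z'\|$ pointwise, and averaging over $\mu_M$ gives the left inequality.

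\textbf{KL equivalence.} Set $f(s)=\tau^2 D_{\rm KL}\bigl(P'\,\|\,\operatorname{softmax}((Z'+s(Z-Z'))/\tau)\bigr)$. Then $f(0)=0$, and by Lemma~\ref{lem:main-kd-gradient} $f'(0)=0$ since the gradient is $\tau(P-P')$, which vanishes at $P=P'$. The Hessian in $Z$ is $\tau\,D_Z\operatorname{softmax}=\operatorname{diag}(q)-qq^\top$, with eigenvalues in $[p_{\min},\tfrac12]$ on the centered subspace. Taylor's formula with integral remainder then gives
\[
\tfrac{p_{\min}}2\|Z-Z'\|_{\mu_M}^2\le \tau^2D_{\rm KL}(P'\|P)\le \tfrac14\|Z-Z'\|_{\mu_M}^2 .
\]

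The main obstacle is the lower bound. We have to confirm that $A$ is a genuine average of symmetric Jacobians that stays inside the bounded-logit region, which convexity of the $\ell_\infty$ ball provides. We also have to confirm that the centering convention places $z-z'$ in $\mathbf 1^\perp$, which holds by the definition of $\mathcal H_z$.
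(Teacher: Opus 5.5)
Your proposal is correct and follows essentially the paper's route (Lemmas~\ref{lem:f-softmax-spectrum} and~\ref{lem:f-consensus-equivalence}). You write $P-P'$ as the averaged softmax Jacobian applied to $Z-Z'$, with that Jacobian's spectrum in $[p_{\min}/\tau,1/(2\tau)]$ on $\mathbf 1^\perp$, and you get the KL bounds from an integral-remainder Taylor expansion along the logit segment with Hessian $\operatorname{diag}(q)-qq^\top$, obtaining the same constants $p_{\min}/2$ and $1/4$ (the paper phrases this as a Bregman divergence of $\phi(Z)=\tau^2\log\sum_c e^{Z_c/\tau}$). The differences are only cosmetic: you bound the top eigenvalue by Popoviciu's inequality where the paper uses Gershgorin, and you derive the lower bound from the smallest eigenvalue of the restricted symmetric operator where the paper applies Cauchy--Schwarz to $h^\top A_h h$.
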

The constants and the induced equivalence $\Psi_t^f\asymp\Psi_t^z$ are proved
in Lemma~\ref{lem:f-consensus-equivalence}.

\subsection{Global KD Contraction}
For each frozen range-restricted logit kernel define
\begin{equation}
\begin{split}
W_i&:=\Theta_i^\dagger+\gamma_0(I-\Pi_i^z),\\
\mathcal V_W(Z)&:=\min_{c\in\mathcal H_z}
\frac1N\sum_i\|Z_i-c\|_{W_i}^2,
\end{split}
\label{eq:main-quotient}
\end{equation}
where $\gamma_0>0$. The metric is positive definite and uniformly equivalent
to arithmetic logit disagreement. It also satisfies $W_i\Theta_i=\Pi_i^z$.
We write
$\Psi_t^z:=N^{-1}\sum_i\|Z_i^t-\bar Z^t\|_{\mu_M}^2$ for the unweighted
logit disagreement.

\begin{theorem}[Random-edge KD contraction with defect]
\label{thm:main-contraction}
Let $G_t=N^{-1}\sum_i\|g_i^t\|_\mu^2$. For a connected graph with no
isolated peer, under A1, A4, and the KD
clause of A10, there are constants $r_0,C_G,C_B,C_\zeta,C_\sigma,C_A>0$,
independent of $\eta$ and $T$, such that
\begin{equation}
\begin{split}
\mathbb E_t\mathcal V_W(Z^{t+1})
\le{}&(1-r_0\eta)\mathcal V_W(Z^t)+C_G\eta G_t
+C_B\eta B_f^2\\+&C_\zeta\eta\zeta_f^2
+C_\sigma\eta^2+C_A\eta A_t^f,
\end{split}
\label{eq:main-consensus-recursion}
\end{equation}
with coefficients:
\begin{align*}
C_G &= R_G K_{\rm pre} ;\quad C_B = \frac{4 M_W d_{\max} \nu_{\rm KD, B} \alpha \tau}{N(p_{\min}/\tau)\lambda_2 m_W} + R_B K_{\rm pre} \\
C_\zeta &= R_\zeta K_{\rm pre}; \quad C_\eta = R_0 K_{\rm pre} + \Sigma_0;\quad C_A = R_A \alpha \tau K_{\rm pre}
\end{align*}
provided $\eta\le\eta_{\rm KD}$ and $\gamma_{\rm KD}$ is below the explicit
absorption threshold in Lemma~\ref{lem:f-global-contraction}. Moreover,
\begin{equation}
r_0=\frac{\alpha p_{\min}\lambda_2(\mathcal L)}{2M_W|E|},
\qquad
m_WI\preceq W_i\preceq M_WI.
\label{eq:main-rate}
\end{equation}
\end{theorem}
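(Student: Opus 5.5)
The approach is a one-step Lyapunov analysis of the quotient potential $\mathcal V_W$ under one random-edge event, carried out entirely in the centered logit space $\mathcal H_z$ with the kernels frozen as in A4.

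\textbf{Step 1: expand the update.} Write the logit update of an endpoint $i$ of the active edge $(i,j)$ via A4 and Lemma~\ref{lem:main-kd-gradient}:
\[
Z_i^{t+1}=Z_i^t-\eta\alpha\tau\,\Theta_i(P_i-P_j^T)-\eta(1-\alpha)\,\Theta_i\tilde g_i+\eta\,\xi_i+\eta^2 r_i .
\]
Here $\tilde g_i$ is the logit-space task gradient, $\xi_i$ is the martingale pushforward noise, and $r_i$ is the Taylor remainder. The teacher is split as $P_j^T=P_j+(P_j^{\rm EMA}-P_j)$, so the lag produces the $A_t^f$ term.

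\textbf{Step 2: bound the potential by a fixed comparison point.} Since $\mathcal V_W$ is a minimum over $c$, use the current minimizer $c_t$ of $\mathcal V_W(Z^t)$. This gives
\[
\mathcal V_W(Z^{t+1})\le \frac1N\sum_i\|Z_i^{t+1}-c_t\|_{W_i}^2 ,
\]
which reduces the problem to expanding a quadratic with fixed center.

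\textbf{Step 3: extract the KD cross term.} The first-order KD cross term is $-2\eta\alpha\tau\langle Z_i-c_t,\,W_i\Theta_i(P_i-P_j)\rangle$, and $W_i\Theta_i=\Pi_i^z$ turns it into $-2\eta\alpha\tau\langle Z_i-c_t,\,\Pi_i^z d_{ij}^z\rangle$. Split $\Pi_i^z=I-(I-\Pi_i^z)$.

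\emph{Full-space part.} Average over the uniformly random edge and both orientations. Using $d_{ij}^z=A_{ij}(Z_i-Z_j)$ with $A_{ij}\succeq (p_{\min}/\tau)\,I$ on $\mathcal H_z$ (Lemma~\ref{lem:main-kl-geometry}, A1), the sum telescopes to
\[
-\frac{2\eta\alpha p_{\min}}{|E|}\sum_{(i,j)\in E}\|Z_i-Z_j\|^2 .
\]
The Laplacian spectral gap then bounds this by
\[
-\frac{2\eta\alpha p_{\min}\lambda_2}{|E|}\,N\Psi_t^z .
\]
Finally, $m_W\Psi^z\le \mathcal V_W\le M_W\Psi^z$ converts it to $-\,2r_0\eta\,\mathcal V_W$ with the constant $r_0$ of \eqref{eq:main-rate}. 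The factor $2$ leaves room for absorption.

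\emph{Normal part.} Handle $(I-\Pi_i^z)d_{ij}^z$ by Young's inequality and the KD clause of A10. This yields a $\gamma_{\rm KD}\psi_t^z$ term, which is absorbed into half of the contraction provided $\gamma_{\rm KD}$ is below the threshold, and a $\nu_{\rm KD,B}B_f^2$ term. The $B_f^2$ term has a Young weight $\propto M_W d_{\max}/(N(p_{\min}/\tau)\lambda_2 m_W)$, which matches $C_B$.

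\textbf{Step 4: bound the remaining cross terms.} Each is handled by Young's inequality with weight $\epsilon\propto r_0$, costing $O(\eta)\times$ its squared size.

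\emph{Task gradient.} Bound this through the A4 cross-kernel clause, giving $\chi_G G_t+\chi_B B_f^2+\chi_\zeta\zeta_f^2$. This produces $C_G$, $C_\zeta$, and part of $C_B$; the factors $R_\cdot K_{\rm pre}$ collect $\kappa_+$, $M_W$, and the Young weights.

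\emph{Teacher lag.} This gives $C_A\eta A_t^f$, using the Lipschitz bound $1/(2\tau)$ of Lemma~\ref{lem:main-kl-geometry}.

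\emph{Noise.} The noise has zero conditional mean, so its cross term vanishes. Its square, the remainders, and all quadratic $\eta^2\|\cdot\|^2$ terms give $C_\sigma\eta^2$; bounded logits (A1) make these uniform. The quadratic KD term is of order $\eta^2\Psi^z$ and is absorbed for $\eta\le\eta_{\rm KD}$.

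\textbf{Main obstacle.} I expect the hardest part to be Step 3. The averaged KD drift is not a symmetric Laplacian in the $W$-metric: $A_{ij}\ne A_{ji}$, and the projectors differ across peers. So the telescoping must be done carefully: use $\langle Z_i-Z_j, A_{ij}(Z_i-Z_j)\rangle$ on symmetrized pairs, and show that the cross terms involving $c_t$ vanish or are controlled. I would first verify that summing both orientations gives
\[
\langle Z_i-c,\,A_{ij}(Z_i-Z_j)\rangle+\langle Z_j-c,\,A_{ji}(Z_j-Z_i)\rangle ,
\]
and that $A_{ji}=A_{ij}$ by the symmetric integral form of the mean-value softmax Jacobian. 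Given that, the expression collapses exactly to $\langle Z_i-Z_j,\,A_{ij}(Z_i-Z_j)\rangle$, and the remaining steps are bookkeeping.
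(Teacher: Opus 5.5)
Your proposal is correct and follows essentially the same route as the paper's proof (Lemmas~\ref{lem:f-global-contraction} and~\ref{lem:f-consensus-recursion}). Both evaluate the new potential at the old weighted center and use $W_i\Theta_i=\Pi_i^z$ to strip the metric from the first-order KD term. Both split off the A10 normal defect by Young's inequality, telescope via $Q_i-Q_j=Z_i-Z_j$ with the $p_{\min}/\tau$ softmax bound and $\lambda_2$, and charge the task, lag and remainder perturbations through Young's inequality with an $\eta$-proportional weight, which is where $K_{\rm pre}\propto 1/(a_W\alpha\tau)$ comes from.

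Two bookkeeping corrections. First, your own display actually gives a full-space drift of $-4r_0\eta\mathcal V_W$ (the paper's $a_Ww$), not $-2r_0\eta\mathcal V_W$. The paper spends $a_Ww/2$ on the normal defect, $a_Ww/8$ on the quadratic KD term and $a_Ww/8$ on the perturbation cross terms, which leaves exactly $r_0$. Second, your check that $A_{ji}=A_{ij}$ is unnecessary, since $d_{ji}=P_j-P_i=-d_{ij}$ holds directly.
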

The term $C_G\eta G_t$ is critical. A local CE step is a deterministic
peer-specific force and enters consensus at first order. Under non-IID data it
cannot be relabelled as $O(\eta^2)$. The term $C_\zeta\zeta_f^2$ in the final
neighbourhood records this persistent task heterogeneity.

\subsection{Task Descent and Staleness}
Uniform edge sampling activates peer $i$ with probability
$q_i=d_i/|E|$. Define the activation-corrected task potential
\begin{equation}
\mathcal R_t:=\frac1N\sum_iq_i^{-1}
\big(F_i(p_i^t)-F_i^\star\big).
\label{eq:main-task-potential}
\end{equation}
The inverse activation weights make the conditional edge expectation a
uniform peer average even when the graph is not regular.

\begin{lemma}[Functional task descent]
\label{lem:main-descent}
Under A2, A4, A6, and the task clause of A10, there are constants
$a_F,b_F,D_B,D_\zeta,D_\sigma,D_A>0$ such that
\begin{equation}
\begin{split}
\mathbb E_t\mathcal R_{t+1}
\le{}&\mathcal R_t-a_F\eta G_t+b_F\eta\mathcal V_W(Z^t)
+D_B\eta B_f^2\\&+D_\zeta\eta\zeta_f^2+D_\sigma\eta^2+D_A\eta A_t^f,
\end{split}
\label{eq:main-task-recursion}
\end{equation}

where $a_F>0$ when the task margin, including the factor $(1-\alpha)$ and
the cross-kernel residual constants from A4, is positive; the theorem assumes
$0\le\alpha<1$.
Furthermore,
\begin{equation}
\|\nabla_fF(\bar p^t)\|_\mu^2
\le 2G_t+2L_f^2\Psi_t^f.
\label{eq:main-global-gradient}
\end{equation}
\end{lemma}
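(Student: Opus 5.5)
The proof splits into the descent recursion \eqref{eq:main-task-recursion} and the gradient inequality \eqref{eq:main-global-gradient}. The second part is elementary and I would dispose of it first. Since $\nabla_fF=\frac1N\sum_i\nabla_fF_i$, write
\[
\nabla_fF(\bar p^t)=\frac1N\sum_i g_i^t+\frac1N\sum_i\bigl(\nabla_fF_i(\bar p^t)-\nabla_fF_i(p_i^t)\bigr).
\]
Apply $\|a+b\|^2\le2\|a\|^2+2\|b\|^2$, then Jensen on both averages. Using the $L_f$-smoothness of each $F_i$ from A2, this gives $2G_t+2L_f^2\psi_t^f$, and taking expectations yields the $\Psi_t^f$ form.

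For the recursion, condition on $\mathcal F_t$ and average over the uniformly drawn edge $(i,j)$. Only the two endpoints move, so the change in $\mathcal R_t$ is
\[
\frac1N\Bigl(q_i^{-1}\bigl[F_i(p_i^{t+1})-F_i(p_i^t)\bigr]+q_j^{-1}\bigl[F_j(p_j^{t+1})-F_j(p_j^t)\bigr]\Bigr).
\]
Peer $i$ is activated with probability $q_i$, so the $q_i^{-1}$ weights turn the edge expectation into a uniform peer average $\frac1N\sum_i\mathbb E[\cdot\mid i\text{ active}]$. For each active peer I would use the $L_f$-smoothness descent lemma in $\mathcal H$:
\[
F_i(p_i^+)\le F_i(p_i)+\langle g_i,\Delta p_i\rangle_\mu+\tfrac{L_f}2\|\Delta p_i\|_\mu^2 .
\]
Here A4 supplies the functional step
\[
\Delta p_i=-\eta\bigl[(1-\alpha)(K_ig_i+\delta_i)+\alpha\tau\,(\text{pushforward of }P_i-P_j^T)\bigr]+\eta^2(\text{Taylor})+\eta(\text{noise}),
\]
with Lemma~\ref{lem:main-kd-gradient} identifying the KD part.

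The terms are handled as follows.

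\emph{Task term.} Write $\langle g_i,K_ig_i\rangle\ge\underline\kappa_p\|\Pi_i^pg_i\|^2=\underline\kappa_p(\|g_i\|^2-\|(I-\Pi_i^p)g_i\|^2)$. Averaged over $i$, the task clause of A10 gives a lower bound of $\underline\kappa_p(1-\gamma_{\rm task})G_t$, minus $\nu_{{\rm task},\Psi}\psi^f$ and $\nu_{{\rm task},B}B_f^2$ terms.

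\emph{Cross-kernel residual.} The term $\langle g_i,\delta_i\rangle$ is bounded by Young's inequality with parameter $\epsilon$ and then by the A4 fidelity clause. This leaves $a_F=(1-\alpha)[\underline\kappa_p(1-\gamma_{\rm task})-\epsilon-\chi_G/\epsilon]$, which is positive exactly under the stated margin condition, together with $B_f^2$ and $\zeta_f^2$ contributions.

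\emph{Noise and curvature.} The pushforward noise has conditional mean zero, so it only enters through $\frac{L_f}2\mathbb E\|\Delta p\|^2$. That term, together with the Taylor remainders, is $O(\eta^2)$ by the A4 moment bounds, and this gives $D_\sigma$. Any $\eta^2G_t$ part is absorbed into $a_F$ for $\eta$ small.

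\emph{Staleness.} Replacing $P_j^{\rm EMA}$ by the live $P_j$ costs a cross term, bounded by Young's inequality, of order $\eta A_t^f$ plus a small multiple of $\eta G_t$. This gives $D_A$.

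The main obstacle is the KD cross term $-\eta\alpha\tau\langle g_i,J_i^p(J_i^z)^*(P_i-P_j)\rangle$. It is first order and has no sign, since distillation can push against the task gradient. I would bound it by Cauchy–Schwarz plus Young: $\epsilon\|g_i\|^2+\epsilon^{-1}C\|P_i-P_j\|^2$, using $\overline\kappa_p$ and $\kappa_+$ for the operator norms. Averaged over edges, $\|P_i-P_j\|^2\le2\|P_i-\bar P\|^2+2\|P_j-\bar P\|^2$, and the upper bound in Lemma~\ref{lem:main-kl-geometry} controls this by $\psi^z/(2\tau^2)$. The remaining point is to compare $\psi^z$ with $\mathcal V_W$ at the level of disagreement, not just pointwise. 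The quotient minimum over $c$ is attained near a $W$-weighted mean, and I would use $\frac1N\sum_i\|Z_i-\bar Z\|^2\le\frac1N\sum_i\|Z_i-c\|^2\le m_W^{-1}\mathcal V_W$. This converts all disagreement terms, including $\nu_{{\rm task},\Psi}\psi^f$, into $b_F\eta\mathcal V_W(Z^t)$ and completes \eqref{eq:main-task-recursion}.
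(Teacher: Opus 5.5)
Your proposal is correct and follows essentially the same route as the paper's proof. The shared ingredients are: $L_f$-smoothness at the two active endpoints with the exact $q_i^{-1}$ activation correction; the restricted-kernel lower bound on $\langle g_i,K_ig_i\rangle$ combined with the A10 task clause and a Young bound on the A4 fidelity residual; Young on the KD and teacher-lag cross terms, with disagreement converted through $\psi^f\le C_{fz}\psi^z\le (C_{fz}/m_W)\mathcal V_W$; an $O(\eta^2)$ smoothness square absorbed by a step-size ceiling; and the identical Jensen argument for the gradient transfer. The only differences are cosmetic: you bound the cross operator by $\tau\sqrt{\overline\kappa_p\kappa_+}$, which makes the paper's $H_{\max}$ explicit, and you control edge disagreement by degree counting rather than by $\lambda_{\max}(\mathcal L)$.
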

Lemma~\ref{lem:f-descent} proves these statements directly through
$\langle g_i,K_i g_i\rangle$; no $\eta$-independent conditioning error is
absorbed into an $O(\eta)$ remainder.

\begin{lemma}[Simplex-bounded staleness]
\label{lem:main-staleness}
If the EMA teacher is at most $A$ versions behind, then
\begin{equation}
A_t^f\le \bar A^f
=c_{\rm step}\left(\min\left\{A,\frac1{1-\beta}\right\}\right)^2\eta^2.
\label{eq:main-staleness}
\end{equation}
\end{lemma}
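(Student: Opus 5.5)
We bound the teacher lag directly in prediction space. We first write $A_t^f$ as the squared $\mu$-distance between the transmitted EMA prediction and the live prediction of the teacher, averaged over directed edges:
\[
A_t^f=\frac{1}{|\vec E|}\sum_{(i,j)\in\vec E}\mathbb E\|p_j^{\mathrm{EMA},t}-p_j^t\|_\mu^2 .
\]
If the paper's definition carries a different normalisation, only $c_{\rm step}$ changes. The argument has two ingredients: a per-event displacement bound for each peer's prediction, and a count of how many such displacements separate the EMA teacher from the live one.

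\textbf{Step 1: per-event displacement.} When peer $j$ is updated, its logits move by the chain rule and Lemma~\ref{lem:main-kd-gradient}:
\[
\Delta Z_j=-\eta J_j^z\nabla_\theta\mathcal L_j+(\text{Taylor remainder}).
\]
By A4, $\Theta_j\preceq\kappa_+\Pi_j^z$. The logit gradient $(1-\alpha)\tau(P_j-e_y)+\alpha\tau(P_j-P_i^T)$ is a difference of simplex points, so its norm is at most $2\tau$. This is the simplex-boundedness in the lemma's name. The stochastic pushforward and the Taylor remainder are controlled by the A4 conditional moment bounds. Together these give $\mathbb E\|\Delta Z_j\|_{\mu_M}^2\le c\,\eta^2$. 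The upper Lipschitz bound in Lemma~\ref{lem:main-kl-geometry} then transfers this to predictions: $\mathbb E\|\Delta P_j\|_\mu^2\le c\,\eta^2/(4\tau^2)$.

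\textbf{Step 2: the lag bound $A$.} If the teacher is at most $A$ versions behind, then $p_j^{\mathrm{EMA}}-p_j^t$ is a telescoping sum of at most $A$ single-event displacements. By Cauchy--Schwarz, $\|\sum_{k=1}^{A}u_k\|^2\le A\sum_k\|u_k\|^2$. This yields the bound $A^2\,c\,\eta^2$.

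\textbf{Step 3: the EMA bound $1/(1-\beta)$.} The EMA parameter satisfies
\[
\theta^{\mathrm{EMA}}-\theta^t=-\sum_{k\ge0}\beta^{k+1}\bigl(\theta^{t-k}-\theta^{t-k-1}\bigr).
\]
Working in parameter space first, the weighted Cauchy--Schwarz inequality gives
\[
\Bigl\|\sum_k\beta^{k+1}u_k\Bigr\|^2\le\Bigl(\sum_k\beta^{k}\Bigr)\sum_k\beta^{k}\|u_k\|^2\le(1-\beta)^{-2}\sup_k\|u_k\|^2 .
\]
We then push this through the prediction Jacobian, using $K_j\preceq\overline\kappa_p\Pi_j^p$ together with the Taylor remainder in the frozen-kernel region, to obtain $(1-\beta)^{-2}c\,\eta^2$. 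Both steps 2 and 3 bound the same quantity, so we may take the smaller of the two, which gives the $\min\{A,1/(1-\beta)\}^2$ factor.

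The main obstacle is step 3. The EMA lives in parameter space, while the per-step bound of step 1 lives in logit space. Transferring it requires a bound on the parameter step, which by itself is not controlled by simplex boundedness. I would first try to use the frozen-kernel region of A4, where $J_j^p$ is nearly constant, so that $P(\theta^{\mathrm{EMA}})-P(\theta^t)\approx J_j^p(\theta^{\mathrm{EMA}}-\theta^t)$ is exactly the EMA-weighted sum of past prediction increments up to a second-order remainder. The step 1 bound then applies term by term. Absorbing the remainder into $c_{\rm step}$ requires that $\eta$ and the EMA horizon keep the iterates inside the kernel region; I would state this as part of the standing $\eta\le\eta_{\rm KD}$ condition.
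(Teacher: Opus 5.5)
Your argument is sound in outline, but it takes a different route from the paper.

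\textbf{What the paper does.} The paper's proof (Lemma~\ref{lem:f-staleness}) does not derive the per-event bound. It takes $\mathbb E\|p_i^{t+1}-p_i^t\|_\mu^2\le c_{\rm step}\eta^2$ as its hypothesis, so your Step~1 is extra work that supplies it. It then writes the teacher directly as a convex combination of past \emph{predictions}, $p_i^{\mathrm{EMA},t}=\sum_{s\ge1}\omega_s p_i^{t-s}$, with first lag moment $\sum_s s\omega_s\le A_{\rm eff}=\min\{A,(1-\beta)^{-1}\}$ taken from A8. For the ordinary EMA, $\omega_s=(1-\beta)\beta^{s-1}$ gives exactly $(1-\beta)^{-1}$. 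It telescopes each $p_i^t-p_i^{t-s}$ and applies Minkowski in $L^2$ twice, giving $(\mathbb E\|p_i^t-p_i^{\mathrm{EMA},t}\|_\mu^2)^{1/2}\le\sqrt{c_{\rm step}}\,\eta\sum_s s\omega_s$. The minimum therefore enters through this single moment, not as the smaller of two separate bounds. Your Cauchy--Schwarz steps are equivalent to its Minkowski steps.

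\textbf{What your route buys.} It is faithful to Algorithm~\ref{alg:p2pkd}, whose EMA acts on parameters. The paper's representation is exact for a live or stale-copy teacher but only approximate for $P(\theta^{\mathrm{EMA}})$ of a nonlinear network. Your Step~3 is what closes that gap.

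\textbf{The cost is milder than you expect.} A4 already controls the parameter step: $\mathbb E_t\|\Delta\theta_i\|^4\le G_{\theta,4}^4\eta^4$ is used in Lemma~\ref{lem:f-consensus-recursion}, and the experiments clip gradients. Also, $\theta^{\mathrm{EMA}}$ lies in the convex hull of past iterates. So on a convex region where $\|J_j^p\|^2=\|K_j\|\le\overline\kappa_p$, the mean-value inequality gives $\mathbb E\|P(\theta^{\mathrm{EMA}})-P(\theta^t)\|_\mu^2\le\overline\kappa_p\,\mathbb E\|\theta^{\mathrm{EMA}}-\theta^t\|^2$ directly. You need no term-by-term Taylor remainder and no extra condition tying $\eta$ to $1-\beta$.

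\textbf{Two repairs before you take the minimum.} First, Steps~2 and~3 must bound the same teacher. Your Step~2 telescopes a single stale copy in prediction space, while your Step~3 unrolls an infinite EMA, and a finite-memory parameter EMA is neither. Run both in parameter space with weights $\omega_s$ supported on $s\le A$, and bound $\sum_s s\omega_s$ by $\min\{A,(1-\beta)^{-1}\}$ in one computation. For truncated geometric weights the conditional mean is at most both quantities. Second, the two constants differ ($c/(4\tau^2)$ versus $\overline\kappa_pG_{\theta,4}^2$), so $c_{\rm step}$ must be their maximum.

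The normalisation of $A_t^f$ is immaterial: the paper uses the per-peer average $N^{-1}\sum_i$, and your bound is uniform over peers.
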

The proof is Lemma~\ref{lem:f-staleness}. It uses the simplex diameter and does
not feed consensus back into the lag bound.

\subsection{Coupled Bound}
\begin{lemma}[Strict coupled descent]
\label{lem:main-lyapunov}
Suppose there exists $\lambda>0$ such that
\begin{equation}
a_F-\lambda C_G>0,\qquad \lambda r_0-b_F>0.
\label{eq:main-small-gain}
\end{equation}
For $\mathcal L_t:=\mathcal R_t+\lambda\mathcal V_W(Z^t)$, Equations
\ref{eq:main-consensus-recursion} and \ref{eq:main-task-recursion} imply
\begin{equation}
\mathbb E_t\mathcal L_{t+1}
\le \mathcal L_t-c_G\eta G_t-c_V\eta\mathcal V_W(Z^t)
+C\eta(B_f^2+\zeta_f^2)+C'\eta^2.
\label{eq:main-lyapunov}
\end{equation}
for constants $c_G,c_V>0$. The staleness contribution is absorbed into
$C'\eta^2$ by Equation~\ref{eq:main-staleness}.
\end{lemma}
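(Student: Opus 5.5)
The plan is to add the task recursion \eqref{eq:main-task-recursion} to $\lambda$ times the consensus recursion \eqref{eq:main-consensus-recursion}, taking conditional expectations $\mathbb E_t$ on both sides. Linearity of $\mathbb E_t$ gives
\begin{equation*}
\begin{split}
\mathbb E_t\mathcal L_{t+1}
\le{}&\mathcal R_t+\lambda(1-r_0\eta)\mathcal V_W(Z^t)-(a_F-\lambda C_G)\eta G_t\\
&+b_F\eta\mathcal V_W(Z^t)+(D_B+\lambda C_B)\eta B_f^2+(D_\zeta+\lambda C_\zeta)\eta\zeta_f^2\\
&+(D_\sigma+\lambda C_\sigma)\eta^2+(D_A+\lambda C_A)\eta A_t^f .
\end{split}
\end{equation*}
Grouping the $\mathcal V_W$ terms yields $\mathcal L_t-(\lambda r_0-b_F)\eta\mathcal V_W(Z^t)$. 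We then set $c_G:=a_F-\lambda C_G$ and $c_V:=\lambda r_0-b_F$. Both are strictly positive by the small-gain hypothesis \eqref{eq:main-small-gain}, and neither depends on $\eta$ or $T$, because all the constants in Theorem~\ref{thm:main-contraction} and Lemma~\ref{lem:main-descent} are independent of them.

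The staleness term is handled with Lemma~\ref{lem:main-staleness}. Since $A_t^f\le\bar A^f=c_{\rm step}\min\{A,(1-\beta)^{-1}\}^2\eta^2$, we get $(D_A+\lambda C_A)\eta A_t^f\le(D_A+\lambda C_A)c_{\rm step}\min\{A,(1-\beta)^{-1}\}^2\eta^3$. Under the standing step restriction $\eta\le\eta_{\rm KD}$ (and the analogous step bound needed for Lemma~\ref{lem:main-descent}, assumed to give $\eta\le\bar\eta$), this is at most $(D_A+\lambda C_A)c_{\rm step}\min\{A,(1-\beta)^{-1}\}^2\bar\eta\,\eta^2$. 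We therefore define
\begin{equation*}
C':=D_\sigma+\lambda C_\sigma+(D_A+\lambda C_A)c_{\rm step}\min\{A,(1-\beta)^{-1}\}^2\bar\eta ,
\end{equation*}
and $C:=\max\{D_B+\lambda C_B,\,D_\zeta+\lambda C_\zeta\}$, so that $(D_B+\lambda C_B)B_f^2+(D_\zeta+\lambda C_\zeta)\zeta_f^2\le C(B_f^2+\zeta_f^2)$. This gives \eqref{eq:main-lyapunov}.

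The argument is mostly bookkeeping, so the main point to check is compatibility of the hypotheses. Both recursions must hold simultaneously at the same event $t$, with the same filtration $\mathcal F_t$ and the same $\eta$. This requires $\eta\le\min\{\eta_{\rm KD},\eta_{\rm desc}\}$ and $\gamma_{\rm KD}$ below the absorption threshold of Lemma~\ref{lem:f-global-contraction}. One subtlety is that Lemma~\ref{lem:main-descent} is stated in terms of $\mathcal V_W(Z^t)$, not $\Psi_t^f$. I would confirm that any $\Psi^f$-dependence coming from the task clause of A10 (the $\nu_{\mathrm{task},\Psi}\psi_t^f$ term) has already been converted into $b_F\mathcal V_W$. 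That conversion uses Lemma~\ref{lem:main-kl-geometry} together with $m_WI\preceq W_i\preceq M_WI$, which give $\psi^f\le(4\tau^2)^{-1}\psi^z\le(4\tau^2m_W)^{-1}\mathcal V_W$; here the minimizer $c$ in $\mathcal V_W$ is compared against the mean $\bar Z$, which minimizes the unweighted disagreement.

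If that conversion holds, the existence of $\lambda$ in \eqref{eq:main-small-gain} is exactly the condition $b_F/r_0<\lambda<a_F/C_G$, i.e.\ $b_FC_G<a_Fr_0$. That condition is what the hypothesis encodes, and the lemma follows.
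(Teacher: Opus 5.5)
Your proposal is correct and matches the paper's proof of Lemma~\ref{lem:f-lyapunov} essentially line for line. You add $\lambda$ times the consensus recursion to the task recursion, set $c_G=a_F-\lambda C_G$ and $c_V=\lambda r_0-b_F$, and absorb the staleness term into the $\eta^2$ constant using $A_t^f=O(\eta^2)$ and a step bound (the paper uses $\eta\le 1$ where you use $\eta\le\bar\eta$); the only cosmetic difference is that the paper keeps $K_B=D_B+\lambda C_B$ and $K_\zeta=D_\zeta+\lambda C_\zeta$ separate rather than taking their maximum.
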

See Lemma~\ref{lem:f-lyapunov}. Condition~\ref{eq:main-small-gain} states the
required balance between task drift and KD mixing; it is not implied by a small
step size.

\begin{theorem}[Main convergence result]
\label{thm:main_convergence}
Under A1--A10, $0<\alpha<1$, a connected graph with no isolated peers, the
frozen-kernel condition in A4, the alignment margins in A10,
Condition~\ref{eq:main-small-gain}, and $0<\eta\le\eta_{\max}$, there are
constants $c,C_0,C_1,C_2,C_3>0$ such that
\begin{equation}
\begin{split}
\frac1T\sum_{t=0}^{T-1}
&\left[\mathbb E\|\nabla_fF(\bar p^t)\|_\mu^2+c\Psi_t^f\right]
\le \frac{C_0}{\eta T}+C_1\eta+C_2B_f^2\\&\qquad \qquad \qquad \qquad \qquad +C_3\zeta_f^2\\
\text{with}&\quad K_* = \max\left\{ \frac{2}{c_G}, \frac{2L_f^2 C_{fz}/m_W + c C_{fz}/m_W}{c_V} \right\}\\
&C_0 = K_* \mathcal{L}_0;\quad 
C_1 = K_* K_\eta;\\
&C_2 = K_* K_B;\quad 
C_3 = K_* K_\zeta.
\end{split}
\label{eq:main-theorem}
\end{equation}
\end{theorem}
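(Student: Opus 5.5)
The plan is to telescope the strict coupled descent inequality of Lemma~\ref{lem:main-lyapunov} and then convert its left-hand quantities, $G_t$ and $\mathcal V_W(Z^t)$, into the theorem's observables, $\|\nabla_fF(\bar p^t)\|_\mu^2$ and $\Psi_t^f$.

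\textbf{Step 1 (one-step inequality).} Fix $\eta\le\eta_{\max}:=\min\{\eta_{\rm KD},\dots\}$, small enough that Theorem~\ref{thm:main-contraction} and Lemma~\ref{lem:main-descent} both hold. Combine the recursions \eqref{eq:main-consensus-recursion} and \eqref{eq:main-task-recursion} with the weight $\lambda$ from Condition~\ref{eq:main-small-gain}. By Lemma~\ref{lem:main-lyapunov}, this gives \eqref{eq:main-lyapunov}, with the staleness term $C_A\eta A_t^f$ bounded through Lemma~\ref{lem:main-staleness}. Since $\bar A^f=O(\eta^2)$, that term contributes only $O(\eta^3)\le O(\eta^2)$ and is absorbed into $C'\eta^2$.

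\textbf{Step 2 (telescoping).} Take total expectations and sum over $t=0,\dots,T-1$. Both $\mathcal R_t\ge0$ (each $F_i(p_i^t)\ge F_i^\star$) and $\mathcal V_W\ge0$, so $\mathbb E\mathcal L_T\ge0$. Dividing by $c\eta T$ for the appropriate constants gives
\[
\frac1T\sum_{t<T}\big(c_G\mathbb E G_t+c_V\mathbb E\mathcal V_W(Z^t)\big)\le \frac{\mathcal L_0}{\eta T}+C'\eta+C(B_f^2+\zeta_f^2).
\]

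\textbf{Step 3 (conversion to the observables).} By \eqref{eq:main-global-gradient},
\[
\mathbb E\|\nabla_fF(\bar p^t)\|_\mu^2\le 2\mathbb E G_t+2L_f^2\Psi_t^f .
\]
For the disagreement term, use the softmax equivalence of Lemma~\ref{lem:main-kl-geometry}, namely $\Psi^f_t\le C_{fz}\Psi^z_t$. Then use the lower metric bound $m_WI\preceq W_i$. Because $\bar Z$ minimizes the unweighted average squared distance over $c$, we have, for the weighted minimizer $c_W$,
\[
\Psi^z_t\le \frac1N\sum_i\|Z_i-c_W\|^2\le \mathcal V_W(Z^t)/m_W .
\]
Hence $\Psi^f_t\le (C_{fz}/m_W)\mathcal V_W$, and therefore
\[
\mathbb E\|\nabla_fF(\bar p^t)\|^2+c\Psi_t^f\le 2G_t+\big(2L_f^2+c\big)(C_{fz}/m_W)\mathcal V_W .
\]
Each coefficient is at most $K_*c_G$ or $K_*c_V$, respectively, with $K_*$ as in \eqref{eq:main-theorem}. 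Averaging over $t$ and inserting the Step 2 bound yields the claim with $C_0=K_*\mathcal L_0$, $C_1=K_*K_\eta$ (where $K_\eta=C'$), $C_2=K_*K_B$ and $C_3=K_*K_\zeta$ (the $B_f^2,\zeta_f^2$ coefficients).

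\textbf{Main obstacle.} The telescoping is routine. The real content lies in Lemma~\ref{lem:main-lyapunov}, which I take as given: the first-order cross term $C_G\eta G_t$ in the consensus recursion must be dominated by $a_F\eta G_t$, which is exactly Condition~\ref{eq:main-small-gain}.

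At the level of this theorem, the delicate point is ensuring that $\Psi^f_t$ is controlled by $\mathcal V_W$ pathwise rather than only in expectation. I will apply the inequality realization-wise inside the expectation, which is valid because both bounds are deterministic matrix/softmax inequalities holding on the frozen-kernel region of A4.

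A second point to check is that the constants are independent of $\eta$ and $T$. I will verify this by tracing $\eta_{\max}$ as the minimum of $\eta_{\rm KD}$, the descent step restriction, and $\eta\le1$, so that $\eta^3\le\eta^2$.
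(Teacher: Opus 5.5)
Your proposal is correct and follows the paper's proof of Theorem~\ref{thm:fa} essentially step for step. You take total expectations in the strict Lyapunov recursion and telescope using $\mathcal L_T\ge0$. You then transfer to the observables through the gradient-transfer bound and the pathwise inequality $\psi_t^f\le (C_{fz}/m_W)\mathcal V_W(Z^t)$, and bound $2\mathbb E G_t+(2L_f^2+c)(C_{fz}/m_W)\mathbb E\mathcal V_W(Z^t)$ by $K_*\bigl(c_G\mathbb E G_t+c_V\mathbb E\mathcal V_W(Z^t)\bigr)$ before dividing by $\eta T$. Your staleness absorption for $\eta\le1$ and the constants $C_0=K_*\mathcal L_0$, $C_1=K_*K_\eta$, $C_2=K_*K_B$, $C_3=K_*K_\zeta$ are the same as in the paper.
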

The transient constant contains the initial task potential and disagreement as
$C_1$ contains stochastic-gradient, curvature, and bounded-lag constants. A
stable-kernel extension holds only when its metric/projector drift coefficient
is strictly smaller than the contraction margin.\\
The complete proof is Theorem~\ref{thm:fa}. The additional
$O(\zeta_f^2)$ term is unavoidable for the implemented non-IID local CE
updates: even with $B_f=0$ and zero stochastic noise, two peers with different
quadratic objectives can have a nonzero consensus fixed point independent of
$\eta$. The theorem therefore separates stochastic discretization, capacity,
and deterministic task heterogeneity rather than assigning all three effects
to the step size.

\section{Experiments}
\label{sec:experiments}
The experiments test the observable implications of
Theorem~\ref{thm:main_convergence}. They are not designed as a benchmark against
specialized decentralized optimizers, so we are not trying to beat the benchmark or other models. The completed campaign contains 25
trajectories and records function-space consensus, a finite-reference
stationarity proxy, rate analysis, and task accuracy every 1,000 edge activations.\\

We perform the experiments to provide insights into the consensus error convergence and error neighbourhood of gossip P2P KD under the real world setting. This way, we can tell if the proposed theoretical convergence rate of $\mathcal{O}(1/\eta T)$ is justified, and further, if the bound for convergence rate is tight. In addition, does the stationary proxy $\frac{1}{T}||\nabla F||^2$ decreases during the event-indexed transient. Finally, will heterogeneous settings converge in the same
qualitative sense even though a parameter-average D-SGD control is undefined
when peer dimensions and architecture differs. 

\begin{figure}[H]
    \centering
     \caption{Task accuracy for the
width-heterogeneous Setting B with $\eta=0.05$, seed $0$.}
    \includegraphics[width=0.98\linewidth]{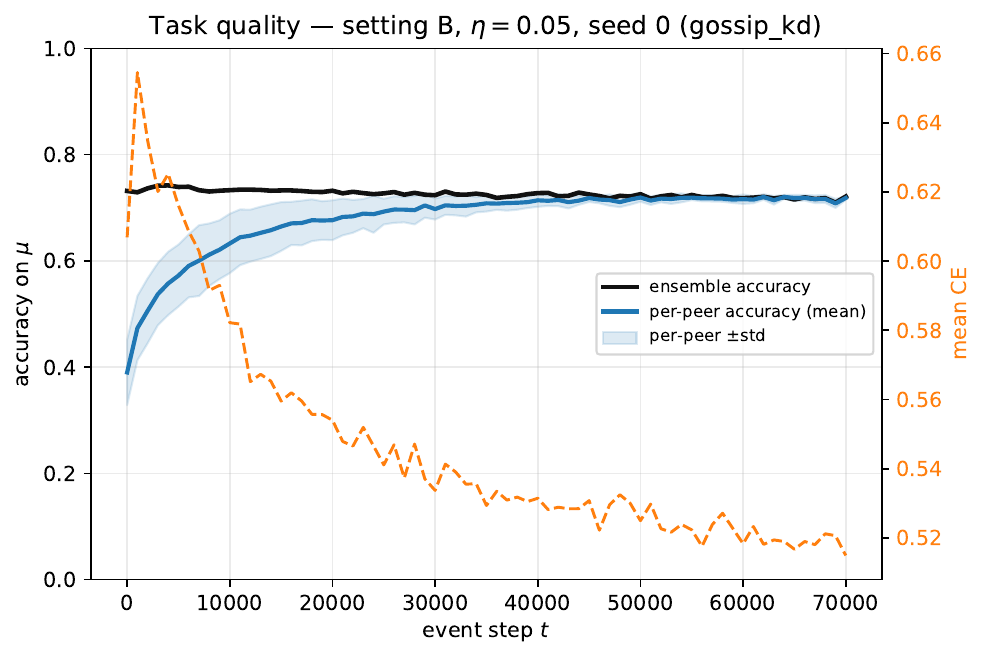}
    \label{fig: Task Acc for setting B}
\end{figure}
\begin{table*}[t]
\centering
\caption{Completed main KD runs. $T$ is the actual stored horizon,
$\widehat b_\Psi$ and $\widehat b_G$ are final-60\% $1/T$-regression
intercepts. $\widehat G_T$ is Equation~\ref{eq:exp-rate-model-explained}, and
$\widetilde p_{\rm dyn}$ is the late-transient median.}
\label{tab:exp-main}
\setlength{\tabcolsep}{6pt}
\begin{tabular}{@{}lrrrrrrrr@{}}
\toprule
Set & Seed & $T$ & $\widehat\Psi_T^f$ & $\widehat b_\Psi$ &
$\widehat G_T$ & $\widehat b_G$ & $\widetilde p_{\rm dyn}$ & Ens. Accuracy\\
\midrule
A & 0 & 70,000& $3.739\!\times\!10^{-4}$ & $3.868\!\times\!10^{-4}$ & 0.330& 0.316 & 1.03 & 0.729\\
A & 1 & 70,000& $3.522\!\times\!10^{-4}$ & $2.793\!\times\!10^{-4}$ & 0.397& 0.389 & 1.02 & 0.643\\
B & 0 & 70,000& $3.326\!\times\!10^{-4}$ & $3.983\!\times\!10^{-4}$ & 0.331& 0.316 & 1.03 & 0.721\\
B & 1 & 70,000& $2.809\!\times\!10^{-4}$ & $2.714\!\times\!10^{-4}$ & 0.399& 0.391 & 0.99 & 0.622\\
C & 0 & 70,000& $2.882\!\times\!10^{-4}$ & $4.081\!\times\!10^{-4}$ & 0.337& 0.323 & 1.21 & 0.726\\
C & 1 & 70,000& $4.101\!\times\!10^{-4}$ & $2.782\!\times\!10^{-4}$ & 0.399& 0.398 & 1.90 & 0.603\\
D & 0 & 70,000& $3.688\!\times\!10^{-4}$ & $8.264\!\times\!10^{-4}$ & 0.472& 0.526 & 0.88& 0.535\\
D & 1 & 70,000& $3.100\!\times\!10^{-4}$ & $3.155\!\times\!10^{-4}$ & 0.642& 0.729 & 0.91& 0.329\\
\bottomrule
\end{tabular}
\end{table*}
\subsection{Protocol and Observables}
\label{sec:exp-setup}

We train $N=20$ CIFAR-10~\cite{krizhevsky2009learning_cifar_10} peers whose private shards follow a Dirichlet label
partition with concentration $\gamma_{\rm part}=0.3$. Each seed fixes a connected
$\operatorname{ER}(0.3)$ graph. Seeds 0 and 1 have 51 and 58 edges and spectral
gaps 1.232 and 1.822, respectively. Setting A contains 20 medium-width ResNets.
Settings B and C use width rosters $7$s/$7$m/$6$l and
$7$xs/$7$m/$6$xl for ResNets. Setting D mixes ResNet-18, MobileNetV2~\cite{dong2020mobilenetv2, sandler2018mobilenetv2}, and ShuffleNetV2~\cite{ma2018shufflenet}.
Every model receives 1,500 local CE pretraining steps. Gossip then runs for
70,000 events with a live teacher, KD weight $\alpha=0.5$, temperature $\tau=4$,
batch size 64, and no gossip momentum. \\

Main runs use $\eta=0.05$ and two seeds;
the seed-0 sweep uses $\eta\in\{0.0125,0.025,0.05,0.1\}$. Appendix~\ref{app:experiments} gives the complete configuration and run
inventory, including all of the results of the experiments alongside visualization of those experiments. The complete code system, experiment log and README are in code and data supplement for reproduction.

The reference measure $\mu$ is the empirical measure on a fixed 2,000-image
CIFAR-10 test subset. Logits are clipped at 10 and parameter
gradients at norm 5. These controls instantiate the bounded finite-support
setting used by A1 and the moment bounds in A4, while the experiments measure
the consequences of A4 and A10 rather than treating either assumption as an
empirical conclusion.

At event $t$, the consensus statistic is
\begin{equation}
\widehat\Psi_t^f=\frac1N\sum_{i=1}^N
\left\|p_i^t-\bar p^t\right\|_\mu^2,
\qquad \bar p^t=\frac1N\sum_{i=1}^Np_i^t.
\label{eq:exp-consensus}
\end{equation}

\subsection{Rate Diagnostics}
\label{sec:exp-rate-explanation}
\paragraph{Decay (Consensus) Speed}
Figure~\ref{fig:main-rate} tests the transient term in
Theorem~\ref{thm:main_convergence}. For a fixed step size (learning rate), the theorem predicts that the time-averaged stationarity term approaches a nonzero neighbourhood at
order $\mathcal O(1/T)$. We represent this behaviour by
\begin{equation}
\widehat G_T \approx \widehat b_G + aT^{-p},
\label{eq:exp-rate-model-explained}
\end{equation}
where $\widehat G_T$ is the sampled prefix average of the stationarity proxy,
$\widehat b_G$ is its fitted terminal neighbourhood, and
$\widehat G_T-\widehat b_G$ is the part that has not yet decayed. The reference
order is $p=1$. A value $p>1$ describes faster decay and is also compatible
with the upper bound. Equation~\ref{eq:exp-rate-model-explained} is a diagnostic model for the finite trajectory, not an equality asserted by the theorem. As long as $p>1$, the theoretical convergence bound is correct and the closer $p$ is to $1$, the tighter the convergence rate bound is.\\

$E_T^+$ measures how far the running stationarity statistic remains above its
terminal neighbourhood.
\begin{equation}
E_T^+=\max\{\widehat G_T-\widehat b_G,\varepsilon_b\},
\label{eq:exp-regularized-excess-explained}
\end{equation}
where $\varepsilon_b$ is the estimated tail-noise level. The maximum prevents random floor crossings from
becoming undefined on a logarithmic axis. If the excess follows $a/T$, its
log--log curve is approximately a straight line with slope $-1$, parallel to
the dashed $1/T$ reference. A shallower curve corresponds to $p<1$, while a
steeper curve corresponds to $p>1$. In Figure~\ref{fig:main-rate}, the curve is
shallow during the early trajectory and steepens near the end. The global fit
$\widehat p=1.26$ consequently describes the whole trajectory perfectly as a
$1/T$ regime.

\paragraph{A direct check of inverse-time scaling.}
The second panel of Figure~\ref{fig:main-rate} plots the signed scaled excess
\begin{equation}
Q_T=T(\widehat G_T-\widehat b_G).
\label{eq:exp-scaled-excess-explained}
\end{equation}
If $\widehat G_T-\widehat b_G\approx a/T$, then $Q_T\approx a$ and the curve forms
a plateau. A rising curve indicates decay slower than $1/T$, and a falling
curve indicates decay faster than $1/T$. Keeping the statistic signed also
shows when stochastic observations cross the fitted neighbourhood. In
Figure~\ref{fig:main-rate}, $Q_T$ rises during the early slow-decay phase,
reaches a maximum, and then turns downward. The turnover marks the transition from $p<1$ sharply toward $p\geq1$.

\paragraph{The decay rate at each stage of training.}
The dynamic exponent $\widehat p_{\rm dyn}(T)$ is the negative slope of
$\log E_T^+$ against $\log T$ within a moving window of seven recorded
snapshots. For the displayed Setting-C
run, the late median is $1.26$ and the interquartile range is
$[0.74,1.77]$. The local exponent approaches and then exceeds one in the same
region where Panel (b) stops rising and turns downward. The rate is slow at the early stage, but it recovers to a rate faster than $1/T$ in later rounds. The more detailed experiment results and analysis for this part can be seen in Appendix~\ref{app:experiments}.

\begin{figure*}[t]
\centering
\caption{Full-horizon stationarity-rate diagnostic for Setting B, seed 1.
The late resolvable transient has median
$\widetilde p_{\rm dyn}=1.26$ with interquartile range $[0.74,1.77]$. The three
panels show log--log excess decay, signed $T$-scaled excess, and the dynamic
exponent defined in Equation~\ref{eq:exp-rate-model-explained}.}
\includegraphics[width=0.98\textwidth]{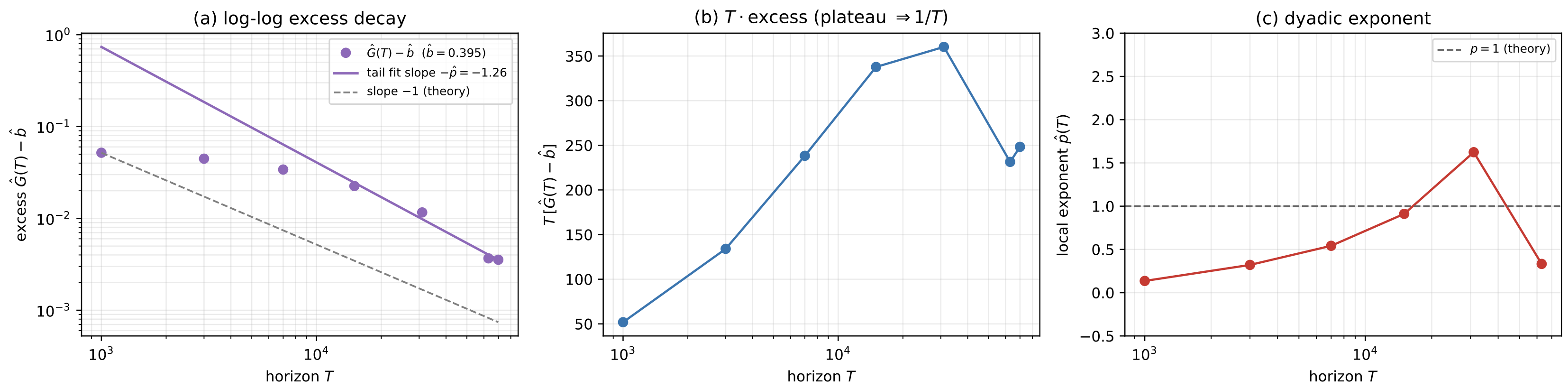}
\label{fig:main-rate}
\end{figure*}

\subsection{Empirical Validation of the Bound}
\label{sec:exp-validation}
\paragraph{KD-induced contraction.}
Across the eight main KD runs, $\widehat\Psi_t^f$ contracts by
$40.2$--$61.0\times$ from the post-pretraining state. The terminal values are
$2.81\times10^{-4}$ to $4.10\times10^{-4}$. This contraction holds for the
homogeneous, width-heterogeneous, and mixed-family rosters, matching the common
output-space mechanism in Theorem~\ref{thm:main-contraction}. In the four
isolated controls, disagreement remains at
$1.36\times10^{-2}$ to $1.51\times10^{-2}$. At 70,000 events the seed-0 KD
values are therefore $36$--$48\times$ smaller. This control isolates peer
exchange as the source of consensus contraction; it is not an optimizer
benchmark.
\begin{figure}[H]
    \centering
      \caption{Showcase for full event function-space
consensus trace for Setting C. This corresponds to the two observables in
Theorem~\ref{thm:main_convergence}.}
    \includegraphics[width=0.98\linewidth]{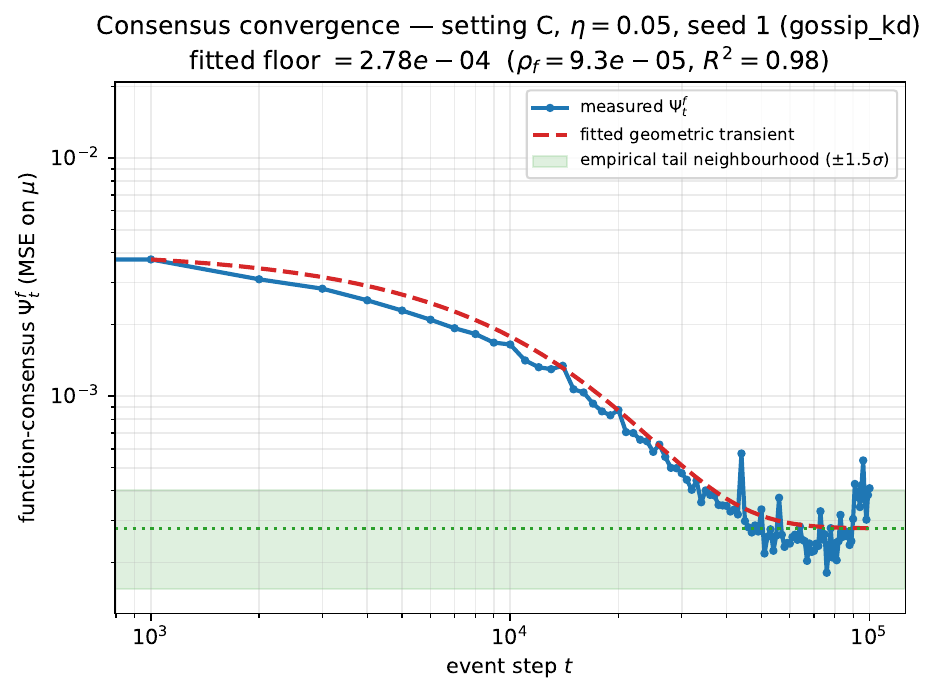}
    \label{fig:Set C convergence showcase}
\end{figure}

\paragraph{Stationarity and rate.}
For the eight Setting A--D main runs, $\widetilde p_{\rm dyn}$ ranges from 0.88 to 1.90, with
four values in $[0.88,1.03]$, the two Set-C values at 1.21 and 1.90, and two Set-D values at 0.88 and 0.91. These
measurements support decay at the $1/T$ order or faster over the resolvable
transient predicted by Theorem~\ref{thm:main_convergence}. 
\begin{figure}
    \centering
     \caption{Showcase for instantaneous and sampled-prefix
stationarity statistics for Setting B. correspond to the two observables in
Theorem~\ref{thm:main_convergence}.}
    \includegraphics[width=0.98\linewidth]{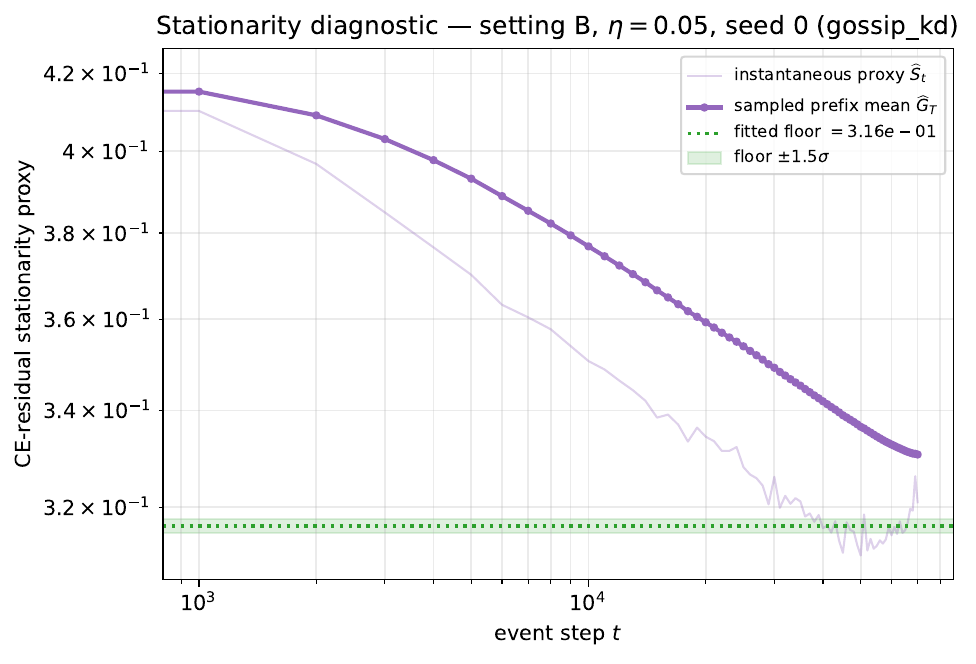}
    \label{fig:stationary showcase under setting B}
\end{figure}
\paragraph{Neighbourhood signatures.}
The four-point seed-0 sweep exhibits the finite-horizon competition in the
bound. For all of the settings from A--D, the fitted consensus floor decreases from
$(1.10,1.17,1.25)\times10^{-3}$ at $\eta=0.0125$ to
$(3.87,3.98,4.08)\times10^{-4}$ at $\eta=0.05$, then rises slightly to
$(4.04,4.20,4.23)\times10^{-4}$ at $\eta=0.1$. Small $\eta$ retains the
$1/(\eta T)$ transient at the fixed horizon, while large $\eta$ exposes the
discretization neighbourhood. At every swept step size the shared-skeleton
floor ordering is A$<$B$<$C, consistent with an architecture-dependent
additive term. The sweep does not separately identify the oracle quantities
$B_f^2$ and $\zeta_f^2$; doing so would require direct reachable-class probes
rather than a regression on training trajectories.

\paragraph{Task quality and controls.}
The accuracy panel shows that consensus is not obtained by collapsing to a
uniform predictor. The main KD ensemble accuracies span 0.535--0.729 for seed 0 and remain well above chance in every roster. We report D-SGD only for Setting A, where parameter averaging is defined. Its use of momentum 0.9 differs from the analysed gossip phase, so it remains an appendix mechanism reference and is not used for a performance claim. Lastly, the more detailed experiment results and analysis for this part can be seen in Appendix~\ref{app:experiments}.
\section{Conclusion}
\label{sec:conclusion}

We studied asynchronous random-edge peer-to-peer knowledge distillation for
serverless networks whose peers need not share a parameter space. The analysis
places all peers in the common prediction space $L^2(\mu,\mathbb R^C)$, treats
each KD exchange as a kernel-weighted mixing step, and couples quotient-space
contraction with descent of the task objective. Under the stated regularity and
realizability conditions,
Theorem~\ref{thm:main_convergence} gives an
$\mathcal O(1/(\eta T))$ transient plus an
$\mathcal O(\eta)+\mathcal O(B_f^2)+\mathcal O(\zeta_f^2)$ neighbourhood,
separating discretization, capacity, and persistent task heterogeneity. The
CIFAR-10 diagnostics are consistent with this prediction as KD reduces
function-space disagreement by $40.2$--$61.0\times$ across all experiments,
shared-skeleton runs show late-transient exponents near or above one, and the
step-size sweep exhibits the predicted transient--neighbourhood tradeoff.

\subsubsection{Limitations and Future Work}
\label{sec:limitations-future-work}

The theorem is conditional on stable kernels, bounded staleness, A4 for the
parameter-to-logit bridge, A10 for restricted alignment. These assumptions identify where the proof uses network geometry,
but they are not directly automatic for arbitrary feature learning or aggressive step
sizes.
The experiments also estimate a combined terminal neighbourhood, not the
individual $\mathcal O(\eta)$, $B_f^2$, and $\zeta_f^2$ contributions, and they
use a finite reference support for both KD and evaluation. Future work should
measure A4/A10 residuals during training, separate the KD and held-out
evaluation supports, as well as estimate $B_f$ through reachable-class probes,

\newpage
\bibliography{references}
\clearpage

\appendix
\setcounter{section}{0}
\setcounter{table}{0}
\renewcommand{\thesection}{\Alph{section}}
\setcounter{equation}{0}
\renewcommand{\theequation}{\Alph{section}\arabic{equation}}
\setcounter{theorem}{0}\setcounter{lemma}{0}
\renewcommand{\thetheorem}{\Alph{section}\arabic{theorem}}
\renewcommand{\thelemma}{\Alph{section}\arabic{lemma}}
\renewcommand{\theproposition}{\Alph{section}\arabic{proposition}}

\begin{center}\Large\bfseries Appendix\end{center}
\noindent This appendix is self-contained. Appendix~\ref{app: detailed assumptions}
states Assumptions A1--A10 in full. The proof then proceeds as a sequence of
named lemmas and propositions culminating in Theorem~\ref{thm:fa}. In the end, it details about all of the experiments setups and results with comprehensive statistic analysis.

\section{Notations Clarification}
\label{app: whole notation}

This section details every mathematical notation, variable, operator, and constant used throughout the proof of convergence in the federated knowledge distillation appendix.

\subsection{Network and Graph Topology}
\begin{itemize}
    \item $N$: Total number of peers (nodes) in the network.
    \item $i, j, k$: Indices representing specific peers.
    \item $E$: The set of undirected edges representing communication links in the graph.
    \item $\vec{E}$: The set of directed edges (both orientations for every undirected edge).
    \item $e, e_t$: A specific edge that is active at time step $t$.
    \item $d_i$: The degree (number of neighbors) of peer $i$.
    \item $d_{\max}$: The maximum degree across all peers in the graph.
    \item $\mathcal{L}$: The unweighted graph Laplacian matrix, defined as $\mathcal{L} = D - A$ (Degree matrix minus Adjacency matrix).
    \item $\lambda_1, \lambda_2, \dots, \lambda_{\max}$: The eigenvalues of the Laplacian $\mathcal{L}$. The algebraic connectivity ($\lambda_2$) dictates the speed of information mixing across the graph.
    \item $q_i$: The probability that peer $i$ is activated in a given event, defined under uniform edge sampling as $d_i / |E|$.
\end{itemize}

\subsection{Time and Optimization Parameters}
\begin{itemize}
    \item $t$: Discrete event time step (counts individual edge activations).
    \item $T$: Total number of events/steps in the horizon.
    \item $\eta$: The base parameter-space step size (learning rate) for SGD.
    \item $\alpha$: The knowledge distillation (KD) balance parameter ($0 < \alpha < 1$). Regulates the trade-off between the local task loss and the KD loss.
    \item $\tau$: The softmax temperature scalar used to scale logits before applying the softmax function.
    \item $w$: The effective step size for the KD logit contraction, defined exactly as $w = \eta \alpha \tau$.
\end{itemize}

\subsection{Data, Functions, and Prediction Spaces}
\begin{itemize}
    \item $\mu$: The reference measure. A finite evaluation dataset over which all functional metrics (logits, predictions) are measured.
    \item $C$: The number of classes in the classification task.
    \item $\theta_i$: The neural network parameter vector (weights and biases) for peer $i$.
    \item $\theta_i^0$: The parameters of peer $i$ at initialization.
    \item $\Delta \theta_i$: The parameter update applied to peer $i$ during a training step.
    \item $Z_i, Z$: The logits (pre-softmax outputs) evaluated over the reference measure $\mu$.
    \item $Z_i^t$: Logits of peer $i$ specifically at time $t$.
    \item $\bar{Z}$: The mean logits across the entire network, $N^{-1}\sum_i Z_i$.
    \item $P_i, p_i$: The prediction probabilities, computed as $P_i = \operatorname{softmax}(Z_i/\tau)$.
    \item $P_j^T, p_j^{\mathrm{EMA}}$$: The teacher target predictions from peer $j . Can be the live prediction or an Exponential Moving Average (EMA).
    \item $\bar{p}^t$: The mean prediction probability across all peers at time $t$.
    \item $p_{\min}$: A strictly positive lower bound on all class probabilities (enforced by Assumption A1).
    \item $q^\star$: The hypothetical optimal target prediction distribution for the global task.
\end{itemize}

\subsection{Jacobians and Neural Tangent Kernels (NTK)}
\begin{itemize}
    \item $J_i^z$: The Jacobian matrix of the logits with respect to the parameters, $D_{\theta_i} Z_i(\theta_i)$.
    \item $J_i^p$: The Jacobian of the prediction probabilities with respect to the parameters.
    \item $D_Z P$: The Jacobian of the softmax mapping from logits to probabilities.
    \item $\Theta_i$: The Neural Tangent Kernel (NTK) in \emph{logit space}, defined as $\Theta_i = J_i^z (J_i^z)^*$.
    \item $\Theta_i^0$: The initial logit-space NTK at $\theta_i^0$.
    \item $K_i$: The Neural Tangent Kernel in \emph{prediction space}, defined as $K_i = J_i^p (J_i^p)^*$.
    \item $H_i$: A cross-Jacobian operator mapping the gradient from the prediction-space KD loss into a parameter update, $H_i = \tau J_i^p (J_i^z)^*$.
\end{itemize}

\subsection{Tangent Spaces and Projections}
\begin{itemize}
    \item $\mathcal{H}_z$: The full ambient vector space of evaluated logits.
    \item $T_i^z$: The \emph{logit tangent space} (the column space/range of $J_i^z$ or $\Theta_i$). Represents logit directions the model is physically capable of moving in.
    \item $\Pi_i^z$: The orthogonal projection matrix onto the logit tangent space $T_i^z$.
    \item $I - \Pi_i^z$: The orthogonal projection onto the \emph{normal space} (directions the model cannot move).
    \item $T_i^p, \Pi_i^p$: The \emph{prediction tangent space} and its orthogonal projection matrix.
    \item $M(P)$: The softmax metric tensor defined as $M(P) = \operatorname{diag}(P) - P P^\top$.
\end{itemize}

\subsection{Losses, Gradients, and Residuals}
\begin{itemize}
    \item $F_i(p)$: The local task risk (e.g., Cross-Entropy loss) for peer $i$.
    \item $F(p)$: The global task risk, averaged across all peers $N^{-1}\sum F_i(p)$.
    \item $F_i^\star, F^\star$: The infimum (minimum achievable value) of the local and global task risks.
    \item $g_i^t$: The functional gradient of the local task risk with respect to the predictions, $g_i^t = \nabla_f F_i(p_i^t)$.
    \item $D_{\mathrm{KL}}(P'\|P)$: The Kullback-Leibler divergence between two probability distributions.
    \item $d_{ij}$: The logit difference (directed edge pull) between peer $i$ and peer $j$, $d_{ij} = Z_i - Z_j$.
    \item $\delta_i^t$: The task residual defect. The difference between the actual pushforward of the network and the ideal functional descent direction.
    \item $r_i^z, r_i^p$: The Taylor series remainder (curvature) vectors for logits and predictions.
    \item $\xi_{i, e}, \xi_i^z, \xi_i^p$: Martingale difference noise vectors representing the stochasticity introduced by random edge sampling.
\end{itemize}

\subsection{Consensus and Error Metrics (The Observables)}
\begin{itemize}
    \item $\Psi_t^f$: Function-space consensus error. The variance of predictions across peers: $\frac{1}{N}\sum_i \mathbb{E}\|p_i^t - \bar{p}^t\|_\mu^2$.
    \item $\Psi_t^z$: Logit-space consensus error. The variance of logits across peers: $\frac{1}{N}\sum_i \mathbb{E}\|Z_i^t - \bar{Z}^t\|_\mu^2$.
    \item $\mathcal{V}_W(Z)$: The \emph{quotient consensus energy}, the primary metric for analyzing contraction: $\min_c \frac{1}{N}\sum_i \|Z_i - c\|_{W_i}^2$.
    \item $c_W$: The weighted centroid of logits that minimizes the quotient energy $\mathcal{V}_W(Z)$.
    \item $Q_i$: The logit deviation from the weighted centroid, $Q_i = Z_i - c_W$.
    \item $G_t$: The average squared norm of the functional task gradients, $\frac{1}{N}\sum_i \|g_i^t\|^2$.
    \item $B_f^2$: The \emph{representational capacity floor}. The irreducible minimum distance between the optimal function $q^\star$ and the functions the network architectures can represent.
    \item $\zeta_f^2$: The data heterogeneity bound. Measures how much the local dataset gradients vary from the global average.
    \item $A_t^f$: The teacher lag error. Evaluates how ``stale'' the EMA teacher predictions are compared to live predictions.
    \item $\mathcal{R}_t$: The activation-corrected task potential (a weighted sum of local task sub-optimalities).
    \item $\mathcal{L}_t$: The strict Lyapunov function combining task risk and consensus error, $\mathcal{L}_t = \mathcal{R}_t + \lambda \mathcal{V}_W(Z^t)$.
\end{itemize}

\subsection{Bounding Constants and Multipliers}
\begin{itemize}
    \item $\kappa_-, \kappa_+$: The strict lower and upper bounds on the non-zero eigenvalues of the restricted logit NTK $\Theta_i$.
    \item $\underline{\kappa}_p, \overline{\kappa}_p$: The lower and upper spectral bounds for the restricted prediction kernel $K_i$.
    \item $L_J, L_K$: Lipschitz continuity constants for the Jacobian matrices and the NTK.
    \item $L_f$: Smoothness constant (Lipschitz gradient) for the functional task risk $F_i$.
    \item $H_{\max}$: The absolute upper bound on the operator norm of the cross-Jacobian $H_i$.
    \item $\gamma_{\mathrm{KD}}, \nu_{\mathrm{KD, B}}$: Constants bounding the non-realizable (normal) component of the KD pull (from Assumption A10).
    \item $\gamma_{\mathrm{task}}, \nu_{\mathrm{task, \Psi}}, \nu_{\mathrm{task, B}}$: Constants bounding the non-realizable (normal) component of the task gradient.
    \item $W_i$: The pseudo-inverse metric matrix, used to precondition the quotient energy, $W_i = \Theta_i^\dagger + \gamma_0(I - \Pi_i^z)$.
    \item $m_W, M_W$: The absolute minimum and maximum eigenvalues of $W_i$.
    \item $a_W, b_W, \varepsilon_W$: Geometric contraction constants governing the speed of consensus.
    \item $D_{b,t}, D_{\xi,t}$: Aggregate variance bounds for deterministic perturbations (curvature, task, lag) and stochastic noise.
    \item $C_G, C_B, C_\zeta, C_\eta, C_A, c_0$: Aggregate algebraic coefficients governing the recursive consensus drift.
    \item $a_F, b_F, D_B, D_\zeta, D_\eta, D_A$: Aggregate algebraic coefficients governing the task descent recursion.
    \item $c_G, c_V, K_B, K_\zeta, K_\eta$: The ultimate algebraic coefficients in the strict Lyapunov recursion representing the combined system drift.
    \item $C_0, C_1, C_2, C_3$: The final macroscopic constants in Theorem 1's $\mathcal{O}(1/T)$ convergence bound. $C_0$ scales the transient time rate, $C_1$ scales the step-size error, and $C_2, C_3$ scale the persistent capacity and data heterogeneity floors.
\end{itemize}

\section{Explicit Analytical Coefficients of the Convergence Proof}
\label{app: whole coefficients}

This section explicitly defines every composite coefficient used in the convergence proof (Lemmas 7, 8, 9, and Theorem 1), tracing their exact dependence on the structural network and optimization parameters.

\subsection{Consensus Recursion Coefficients (Theorem 1 in Main Paper, Lemma 7 in Appendix)}
The global contraction of the quotient energy $\mathcal{V}_W(Z)$ follows the discrete-time Lyapunov recursion:
 \[
 \begin{split}
\mathbb{E}_t \mathcal{V}_W(Z^{t+1}) \le (1 - c_0 \eta) \mathcal{V}_W(Z^t) + C_G \eta G_t + C_B \eta B_f^2 + 
\\C_\zeta \eta \zeta_f^2 + C_\eta \eta^2 + C_A \eta A_t^f.
\end{split}
 \]
The transient contraction rate $c_0$ is defined as:
$$ c_0 = \frac{\alpha p_{\min}\lambda_2}{2 M_W |E|} $$
where $M_W = \max\{\kappa_-^{-1}, \gamma_0\}$. 
The perturbation prefactors $R_G, R_B, R_\zeta, R_0, R_A$ are defined by bounding the task, curvature, and lag vectors using Young's inequality:
\begin{align*}
R_G &= 9 M_W (1 - \alpha)^2 (\overline{\kappa}_p^2 + \chi_G) \\
R_B &= 9 M_W (1 - \alpha)^2 \chi_B \\
R_\zeta &= 9 M_W (1 - \alpha)^2 \chi_\zeta \\
R_0 &= \frac{3}{4} M_W L_\Theta^2 G_{\theta,4}^4 \\
R_A &= \frac{3 M_W \kappa_+ d_{\max}}{|E|} 
\end{align*}
Let $K_{\rm pre} = \left(\frac{8 d_{\max}}{a_W \alpha \tau |E|} + 2 \right)$. By absorbing the $w = \eta \alpha \tau$ factors into the stochastic edge distribution, the composite tracking coefficients are:
\begin{align*}
C_G &= R_G K_{\rm pre} \\
C_B &= \frac{4 M_W d_{\max} \nu_{\rm KD, B} \alpha \tau}{N(p_{\min}/\tau)\lambda_2 m_W} + R_B K_{\rm pre} \\
C_\zeta &= R_\zeta K_{\rm pre} \\
C_\eta &= R_0 K_{\rm pre} + \Sigma_0 \\
C_A &= R_A \alpha \tau K_{\rm pre}
\end{align*}

\subsection{Task Descent Coefficients (Lemma 8 in Appendix)}
The activation-corrected task potential $\mathcal{R}_t$ satisfies the descent inequality:
\[
\mathbb{E}_t \mathcal{R}_{t+1} \le \mathcal{R}_t - a_F \eta G_t + b_F \eta \mathcal{V}_W(Z^t) + D_B \eta B_f^2 + 
D_\zeta \eta \zeta_f^2 + D_\eta \eta^2 + D_A \eta A_t^f.
\]
The task drift explicitly counters the consensus drift. The descent coefficient $a_F$ and the corresponding error coefficients are:
\[
\begin{split}
a_F &= (1-\alpha)\left[ \underline{\kappa}_p(1-\gamma_{\rm task}) - \frac{\varepsilon_\delta}{2} \right] - \varepsilon_{\rm KD} - \eta_{\max} \frac{L_f}{2} C_{\Delta, G} \\
b_F &= (1-\alpha)\left[ \underline{\kappa}_p \nu_{\rm task, \Psi} + \frac{\chi_G}{2 \varepsilon_\delta} \right] \frac{C_{fz}}{m_W} + \alpha C_{\rm KD, V} +\\ 
&\qquad \qquad \qquad \qquad \qquad \quad \qquad \qquad \quad \eta_{\max} \frac{L_f}{2} C_{\Delta, V} \\
D_B &= (1-\alpha)\left[ \underline{\kappa}_p \nu_{\rm task, B} + \frac{\chi_B}{2 \varepsilon_\delta} \right] + \eta_{\max} \frac{L_f}{2} C_{\Delta, B} \\
D_\zeta &= (1-\alpha)\frac{\chi_\zeta}{2 \varepsilon_\delta} \\
D_A &= \alpha C_{\rm KD, A} + \eta_{\max} \frac{L_f}{2} C_{\Delta, A} \\
D_\eta &= \frac{L_f}{2} (4 R_p + 4 \Sigma_p)
\end{split}
\]
\subsection{Strict Lyapunov Coefficients (Lemma 5 in Main Paper, Lemma 9 in Appendix)}
The strict Lyapunov function is defined as $\mathcal{L}_t = \mathcal{R}_t + \lambda \mathcal{V}_W(Z^t)$. By taking a linear combination of the inequalities from Lemma 7 and Lemma 8, we obtain:
$$ \mathbb{E}_t \mathcal{L}_{t+1} \le \mathcal{L}_t - c_G \eta G_t - c_V \eta \mathcal{V}_W(Z^t) + K_B \eta B_f^2 + K_\zeta \eta \zeta_f^2 + K_\eta \eta^2. $$
The necessary small-gain condition for convergence is $b_F C_G < a_F c_0$. Assuming this holds, there exists $\lambda > 0$ yielding strictly positive descent factors:
\begin{align*}
c_G &= a_F - \lambda C_G \quad > 0 \\
c_V &= \lambda c_0 - b_F \quad > 0
\end{align*}
The remaining static error floors are linearly composed as follows:
\begin{align*}
K_B &= D_B + \lambda C_B \\
K_\zeta &= D_\zeta + \lambda C_\zeta \\
K_\eta &= D_\eta + \lambda C_\eta + (D_A + \lambda C_A)\bar{A}^f
\end{align*}
where $\bar{A}^f$ bounds the teacher staleness variance via $A_t^f \le \bar{A}^f \eta^2$.

\subsection{Final Asymptotic Rates (Theorem 1)}
The overall $\mathcal{O}(1/T)$ rate is derived by telescoping the strict Lyapunov recursion. By isolating the time-average of the functional gradient and consensus error, we bound:
$$ \frac{1}{T}\sum_{t=0}^{T-1} \left[ \mathbb{E}\|\nabla_f F(\bar{p}^t)\|_\mu^2 + c \Psi_t^f \right] \le \frac{C_0}{\eta T} + C_1 \eta + C_2 B_f^2 + C_3 \zeta_f^2. $$
The universal scaling factor $K_*$ balances the dual contraction criteria:
$$ K_* = \max\left\{ \frac{2}{c_G}, \frac{2L_f^2 C_{fz}/m_W + c C_{fz}/m_W}{c_V} \right\}. $$
Multiplying the Lyapunov errors by this global scalar yields the final macroscopic coefficients:
\begin{align*}
C_0 &= K_* \mathcal{L}_0 \\
C_1 &= K_* K_\eta \\
C_2 &= K_* K_B \\
C_3 &= K_* K_\zeta
\end{align*}

\section{Expanded Related Work and Closest Comparisons}
\label{sec:expanded-related-work}

\subsection{Federated Distillation with a Coordinator}

Knowledge distillation (KD) transfers information through soft predictions
rather than through weights~\citep{hinton2015distill}. This property makes KD a
natural communication primitive for federated learning with heterogeneous
architectures, since predictors share the class simplex even when their
parameter vectors have different dimensions. FedMD distils client models on a
public dataset and alternates local private training with public-set
distillation~\citep{li2019fedmd}. FedDF fuses client models by server-side
ensemble distillation~\citep{lin2020feddf}. Cronus studies heterogeneous
collaborative learning through black-box soft-label transfer in a setting where
robustness to malicious clients is central~\citep{chang2019cronus}. Federated
Mutual Learning, data-free heterogeneous FL distillation, and DS-FL further
show that soft-label or generated-data transfer can reduce the need for direct
parameter aggregation under non-IID data and heterogeneous clients
\citep{shen2020fml,zhu2021datafree,zhang2022dsfl}. These methods establish the
practical relevance of prediction-level communication. Their protocols still
retain a coordinator, a global aggregation step, a synchronized round
structure, or a server-side distillation stage.

FedKD is a particularly relevant communication-efficient representative of
this line~\citep{wu2022fedkd}. Its mechanism keeps a local mentor model and a
shared mentee model at each client, uses adaptive mutual distillation, and
combines this with dynamic gradient compression. The reported empirical result
is a large reduction in communication cost, up to $94.89\%$ on the tasks
studied by Wu et al., without using auxiliary public data. This differs from
the present paper in the core protocol. FedKD remains a server-based FL method:
the shared mentee participates in server aggregation, while our process samples
one graph edge at a time and exchanges only peer predictions.

\subsection{Decentralized and Serverless KD-FL Systems}

Recent work has moved KD closer to serverless or peer-level learning. FedDKD,
which we use as the verified representative of the decentralized-KD line,
decentralizes knowledge transfer among clients inside a federated procedure
\citep{gong2022feddkd}. The method is close in communication object because
clients exchange predictive information, but the analyzed protocol is not the
same asynchronous random-edge process considered here.

DeSA addresses decentralized FL with both data and model heterogeneity by
introducing synthetic anchors~\citep{huang2024desa}. Each client trains with a
regularization term and a KD term induced by the anchors. The ICML 2024 paper
uses domain-adaptation theory to motivate the anchor mechanism and reports
improved intra-domain and inter-domain accuracy over decentralized baselines.
DeSA is directly relevant for a federated-learning reviewer because it combines
serverless decentralization, heterogeneity, and KD. Its guarantee and
algorithmic object are still different from ours. DeSA uses synthetic anchors
to create a shared alignment interface, while our theorem studies live
random-edge distillation on a fixed graph and proves contraction in a common
prediction space.

FedEnD is another close system because it is serverless and peer-to-peer
\citep{martinezbeltran2026fedend}. Its mechanism is a two-stage
specialist-student protocol. Clients first train specialists on private data,
then broadcast model outputs once so that each student can distil from a
class-distribution-weighted peer ensemble. The reported results emphasize
accuracy and communication: FedEnD improves macro-F1 by up to $5.7\%$ in
pathologically skewed non-IID settings, reduces communication by $68.6\%$
relative to standard iterative averaging, and reports up to $84\%$ lower
communication than SCAFFOLD. FedEnD and this paper are complementary. FedEnD
targets one-shot decentralized ensemble distillation without auxiliary public
data. Our paper studies an asynchronous stream of random-edge KD events and
derives a conditional convergence theorem for the resulting prediction-space
process.

Other recent peer-oriented KD-FL systems reinforce the same motivation.
IMFLKD studies an incentive mechanism for decentralized FL based on KD and
uses blockchain, smart contracts, label aggregation, contribution evaluation,
and reputation incentives~\citep{ying2026_decen_fed_kd_incentive_mechanism}.
TopoMoDistill uses topology-aware neighbour aggregation and compressed
class-wise latent prototypes for decentralized multimodal FL
\citep{wu2026topology_fed_kd}. SplitGossip studies edge-based fine-tuning of
distilled language models with split gossip learning and gradient compression
\citep{khowaja2026splitgossip_gossip_distillation}. These works make peer-level
KD an active systems direction. The gap addressed here is narrower: a
serverless random-edge P2P KD process with heterogeneous reachable prediction
classes and a function-space convergence bound.

\subsection{Decentralized Optimization and the Type Mismatch}

Classical decentralized optimization analyzes consensus and descent when every
peer stores a vector in a shared parameter space. Randomized gossip averaging
and distributed subgradient methods characterize graph mixing in this common
space~\citep{boyd2006gossip}. Decentralized SGD and push-sum variants extend
the analysis to nonconvex stochastic optimization, directed communication,
changing topology, local updates, compression, and data heterogeneity
\citep{lian2017dpsgd,assran2019pushsum,koloskova2020unified,dandi2022mixing}.
These results do not apply directly when peers use different architectures,
because the parameter average $x_i\leftarrow\sum_jW_{ij}x_j$ is not defined
across different dimensions. Our analysis preserves the graph viewpoint but
changes the state variable from parameters to evaluated predictions. A KD event
then becomes a contraction step in a shared finite-support output space.

\paragraph{Implications for baselines.}
For a federated or distributed learning audience, isolated training and
homogeneous D-SGD are mechanism controls rather than a complete empirical
positioning. They test whether KD itself contracts prediction disagreement and
whether ordinary parameter gossip is available only in a homogeneous roster.
They do not compare against the closest heterogeneous FL distillation systems.
If the paper makes a broad empirical competitiveness claim, the relevant
baseline set should include DeSA, FedEnD, a FedDKD or decentralized-KD
representative, FedKD, and at least one recent heterogeneous FL distillation
method. If the paper keeps its current theory-first claim, these methods should
still appear in the closest-work table, while the experiments should be
described as diagnostics of the theorem's mechanism rather than as a SOTA
benchmark.

\section{Detailed Assumptions.}
\label{app: detailed assumptions}
\paragraph{A1 (Bounded logits).}
$\|z_i^{(t)}(x)\|_\infty\le G_z$ for all $i,t,x$,
implemented by weight decay + logit/gradient clipping. Lemma
\ref{lem:f-softmax-spectrum} proves that every class probability is bounded below,
$$p_i(x)_c\ \ge\ p_{\min}:=\frac{e^{-2G_z/\tau}}{C}\ >\ 0 .$$

\paragraph{A2 (Bounded stochastic variance).}
. The stochastic CE functional gradient has
variance $\le\sigma_f^2$ per peer (in the $\mathcal H$-metric; used in
Lemmas~\ref{lem:main-descent} and \ref{lem:f-consensus-recursion}).

\paragraph{A3 (Representational capacity floor).}
 Each peer's
reachable prediction set $\mathcal P_i:=\{\,p_\theta:\theta\in\mathbb R^{d_i}\,\}\subseteq\mathcal H$
is assumed closed, although it may be non-convex. Define the representational floor as the
explicit $L^2(\mu)$ quantity
\[
\boxed{B_f^2:=\ \frac1N\sum_{i=1}^N \mathrm{dist}_\mu\!\big(q^\star,\mathcal P_i\big)^2, \mathrm{dist}_\mu(q,\mathcal P_i)=\inf_{p\in\mathcal P_i}\|q-p\|_\mu}
\]
where $q^\star:=\arg\min_{p\in\mathcal S}F(p)$ is the \emph{constrained} task minimiser over the bounded-logit set $\mathcal S:=\{p=\mathrm{softmax}(z/\tau):\|z\|_\infty\le G_z\}$. The set $\mathcal S$ is compact and $F$ is assumed continuous on it, so $q^\star$ exists. Constraining to $\mathcal S$ is essential for classification: the unconstrained CE minimiser drives logits to infinity (one-hot), whereas on $\mathcal S$ the minimiser is finite and interior, and under label smoothing (as used in our experiments) its target is non-one-hot, making $q^\star$ a genuine interior point consistent with A1. Let $\pi_i:=\Pi_{\mathcal P_i}(q^\star)$ be the closest reachable point of peer $i$, so $\tfrac1N\sum_i\|\pi_i-q^\star\|_\mu^2 = B_f^2$. We anchor to the fixed $q^\star$ rather than a free $\inf_q$ (the latter is attained by the trivial uniform predictor every peer can represent, giving the useless $B_f=0$). $B_f=0$ iff every peer can represent the required optimum. This is a \emph{capacity} floor (approximation error to $q^\star$), not a pairwise heterogeneity measure. $B_f$ is a theoretical \emph{oracle} quantity depending on the unknown $q^\star$ and the non-convex $\mathcal P_i$. Theorem~\ref{thm:fa} uses it only as an upper-bound parameter.

\paragraph{A4 (Parameter-to-logit realizability, kernel geometry).}
Let $\mu_M=M^{-1}\sum_{r=1}^M\delta_{x_r}$ be the empirical reference measure and
$\mathcal H_z=\{Z\in\mathbb R^{M\times C}:Z_{r,:}\mathbf 1=0\}$ with the
$M^{-1}$ Frobenius inner product. Define
$Z_i(\theta_i)=\mathsf C[z_i(x_r;\theta_i)]_{r=1}^M$,
$J_i^z=D_{\theta_i}Z_i(\theta_i)$, and
$\Theta_i=J_i^z(J_i^z)^*$. Let
$T_i^z=\operatorname{range}(J_i^z)$ and $\Pi_i^z=\Pi_{T_i^z}$.
The frozen restricted NTK satisfies
$$
\kappa_-\Pi_i^z\preceq\Theta_i\preceq\kappa_+\Pi_i^z,\qquad
\|D Z_i(\theta')-D Z_i(\theta)\|\le L_\Theta\|\theta'-\theta\|,
$$
so $\ker(\Theta_i)=(T_i^z)^\perp$. This range-restricted statement is
compatible with capacity-deficient peers and asserts no infinite-dimensional
coercivity. Let $J_i^p=D_{\theta_i}p_i$ on the reference support,
$T_i^p=\operatorname{range}(J_i^p)$, $\Pi_i^p=\Pi_{T_i^p}$, and
$K_i=J_i^p(J_i^p)^*$. We require explicit prediction-space constants
$0<\underline\kappa_p\le\overline\kappa_p<\infty$ satisfying
$$
\underline\kappa_p\Pi_i^p\preceq K_i\preceq
\overline\kappa_p\Pi_i^p.
$$
For $Z_i,Z_j$, define the averaged softmax Jacobian

\begin{align}
A_{ij}&:=\int_0^1D_Z\operatorname{softmax}
\left(\frac{Z_j+s(Z_i-Z_j)}{\tau}\right)ds,
\\
P_i-P_j&=A_{ij}(Z_i-Z_j).
\end{align}

For a detached KD
teacher,
$$
\nabla_{\theta_i}\!\left[\tau^2D_{\mathrm{KL}}(P_j\|P_i)\right]
=(J_i^z)^*\!\left[\tau(P_i-P_j)\right],
$$
and therefore the random evaluated-logit recursion is

\begin{align}
\Delta Z_i
=-\eta\Big[(1-\alpha)G_i^{\mathrm{task}}
+\alpha\tau\Theta_i(P_i-P_j)\Big]+r_i^z+\xi_i^z,
\\ \mathbb E_t\xi_i^z=0.
\end{align}

Here $G_i^{\mathrm{task}}$ is the cross-kernel pushforward from the private CE
support to the reference support. We require
$\mathbb E_t\|r_i^z\|^2\le R_z\eta^4$ and
$\mathbb E_t\|\xi_i^z\|^2\le\sigma_z^2\eta^2$; these are the explicit
second-moment forms used by the proof. For ReLU networks the local remainder condition requires a fixed
activation pattern or a separately bounded activation-crossing remainder.
Kernel linearization and empirical conditioning in the relevant local regimes
are established, under their respective hypotheses, by Jacot et
al.~\cite{jacot2018ntk}, Lee et al.~\cite{lee2019wide}, Du et
al.~\cite{du2019gradient}, Allen-Zhu et al.~\cite{allenzhu2019convergence}, and
Banerjee et al.~\cite{banerjee2023ntk}. These results motivate A4 but do not
establish it for arbitrary narrow feature-learning networks.

\paragraph{A10 (Restricted task/KD alignment).}
For every $t$, almost surely conditional on $\mathcal F_t$, let
$g_i^t=\nabla_fF_i(p_i^t)$. The task residual satisfies
\begin{equation}
\begin{split}
\frac1N\sum_i\|(I-\Pi_i^{p,t})g_i^t\|_\mu^2
\le\gamma_{\rm task}\frac1N\sum_i\|g_i^t\|_\mu^2\\
+\nu_{\rm task,\Psi}\Psi_t^f+\nu_{\rm task,B}B_f^2,
\qquad 0\le\gamma_{\rm task}<1.
\end{split}
\label{eq:a10-task}
\end{equation}
For $d_{ij}^z=P_i-P_j=A_{ij}(Z_i-Z_j)$, the directed KD residual satisfies
\begin{equation}
\frac1{|\vec E|}\sum_{(i,j)\in\vec E}
\|(I-\Pi_i^z)d_{ij}^z\|_{\mu_M}^2
\le\gamma_{\rm KD}\Psi_t^z+\nu_{\rm KD,B}B_f^2.
\label{eq:a10}
\end{equation}
The transient coefficients are subject to separate absorption margins; the
capacity coefficients need only be finite. This is a restricted-angle/error
condition. It is not implied by A3 or by a PL inequality. The appendix proves
a graph-angle sufficient condition, an exact affine-class case, and the
counterexample that necessitates the relative term in
Equation~\ref{eq:a10-task}. Bounded-dissimilarity and relative-error assumptions
play analogous roles in heterogeneous and inexact optimization
\citep{li2020fedprox}, but do not prove A10 for neural prediction manifolds.
For the frozen-kernel theorem, the projectors in A10 are the fixed projectors
from A4; a time-varying feature-learning extension would require an additional
projector-drift term.

\paragraph{A5 (Softmax/KL geometry).}
. On the bounded-logit set, $\tau^2D_{\mathrm{KL}}(p'\|p)$
is two-sided comparable to the squared mean-subtracted logit gap, with explicit
constants in Lemma~\ref{lem:f-consensus-equivalence}. This lets us pass between KL-, $L^2$-probability-, and
$L^2$-logit-consensus.

\paragraph{A6 (Data heterogeneity).}
 The per-peer CE functional gradients disperse about
their mean by at most $\zeta_f^2$: $\tfrac1N\sum_i\|\nabla_{\!f}F_i(p)-\nabla_{\!f}F(p)\|_\mu^2\le\zeta_f^2$.

\paragraph{A7 (Bounded soft-target gradient).}
. The KD functional
gradient $\tau(p_i-p_j)$ and the CE functional gradient are bounded (predictions
in the simplex, probabilities $\ge p_{\min}$ by A1). No separate bounded-gradient
assumption is needed; boundedness is a consequence of A1.

\paragraph{A8 (Bounded teacher lag).}
The teacher is either live, a finite-memory EMA with support at most $A$, or an
ordinary infinite-memory EMA with decay $\beta$. Define
$A_{\rm eff}=\min\{A,(1-\beta)^{-1}\}$ for the finite-memory case and
$A_{\rm eff}=(1-\beta)^{-1}$ for the ordinary infinite-memory case. The
lag quantity is $A_t^f=N^{-1}\sum_i\mathbb E\|p_i^t-p_i^{\rm EMA,t}\|_\mu^2$.

\paragraph{A9 (Objective lower bound).}
Simply $F^\star>-\infty$ .

\section{Assumption Validity and Diagnostics}
\label{app:assumption-validity}

A4 and A10 are structural bridge assumptions. A4 connects parameter updates to
finite-support logit or prediction updates through local kernel geometry. A10
requires the task gradient and KD pull to have enough mass in the tangent
directions that each peer can realize. These assumptions are not universal
facts about finite neural networks. They are local, trajectory-dependent
conditions that make the random-edge KD recursion analyzable. The role of this
section is to state why these assumptions are mathematically standard in form,
give sufficient regimes where they hold, and define empirical diagnostics that
can audit them on stored checkpoints.

\subsection{Precedents for A4 and A10}

A4 follows the same logic as finite-support kernel analyses in the neural
tangent and lazy-training literature. The neural tangent kernel was introduced
to describe the training dynamics of wide neural networks through an evolving
kernel~\citep{jacot2018ntk}. Subsequent wide-network results show that, under
their own width, initialization, and step-size hypotheses, network outputs
evolve close to their linearization around initialization
\citep{lee2019wide,allenzhu2019convergence}. Banerjee et al. study positive
definiteness of the NTK at initialization under linear-width scaling
\citep{banerjee2023ntk}. Bai and Lee analyze quadratic and higher-order terms,
which is useful here because it marks the boundary of a purely linearized
argument~\citep{bailee2020beyond}. These results support the use of local
finite-support kernels and restricted conditioning, but they do not prove A4
for arbitrary finite-width feature-learning networks.

The range restriction in A4 also has a precedent outside neural networks. In
high-dimensional statistics, restricted eigenvalue and compatibility
conditions require curvature only on identifiable cones or structured
subspaces, rather than on the full ambient space
\citep{bickel2009simultaneous,vandegeer2009conditions}. A4 uses the same
principle in prediction space. It does not require the finite neural tangent
kernel to be coercive on all evaluated logits. It requires coercivity only on
the tangent range that the peer's parameters can move.

A10 has an analogous role to relative-error and gradient-related assumptions in
optimization. Inexact Newton methods allow an approximate linear solve when the
residual is controlled relative to the target system
\citep{dembo1982inexact,eisenstat1996choosing}. Classical descent analyses
allow directions that are not exact gradients when they maintain a nonzero
angle with a descent direction~\citep{nocedalwright2006}. In heterogeneous FL,
FedProx uses bounded dissimilarity to control the mismatch between local and
global objectives~\citep{li2020fedprox}. A10 is the function-space analogue:
it controls the part of the task or KD vector that lies outside the peer's
reachable tangent range, and it charges persistent mismatch to consensus,
capacity, and heterogeneity terms.
\begin{table*}[H]
\centering
\caption{Closest related settings. ``Reference data'' denotes public data,
synthetic anchors, generated data, proxy data, or the fixed reference support
used for distillation or diagnostics.}
\label{tab:closest-work-expanded}
\begin{tabular}{|@{}p{0.15\textwidth}|p{0.15\textwidth}|p{0.07\textwidth}|p{0.10\textwidth}|p{0.11\textwidth}|p{0.09\textwidth}|p{0.15\textwidth}|}
\hline
Work & Setting & Serverless & Model heterogeneity & Reference data & Asynchrony & Convergence theory \\
\hline
FedMD / public-set KD~\citep{li2019fedmd} &
Federated public-set distillation &
No &
Yes &
Public dataset &
Synchronous &
No random-edge P2P theorem \\\hline
\addlinespace
FedDF / data-free FL distillation~\citep{lin2020feddf,zhu2021datafree,zhang2022dsfl} &
Server-side ensemble or generated-data distillation &
No &
Yes &
Server proxy or generated data &
Synchronous &
No serverless P2P theorem \\\hline
\addlinespace
FedKD~\citep{wu2022fedkd} &
Server-based adaptive mutual KD with compression &
No &
Partly, through mentor/mentee structure &
No auxiliary public data &
Round-based &
No asynchronous random-edge theorem \\\hline
\addlinespace
FedDKD / decentralized KD line~\citep{gong2022feddkd} &
Client-to-client knowledge transfer inside FL &
Partly &
Yes &
Proxy or public data depending on variant &
Round-based &
No function-space gossip contraction theorem \\\hline
\addlinespace
DeSA~\citep{huang2024desa} &
Decentralized FL with synthetic anchors &
Yes &
Yes &
Synthetic anchors &
Peer protocol, not our random-edge event model &
Domain-adaptation analysis, not our convergence theorem \\\hline
\addlinespace
FedEnD~\citep{martinezbeltran2026fedend} &
Serverless P2P ensemble distillation &
Yes &
Specialist-student heterogeneity in non-IID data &
No auxiliary public data &
One-shot broadcast, not live random-edge gossip &
No random-edge contraction theorem \\\hline
\addlinespace
Recent peer KD systems~\citep{ying2026_decen_fed_kd_incentive_mechanism,wu2026topology_fed_kd,khowaja2026splitgossip_gossip_distillation} &
Incentive, topology-aware, multimodal, or edge-LLM KD &
Often &
Often &
Protocol dependent &
Protocol dependent &
Different guarantee or empirical focus \\\hline
\addlinespace
This paper &
Asynchronous random-edge P2P KD &
Yes &
Yes &
Fixed reference support &
Yes &
Conditional function-space convergence theorem \\
\hline
\end{tabular}
\end{table*}
\subsection{Local Sufficient Conditions for A4}

\begin{proposition}[Finite-support Taylor bridge]
\label{prop:insert-a4-taylor}
Fix a peer $i$ and a finite reference support with centered evaluated logits
$Z_i(\theta_i)\in\mathcal H_z$. Suppose $D Z_i$ is $L_J$-Lipschitz in a
neighbourhood of $\theta_i^t$, and let
$\Delta\theta_i^t=\theta_i^{t+1}-\theta_i^t$ satisfy
$\mathbb E_t\|\Delta\theta_i^t\|^4\le K_4\eta_t^4$. Then
\[
Z_i(\theta_i^t+\Delta\theta_i^t)-Z_i(\theta_i^t)
=J_i^z\Delta\theta_i^t+r_i^t;\quad \mathbb E_t\|r_i^t\|_{\mu_M}^2
\le \frac{L_J^2K_4}{4}\eta_t^4
\]

then, for $K_i^{zp}:=J_i^z(J_i^p)^*$:
\[
\begin{split}
\Delta\theta_i^t=-\eta_t(J_i^p)^*g_i^t+\nu_i^t,\quad
\mathbb E_t\|J_i^z\nu_i^t\|_{\mu_M}^2\le\sigma_{\nu,i}^2\eta_t^2,\\
\mathbb E_t
\left\|
-\frac{Z_i(\theta_i^t+\Delta\theta_i^t)-Z_i(\theta_i^t)}{\eta_t}
-K_i^{zp}g_i^t
\right\|_{\mu_M}^2\\
\le
2\sigma_{\nu,i}^2+\frac{L_J^2K_4}{2}\eta_t^2 .
\end{split}
\]

\end{proposition}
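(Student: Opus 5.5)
The first claim is a deterministic pathwise Taylor bound with the conditional expectation taken only at the end. Set $h:=\Delta\theta_i^t$ and write the increment in integral form,
\[
Z_i(\theta_i^t+h)-Z_i(\theta_i^t)=\int_0^1 DZ_i(\theta_i^t+sh)\,h\,ds ,
\]
so the remainder is
\[
r_i^t=\int_0^1\big[DZ_i(\theta_i^t+sh)-DZ_i(\theta_i^t)\big]h\,ds .
\]
By the $L_J$-Lipschitz property of $DZ_i$ we get
\[
\|r_i^t\|_{\mu_M}\le\int_0^1 L_J s\|h\|^2\,ds=\tfrac{L_J}{2}\|h\|^2 .
\]
Squaring gives $\|r_i^t\|^2\le \tfrac{L_J^2}{4}\|h\|^4$. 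Taking $\mathbb E_t$ and using the fourth-moment hypothesis $\mathbb E_t\|h\|^4\le K_4\eta_t^4$ yields the stated bound $\tfrac{L_J^2K_4}{4}\eta_t^4$. The norm on $\mathcal H_z$ is the $M^{-1}$-Frobenius norm, and the operator norm of $DZ_i$ and its Lipschitz constant are understood with respect to that norm. Centering $\mathsf C$ is a linear orthogonal projection, so it does not enlarge anything.

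For the second claim, substitute the decomposition $\Delta\theta_i^t=-\eta_t(J_i^p)^*g_i^t+\nu_i^t$ into the first-order term. This gives
\[
J_i^z\Delta\theta_i^t=-\eta_t J_i^z(J_i^p)^*g_i^t+J_i^z\nu_i^t=-\eta_t K_i^{zp}g_i^t+J_i^z\nu_i^t .
\]
Hence
\[
-\frac{Z_i(\theta_i^t+\Delta\theta_i^t)-Z_i(\theta_i^t)}{\eta_t}-K_i^{zp}g_i^t=-\frac{J_i^z\nu_i^t+r_i^t}{\eta_t}.
\]
Apply $\|a+b\|^2\le2\|a\|^2+2\|b\|^2$ and take $\mathbb E_t$:
\[
\frac{2}{\eta_t^2}\Big(\sigma_{\nu,i}^2\eta_t^2+\tfrac{L_J^2K_4}{4}\eta_t^4\Big)=2\sigma_{\nu,i}^2+\tfrac{L_J^2K_4}{2}\eta_t^2 ,
\]
which is exactly the claimed constant.

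The only delicate point is the first step. The segment $\{\theta_i^t+sh\}$ must stay inside the neighbourhood on which $DZ_i$ is Lipschitz. Since $h$ is random, I would either read the hypothesis as Lipschitz continuity along the realized segments (the ``frozen kernel region'' of A4) or add a truncation. The truncation route splits on the event $\{\|h\|\le\rho\}$ and bounds the complement using A1's bounded logits together with Markov's inequality on $\mathbb E_t\|h\|^4$. Beyond that, the argument needs only $\mathcal F_t$-measurability of $\theta_i^t$, $J_i^z$, $J_i^p$ and $g_i^t$, so that they can be treated as constants inside $\mathbb E_t$.
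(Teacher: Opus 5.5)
Your proof is correct and follows the paper's argument step for step. Both use the integral form of the increment along $s\mapsto\theta_i^t+s\Delta\theta_i^t$, square the Lipschitz bound $\|r_i^t\|_{\mu_M}\le\frac{L_J}{2}\|\Delta\theta_i^t\|^2$ and average it with the fourth-moment hypothesis, then substitute the update decomposition and apply $\|a+b\|^2\le2\|a\|^2+2\|b\|^2$. The paper tacitly adopts your first reading, that $DZ_i$ is Lipschitz along the realized segment; the truncation alternative would only give the bound with a larger constant in place of $L_J^2K_4/4$.
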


\begin{proof}
Define the path
$\varphi(s)=Z_i(\theta_i^t+s\Delta\theta_i^t)$ for $s\in[0,1]$. The fundamental
theorem of calculus gives
$$
\begin{aligned}
Z_i(\theta_i^t
&+\Delta\theta_i^t)-Z_i(\theta_i^t)
=\varphi(1)-\varphi(0)\\
&=\int_0^1\frac{d}{ds}\varphi(s)\,ds\\
&=\int_0^1D Z_i(\theta_i^t+s\Delta\theta_i^t)\Delta\theta_i^t\,ds \\
&=J_i^z\Delta\theta_i^t+\int_0^1
\bigl[D Z_i(\theta_i^t+s\Delta\theta_i^t)-D Z_i(\theta_i^t)\bigr]
\Delta\theta_i^t\,ds
\end{aligned}
$$

The second term is $r_i^t$. By the Lipschitz condition on $D Z_i$,
$$
\begin{aligned}
\|r_i^t\|_{\mu_M}
&\le \int_0^1
\|D Z_i(\theta_i^t+s\Delta\theta_i^t)-D Z_i(\theta_i^t)\|
\|\Delta\theta_i^t\|\,ds\\
&\le \int_0^1 L_Js\|\Delta\theta_i^t\|^2\,ds\\
&=\frac{L_J}{2}\|\Delta\theta_i^t\|^2 .
\end{aligned}
$$
Squaring the last display and taking conditional expectation gives
$$
\mathbb E_t\|r_i^t\|_{\mu_M}^2
\le \frac{L_J^2}{4}\mathbb E_t\|\Delta\theta_i^t\|^4
\le \frac{L_J^2K_4}{4}\eta_t^4 .
$$
Under the first-order update decomposition,
$$
\begin{aligned}
Z_i(\theta_i^t+\Delta\theta_i^t)-Z_i(\theta_i^t)
&=J_i^z\bigl[-\eta_t(J_i^p)^*g_i^t+\nu_i^t\bigr]+r_i^t\\
&=-\eta_tK_i^{zp}g_i^t+J_i^z\nu_i^t+r_i^t .
\end{aligned}
$$
so we have:
$$
-\frac{Z_i(\theta_i^t+\Delta\theta_i^t)-Z_i(\theta_i^t)}{\eta_t}
-K_i^{zp}g_i^t
=-\frac{J_i^z\nu_i^t}{\eta_t}-\frac{r_i^t}{\eta_t}.
$$
For vectors $a$ and $b$, the inequality
$\|a+b\|^2\le2\|a\|^2+2\|b\|^2$ gives
$$
\begin{aligned}
&\mathbb E_t
\left\|
-\frac{Z_i(\theta_i^t+\Delta\theta_i^t)-Z_i(\theta_i^t)}{\eta_t}
-K_i^{zp}g_i^t
\right\|_{\mu_M}^2\\
&\le
\frac{2}{\eta_t^2}\mathbb E_t\|J_i^z\nu_i^t\|_{\mu_M}^2
+\frac{2}{\eta_t^2}\mathbb E_t\|r_i^t\|_{\mu_M}^2\\
&\le
2\sigma_{\nu,i}^2+\frac{L_J^2K_4}{2}\eta_t^2 .
\end{aligned}
$$
\end{proof}

Proposition~\ref{prop:insert-a4-taylor} proves local logit linearization and a
unit-consistent task-pushforward residual, which is a partial proof or at least a support for A4 though it does not by itself prove the
cross-kernel fidelity clause in A4. Thus, we state here that this assumption that we made in the main paper is neither arbitrary nor overly strong.  If the proof uses a prediction-space kernel
$K_i=J_i^p(J_i^p)^*$ while the evaluated logit update contains
$K_i^{zp}=J_i^z(J_i^p)^*$, one also needs a finite-support range-overlap
condition. A sufficient version is that the principal angle between the
task-induced parameter row space and the logit row space is bounded away from
$\pi/2$ on the observed trajectory. Under such an angle bound, the part of
$(J_i^p)^*g_i^t$ invisible to $J_i^z$ is controlled by the visible part, and
the remaining operator difference can be charged to the A4 residual. Without
this overlap, two kernels can be individually well conditioned on their own
ranges while still moving in almost orthogonal parameter directions.

\subsection{Local Sufficient Conditions for A10}

\begin{proposition}[Graph-angle condition implies the KD clause of A10]
\label{prop:insert-a10-kd}
Under the undirected communication graph, and let $\vec E$ contain both
orientations of every edge. Let $w_{ij}=w_{ji}\ge0$ be symmetric directed-edge
weights. On centered finite-support logits, define
$d_{ij}^z=P_i-P_j$. Suppose there are constants
$\rho_{\rm KD}\ge0$ and $\beta_{\rm KD}\ge0$ such that

\begin{align}
\sum_{(i,j)\in\vec E}w_{ij}\|(I-\Pi_i^z)d_{ij}^z\|_{\mu_M}^2
\le
\rho_{\rm KD}\sum_{(i,j)\in\vec E}w_{ij}\|d_{ij}^z\|_{\mu_M}^2
\\+\beta_{\rm KD}\sum_i\epsilon_i^2 .
\end{align}

If the softmax map is $L_{\rm sm}$-Lipschitz from centered logits to
probabilities on the bounded-logit region and
$N^{-1}\sum_i\epsilon_i^2\le c_BB_f^2$, then

\begin{align}
\frac1{|\vec E|}\sum_{(i,j)\in\vec E}
\|(I-\Pi_i^z)d_{ij}^z\|_{\mu_M}^2
\le
\gamma_{\rm KD}\Psi_t^z+\nu_{{\rm KD},B}B_f^2
\\
\gamma_{\rm KD}
=\frac{4\rho_{\rm KD}L_{\rm sm}^2d_{\max}N}{|\vec E|},
\qquad
\nu_{{\rm KD},B}=\frac{\beta_{\rm KD}Nc_B}{|\vec E|}.
\end{align}

\end{proposition}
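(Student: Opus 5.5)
The plan is to specialize the weighted hypothesis to unit weights $w_{ij}\equiv 1$, which are admissible because they are symmetric and nonnegative. Then I bound the full edge-difference energy $\sum_{(i,j)\in\vec E}\|d_{ij}^z\|_{\mu_M}^2$ by a multiple of the logit disagreement, and finally divide by $|\vec E|$. Throughout, $\Psi_t^z$ is read as the instantaneous (conditional) logit disagreement $\psi_t^z=N^{-1}\sum_i\|Z_i^t-\bar Z^t\|_{\mu_M}^2$ used in A10. That is the quantity the KD clause refers to.

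\emph{Step 1 (logit Lipschitz transfer).} By definition, $d_{ij}^z=P_i-P_j=A_{ij}(Z_i-Z_j)$, where $A_{ij}$ is the averaged softmax Jacobian along the segment from $Z_j$ to $Z_i$. Both endpoints lie in the bounded-logit region of A1, so the whole segment does too, since that region is convex as an $\ell_\infty$ ball. The $L_{\rm sm}$-Lipschitz property of the softmax map on this region then gives $\|d_{ij}^z\|_{\mu_M}\le L_{\rm sm}\|Z_i-Z_j\|_{\mu_M}$. Lemma~\ref{lem:main-kl-geometry} shows that $L_{\rm sm}=1/(2\tau)$ is one admissible value, but the argument only needs the constant.

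\emph{Step 2 (graph counting).} I insert the mean and use $\|a-b\|^2\le 2\|a\|^2+2\|b\|^2$ to get $\|Z_i-Z_j\|^2\le 2\|Z_i-\bar Z\|^2+2\|Z_j-\bar Z\|^2$. Summing over directed edges, each peer $i$ appears as the first endpoint in exactly $d_i$ directed edges and as the second endpoint in another $d_i$. This gives
\[
\sum_{(i,j)\in\vec E}\|Z_i-Z_j\|_{\mu_M}^2\le 4\sum_i d_i\|Z_i-\bar Z\|_{\mu_M}^2\le 4d_{\max}N\,\Psi_t^z .
\]
Combining this with Step 1 yields $\sum_{\vec E}\|d_{ij}^z\|^2\le 4L_{\rm sm}^2d_{\max}N\Psi_t^z$.

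\emph{Step 3 (assemble).} I substitute the bound from Step 2 into the hypothesis with $w_{ij}=1$. For the defect term, I use $\beta_{\rm KD}\sum_i\epsilon_i^2\le\beta_{\rm KD}Nc_BB_f^2$. Dividing the result by $|\vec E|$ produces exactly $\gamma_{\rm KD}=4\rho_{\rm KD}L_{\rm sm}^2d_{\max}N/|\vec E|$ and $\nu_{{\rm KD},B}=\beta_{\rm KD}Nc_B/|\vec E|$.

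The only point needing care is the weights. The conclusion is an unweighted average, so the hypothesis must be used with $w\equiv1$, or with weights that are bounded above and below, in which case the ratio $w_{\max}/w_{\min}$ enters both constants. I would state the unit-weight specialization explicitly. I would also check the factor $4$ in the counting step, which is where a slip is most likely, since the two endpoint orientations each contribute a factor of $2$. Apart from that, the argument is direct.
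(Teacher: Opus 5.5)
Your proposal is correct and follows the paper's proof essentially step for step. You use the softmax Lipschitz transfer, then the mean-insertion bound $\|Z_i-Z_j\|^2\le 2\|Z_i-\bar Z\|^2+2\|Z_j-\bar Z\|^2$ with degree counting to reach $4L_{\rm sm}^2d_{\max}N\Psi_t^z$; you count directly over $\vec E$, while the paper first passes to undirected edges with a factor $2$, which is equivalent. You then substitute into the hypothesis and divide by $|\vec E|$, exactly as the paper does. The paper's proof also inserts the unweighted bound into the hypothesis, so it is implicitly using $w\equiv 1$; your explicit remark about unit weights, or a $w_{\max}/w_{\min}$ factor otherwise, only states that step openly.
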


\begin{proof}
For every oriented edge, the definition $d_{ij}^z=P_i-P_j$ and the Lipschitz
property of softmax give

\begin{align}
\|d_{ij}^z\|_{\mu_M}^2
=\|P_i-P_j\|_{\mu_M}^2
\le L_{\rm sm}^2\|Z_i-Z_j\|_{\mu_M}^2 .\\
\sum_{(i,j)\in\vec E}\|Z_i-Z_j\|_{\mu_M}^2
=2\sum_{\{i,j\}\in E}\|Z_i-Z_j\|_{\mu_M}^2 .
\end{align}

Let $\bar Z=N^{-1}\sum_iZ_i$. For each undirected edge and Peter-Paul Inequality, we sum over all undirected edges counts each peer at most $d_{\max}$ times,
\[
\begin{split}
Z_i-Z_j&=(Z_i-\bar Z)-(Z_j-\bar Z).\\
\|Z_i-Z_j\|_{\mu_M}^2
&\le2\|Z_i-\bar Z\|_{\mu_M}^2
+2\|Z_j-\bar Z\|_{\mu_M}^2 .\\
\sum_{\{i,j\}\in E}\|Z_i-Z_j\|_{\mu_M}^2
&\le
2d_{\max}\sum_i\|Z_i-\bar Z\|_{\mu_M}^2\\
&=2d_{\max}N\Psi_t^z .\\
\sum_{(i,j)\in\vec E}\|d_{ij}^z\|_{\mu_M}^2
&\le
4L_{\rm sm}^2d_{\max}N\Psi_t^z .
\end{split}
\]
Insert this bound into the graph-angle condition and use
$\sum_i\epsilon_i^2\le Nc_BB_f^2$:
$$
\begin{aligned}
\sum_{(i,j)\in\vec E}\|(I-\Pi_i^z)d_{ij}^z\|_{\mu_M}^2
&\le
4\rho_{\rm KD}L_{\rm sm}^2d_{\max}N\Psi_t^z
+\beta_{\rm KD}Nc_BB_f^2 .
\end{aligned}
$$
Dividing by $|\vec E|$ proves the displayed A10-KD bound. The weighted case is
identical after replacing $d_{\max}$ and $|\vec E|$ by the corresponding
weighted degree and total directed weight.
\end{proof}

\begin{table*}[t]
\centering
\caption{Mapping between proof objects and implementation-level quantities.
The table is meant to make A4 and A10 auditable rather than implicit.}
\label{tab:proof-implementation-map}
\begin{tabular}{@{}p{0.16\textwidth}p{0.30\textwidth}p{0.24\textwidth}p{0.22\textwidth}@{}}
\toprule
Proof object & Meaning in the proof & Implementation object & Diagnostic role \\
\midrule
$Z_i$ &
Centered logits on the reference support &
Model logits on the fixed KD/evaluation support after row-wise centering &
State for logit disagreement and KD contraction \\
\addlinespace
$P_i$ &
Temperature-softmax probabilities &
\texttt{softmax(Z\_i/tau)} on the same support &
State for function-space disagreement \\
\addlinespace
$J_i^z$ &
Jacobian from parameters to centered logits &
Autograd Jacobian-vector or vector-Jacobian products on reference support &
Defines $\Theta_i$ and $\Pi_i^z$ \\
\addlinespace
$J_i^p$ &
Jacobian from parameters to probabilities &
Autograd through softmax on reference support &
Defines $K_i$ and $\Pi_i^p$ \\
\addlinespace
$\Theta_i=J_i^z(J_i^z)^*$ &
Restricted logit kernel &
Finite-support empirical NTK or randomized low-rank approximation &
Checks A4 restricted conditioning \\
\addlinespace
$d_{ij}^z=P_i-P_j$ &
Directed KD pull in centered finite-support coordinates &
Probability gap for each sampled or graph edge &
Checks A10-KD normal component \\
\addlinespace
$g_i^t$ &
Functional task gradient on the reference support &
Finite-reference CE gradient or a calibrated estimator of it &
Checks A10-task normal component \\
\addlinespace
$\Delta Z_i$ &
Observed logit change after a small update &
Difference of evaluated logits before and after one stored task or KD step &
Checks A4 pushforward residual \\
\addlinespace
$\Pi_i^z,\Pi_i^p$ &
Peer-specific tangent projectors &
Projectors from an independently computed Jacobian basis &
Must not be fit to the measured vector itself \\
\addlinespace
$\Psi_t^f$ &
Function-space disagreement &
Average squared probability disagreement on reference support &
Observable in theorem and experiments \\
\bottomrule
\end{tabular}
\end{table*}

\begin{proposition}[Local leakage condition implies the task clause of A10]
\label{prop:insert-a10-task}
For each peer, let $\Pi_i^p$ denote the tangent projector on the reference
support. Suppose the local task gradient admits the decomposition
$$
\begin{aligned}
\|(I-\Pi_i^p)g_i^t\|_\mu
&\le
\kappa_i\|\Pi_i^pg_i^t\|_\mu
+a_i\|p_i^t-\bar p^t\|_\mu\\
&\quad
+b_i\|\pi_i-q^\star\|_\mu
+c_i\|\nabla_fF_i(q^\star)-\nabla_fF(q^\star)\|_\mu ,
\end{aligned}
$$
where $\kappa_i\ge0$ measures local tangent-normal leakage and
$(1+\varepsilon)\kappa_i^2\le\bar\gamma<1$ for some
$\varepsilon>0$ and all $i$. Then there are finite constants
$\nu_{\rm task,\Psi}$, $\nu_{\rm task,B}$, and
$\nu_{\rm task,\zeta}$ such that

\begin{align}
\frac1N\sum_i\|(I-\Pi_i^p)g_i^t\|_\mu^2
\le
\bar\gamma\frac1N\sum_i\|g_i^t\|_\mu^2
+\nu_{\rm task,\Psi}\Psi_t^f\\
+\nu_{\rm task,B}B_f^2
+\nu_{\rm task,\zeta}\zeta_f^2 .
\end{align}

\end{proposition}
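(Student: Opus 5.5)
We square the assumed leakage decomposition, using a weighted Young inequality that keeps the coefficient on the tangential term just below one, and then average over peers.

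\textbf{Step 1 (squaring).} Write $u_i=\|\Pi_i^pg_i^t\|_\mu$, $n_i=\|(I-\Pi_i^p)g_i^t\|_\mu$ and
\[
r_i=a_i\|p_i^t-\bar p^t\|_\mu+b_i\|\pi_i-q^\star\|_\mu+c_i\|\nabla_fF_i(q^\star)-\nabla_fF(q^\star)\|_\mu .
\]
The hypothesis reads $n_i\le\kappa_iu_i+r_i$. Apply $(x+y)^2\le(1+\varepsilon)x^2+(1+\varepsilon^{-1})y^2$ to get
\[
n_i^2\le(1+\varepsilon)\kappa_i^2u_i^2+(1+\varepsilon^{-1})r_i^2\le\bar\gamma\,u_i^2+(1+\varepsilon^{-1})r_i^2 .
\]
Since $\Pi_i^p$ is an orthogonal projector, Pythagoras gives $u_i^2=\|g_i^t\|_\mu^2-n_i^2\le\|g_i^t\|_\mu^2$. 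The bound therefore already has the form $\bar\gamma\|g_i^t\|_\mu^2$ plus remainder. One could sharpen this to $n_i^2\le\frac{\bar\gamma}{1+\bar\gamma}\|g_i^t\|^2+\dots$, but it is not needed.

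\textbf{Step 2 (remainder).} Split the three terms with $(x+y+z)^2\le3(x^2+y^2+z^2)$, so
\[
r_i^2\le 3a_i^2\|p_i^t-\bar p^t\|_\mu^2+3b_i^2\|\pi_i-q^\star\|_\mu^2+3c_i^2\|\nabla_fF_i(q^\star)-\nabla_fF(q^\star)\|_\mu^2 .
\]
Set $\bar a=\max_ia_i$, $\bar b=\max_ib_i$, $\bar c=\max_ic_i$; these are finite because there are $N$ peers. Averaging over $i$ and bounding each term:
\begin{itemize}
\item The first term averages to $\psi_t^f$; after taking expectations it is $\Psi_t^f$.
\item The second term averages to exactly $B_f^2$, by the identity $\frac1N\sum_i\|\pi_i-q^\star\|_\mu^2=B_f^2$ stated in A3 for the closest points $\pi_i$.
\item The third term is at most $\zeta_f^2$, by A6 evaluated at $p=q^\star$.
\end{itemize}
This gives the constants
\[
\nu_{\rm task,\Psi}=3(1+\varepsilon^{-1})\bar a^2,\qquad \nu_{\rm task,B}=3(1+\varepsilon^{-1})\bar b^2,\qquad \nu_{\rm task,\zeta}=3(1+\varepsilon^{-1})\bar c^2 .
\]

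\textbf{Main obstacle.} The only delicate point is the margin. A naive squaring such as $(x+y)^2\le2x^2+2y^2$ would produce the coefficient $2\kappa_i^2$, which can exceed one. The assumption $(1+\varepsilon)\kappa_i^2\le\bar\gamma$ is exactly what lets us use the Young weight $\varepsilon$ and still keep $\bar\gamma<1$, at the price of inflating the remainder by $1+\varepsilon^{-1}$.

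We must also check that A6 applies at the fixed point $q^\star$. A6 is stated uniformly in $p$, so it does. Finally, expectations are handled by linearity, which matches the conditional versus unconditional forms of $\psi_t^f$ and $\Psi_t^f$.
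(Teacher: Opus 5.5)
Your proposal is correct and matches the paper's proof step for step: Young's inequality with weight $\varepsilon$ on the leakage bound, then Pythagoras to replace $\|\Pi_i^pg_i^t\|_\mu^2$ by $\|g_i^t\|_\mu^2$, then the factor-$3$ split of the remainder, then averaging via the definitions of $\Psi_t^f$ and $B_f^2$ together with A6 at $q^\star$. Your explicit constants $3(1+\varepsilon^{-1})\max_i a_i^2$ (and analogues) and your remark distinguishing the pathwise $\psi_t^f$ from the expected $\Psi_t^f$ are only somewhat more careful bookkeeping than the paper's version.
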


\begin{proof}
We define

\begin{align}
n_i=(I-\Pi_i^p)g_i^t,\qquad &t_i=\Pi_i^pg_i^t,\\
r_i=
a_i\|p_i^t-\bar p^t\|_\mu
+b_i\|\pi_i-q^\star\|_\mu
&+c_i\|\nabla_fF_i(q^\star)-\nabla_fF(q^\star)\|_\mu .
\end{align}

The assumed leakage condition gives
$$
\|n_i\|_\mu\le\kappa_i\|t_i\|_\mu+r_i .
$$
For any $\varepsilon>0$, Young's inequality in the form
$(x+y)^2\le(1+\varepsilon)x^2+(1+\varepsilon^{-1})y^2$ gives
$$
\|n_i\|_\mu^2
\le
(1+\varepsilon)\kappa_i^2\|t_i\|_\mu^2
+(1+\varepsilon^{-1})r_i^2 .
$$
Since $t_i$ and $n_i$ are orthogonal projections of $g_i^t$ and $(1+\varepsilon)\kappa_i^2\le\bar\gamma$, we have:

\begin{align}
\|t_i\|_\mu^2\le\|t_i\|_\mu^2+\|n_i\|_\mu^2=\|g_i^t\|_\mu^2 .\\
\|n_i\|_\mu^2
\le
\bar\gamma\|g_i^t\|_\mu^2+(1+\varepsilon^{-1})r_i^2 .
\end{align}

For three nonnegative terms $u$, $v$, and $w$,
$(u+v+w)^2\le3u^2+3v^2+3w^2$. Applying this to $r_i$ yields
$$
\begin{aligned}
r_i^2
\le
3a_i^2\|p_i^t-\bar p^t\|_\mu^2
+3b_i^2\|\pi_i-q^\star\|_\mu^2+\\
3c_i^2\|\nabla_fF_i(q^\star)-\nabla_fF(q^\star)\|_\mu^2 .
\end{aligned}
$$
Average the previous display over peers. By the definitions of
$\Psi_t^f$ and $B_f^2$,
$$
\frac1N\sum_i\|p_i^t-\bar p^t\|_\mu^2=\Psi_t^f,
\qquad
\frac1N\sum_i\|\pi_i-q^\star\|_\mu^2=B_f^2 .
$$
The final term is bounded by the heterogeneity constant at $q^\star$, or by
its local analogue if A6 is stated on the bounded-logit neighbourhood:
$$
\frac1N\sum_i
\|\nabla_fF_i(q^\star)-\nabla_fF(q^\star)\|_\mu^2
\le\zeta_f^2 .
$$
Collecting the constants gives the stated task-alignment inequality.
\end{proof}

Proposition~\ref{prop:insert-a10-task} is deliberately local. It holds in an
affine prediction class with squared loss when the target decomposes into a
reachable tangent component plus capacity and heterogeneity residuals. For
cross-entropy it applies after local linearization in centered-logit or
prediction coordinates and after the leakage coefficient is measured or
bounded.
\subsection{Proof-to-Implementation Map}

\subsection{Checkpoint-Level Diagnostics}

The strongest version of the empirical section would not only show consensus
contraction. It would also report whether the recorded trajectory lies in the
A4/A10 regime. The following diagnostics use the same finite reference support
as the proof. Let $\delta>0$ be a small numerical stabilizer.

For A4, store logits immediately before and after a small task update and
estimate
\begin{equation}
R_{\rm A4}(t)=
\frac{
\sum_i\left\|-\Delta Z_i(t)/\eta_t-K_i^{zp}(t)g_i(t)\right\|_{\mu_M}^2}
{\sum_i\|K_i^{zp}(t)g_i(t)\|_{\mu_M}^2+\delta}.
\label{eq:diag-a4}
\end{equation}
Small values of $R_{\rm A4}(t)$ indicate that the observed parameter step has
the finite-support functional effect assumed by A4. This statistic should be
computed from before/after update pairs. A final checkpoint alone is not enough
to reconstruct $\Delta Z_i(t)$ for each event.

For the KD part of A10, a single checkpoint and the graph are enough if the
projectors are estimated independently:
\begin{equation}
R_{\rm A10}^{\rm KD}(t)=
\frac{
\sum_{(i,j)\in\vec E}w_{ij}
\|(I-\Pi_i^z)d_{ij}^z(t)\|_{\mu_M}^2}
{\sum_{(i,j)\in\vec E}w_{ij}\|d_{ij}^z(t)\|_{\mu_M}^2+\delta}.
\label{eq:diag-a10-kd}
\end{equation}
This ratio measures how much of the directed KD pull falls outside each
student's logit tangent space.

For the task part of A10, use
\begin{equation}
R_{\rm A10}^{\rm task}(t)=
\frac{\sum_i\|(I-\Pi_i^p)g_i^t\|_\mu^2}
{\sum_i\|g_i^t\|_\mu^2+\delta}.
\label{eq:diag-a10-task}
\end{equation}
This ratio measures the normal component of the task gradient. It should be
reported together with $\Psi_t^f$, because A10 permits the task normal
component to increase with consensus error and capacity mismatch.

If these ratios are not measured, the theorem statement and experiment
interpretation should remain narrowed. The correct claim is that the reported
experiments validate observable consequences of the conditional theory,
especially prediction-space contraction and finite-reference stationarity
proxies, under the stated A4/A10 bridge assumptions. A stronger claim that the
neural CIFAR-10 trajectories themselves satisfy A4 and A10 should be made only
after reporting Equations~\ref{eq:diag-a4}--\ref{eq:diag-a10-task} or comparable
assumption-margin diagnostics.

\section{Convergence Proof}
\label{comprehensive convergence proof}
\refstepcounter{section}
\setcounter{equation}{0}
\setcounter{lemma}{0}
\setcounter{theorem}{0}
\setcounter{proposition}{0}
The following results give the complete convergence proof. All conditional
expectations are taken with respect to the event history $\mathcal F_t$.

\begin{lemma}[Exact KD bridge and finite-support remainder]
\label{lem:f-kd-gradient}
Let $P_i=\operatorname{softmax}(Z_i/\tau)$ and let $P_j^T$ be detached. Then
\begin{equation}
\nabla_{Z_i}\!\left[\tau^2D_{\rm KL}(P_j^T\|P_i)\right]
=\tau(P_i-P_j^T).
\label{eq:app-kd-gradient}
\end{equation}
If $Z_i(\theta)$ has an $L_J$-Lipschitz Jacobian on the segment joining
$\theta_i$ and $\theta_i+\Delta\theta_i$, then
\begin{equation}
\begin{split}
Z_i(\theta_i+\Delta\theta_i)-Z_i(\theta_i)
=J_i^z\Delta\theta_i+r_i^z,\\
\|r_i^z\|\le\frac{L_J}{2}\|\Delta\theta_i\|^2.
\end{split}
\label{eq:app-taylor}
\end{equation}
Consequently, a parameter KD step gives
\begin{equation}
\begin{split}
\Delta Z_i^{\rm KD}
=-\eta\alpha\tau\Theta_i(P_i-P_j^T)+r_i^z,\\
\Theta_i=J_i^z(J_i^z)^*.
\end{split}
\label{eq:app-logit-bridge}
\end{equation}
\end{lemma}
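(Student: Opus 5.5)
The plan is to prove the three displays in order, as each is a short calculus fact.

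\textbf{Gradient identity.} For a fixed sample point write $P=\operatorname{softmax}(Z/\tau)$ and expand the loss as
\[
\tau^2 D_{\rm KL}(P^T\|P)=\tau^2\sum_c P^T_c\log P^T_c-\tau^2\sum_c P^T_c\log P_c .
\]
The first sum does not depend on $Z_i$, because the teacher is detached. For the second sum we use
\[
\log P_c=\frac{Z_c}{\tau}-\log\sum_k e^{Z_k/\tau},\qquad \frac{\partial \log P_c}{\partial Z_k}=\frac1\tau\bigl(\mathbf 1_{c=k}-P_k\bigr).
\]
Together with $\sum_c P^T_c=1$, this gives the gradient in coordinate $k$:
\[
-\tau^2\sum_c P^T_c\,\frac1\tau\bigl(\mathbf 1_{c=k}-P_k\bigr)=\tau\bigl(P_k-P^T_k\bigr).
\]
The result $\tau(P-P^T)$ has zero class-sum, so it already lies in $\mathcal H_z$. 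The gradient with respect to centered logits therefore agrees with this one. Summing over support points with the $M^{-1}$ weights only rescales by the inner-product convention, so the identity holds in $\mathcal H_z$ with the paper's normalization.

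\textbf{Taylor remainder.} We repeat the path argument of Proposition~\ref{prop:insert-a4-taylor}. Set $\varphi(s)=Z_i(\theta_i+s\Delta\theta_i)$. The fundamental theorem of calculus gives
\[
Z_i(\theta_i+\Delta\theta_i)-Z_i(\theta_i)=J_i^z\Delta\theta_i+\int_0^1\bigl[DZ_i(\theta_i+s\Delta\theta_i)-DZ_i(\theta_i)\bigr]\Delta\theta_i\,ds .
\]
By the Lipschitz property of the Jacobian, the integrand is bounded by $L_J s\|\Delta\theta_i\|^2$. Integrating over $s\in[0,1]$ gives $\|r_i^z\|\le\tfrac{L_J}{2}\|\Delta\theta_i\|^2$.

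\textbf{Logit bridge.} We apply the chain rule in parameter space using the first display:
\[
\nabla_{\theta_i}\bigl[\tau^2D_{\rm KL}(P_j^T\|P_i)\bigr]=(J_i^z)^*\bigl[\tau(P_i-P_j^T)\bigr],
\]
where the adjoint is taken with respect to the $M^{-1}$ Frobenius inner product. The KD part of the SGD step is then $\Delta\theta_i=-\eta\alpha(J_i^z)^*\tau(P_i-P_j^T)$. Substituting this into the Taylor expansion yields
\[
\Delta Z_i^{\rm KD}=-\eta\alpha\tau\,J_i^z(J_i^z)^*(P_i-P_j^T)+r_i^z=-\eta\alpha\tau\,\Theta_i(P_i-P_j^T)+r_i^z .
\]
Since $\|\Delta\theta_i\|\le\eta\alpha\tau\|J_i^z\|\,\|P_i-P_j^T\|$ and the probability gap is bounded by the simplex diameter, the remainder satisfies $r_i^z=O(\eta^2)$.

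\textbf{Main obstacle.} The only delicate point is bookkeeping of inner products, not any real estimate. The adjoint $(J_i^z)^*$ and the gradient must be taken with respect to the same $M^{-1}$-weighted, class-centered inner product. Otherwise a stray factor $M$, or a projection onto the centered subspace, would appear in front of $\Theta_i$.
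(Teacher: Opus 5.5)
Your proposal is correct and matches the paper's proof step for step. It uses the same coordinatewise log-softmax differentiation to get $\tau(P_i-P_j^T)$; the paper routes this through $s_i=Z_i/\tau$ and a chain rule, which is only cosmetic. It uses the same fundamental-theorem-of-calculus path argument with the Lipschitz Jacobian to get the $\tfrac{L_J}{2}\|\Delta\theta_i\|^2$ remainder, and the same substitution of $\Delta\theta_i=-\eta\alpha(J_i^z)^*\tau(P_i-P_j^T)$ to produce $\Theta_i=J_i^z(J_i^z)^*$. One wording fix: the $M^{-1}$ weights in the loss and in the $\mu_M$ inner product cancel exactly rather than ``rescaling'' the gradient. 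Your extra $O(\eta^2)$ remark on $r_i^z$ is not needed for the statement.
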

\begin{proof}
Let $C$ be the number of classes and let
$\mu_M=M^{-1}\sum_{r=1}^M\delta_{x_r}$ be the empirical reference measure.
For finite-support arrays, use
$$
\langle U,V\rangle_{\mu_M}
=\frac1M\sum_{r=1}^M\sum_{c=1}^CU_{r,c}V_{r,c}
$$
and its induced norm. Equation~\ref{eq:app-kd-gradient} is the Riesz gradient under this
inner product, so the empirical factor $M^{-1}$ is part of the inner product.
For one reference point, set $s_{i,c}=Z_{i,c}/\tau$ and write
$$
D_{\rm KL}(P_j^T\|P_i)=\sum_{c=1}^{C}P_{j,c}^T
\left(\log P_{j,c}^T-\log P_{i,c}\right).
$$
The teacher is detached, so $P_{j,c}^T$ and $\log P_{j,c}^T$ are constants
when differentiating with respect to $s_i$. Since
$$
\log P_{i,c}=s_{i,c}-\log\!\left(\sum_{a=1}^{C}e^{s_{i,a}}\right),
$$
the component derivative for class $b$ is
$$
\begin{aligned}
\frac{\partial D_{\rm KL}}{\partial s_{i,b}}
&=-\sum_{c=1}^CP_{j,c}^T
\left(\mathbf1\{c=b\}
-\frac{e^{s_{i,b}}}{\sum_{a=1}^Ce^{s_{i,a}}}\right)\\
&=-P_{j,b}^T
+P_{i,b}\sum_{c=1}^CP_{j,c}^T\\
&=P_{i,b}-P_{j,b}^T.
\end{aligned}
$$
The second line separates the selected-logit derivative from the
log-normalizer derivative. The last line uses
$\sum_cP_{j,c}^T=1$.
Thus the full $s_i$-gradient is the column vector $P_i-P_j^T$. The map
$s_i=Z_i/\tau$ has Jacobian $D_{Z_i}s_i=\tau^{-1}I$, so the exact chain rule
gives
$$
\begin{aligned}
\nabla_{Z_i}\left[\tau^2D_{\rm KL}(P_j^T\|P_i)\right]
&=\tau^2(D_{Z_i}s_i)^*(P_i-P_j^T)\\
&=\tau^2\left(\frac1\tau I\right)^\top(P_i-P_j^T)\\
&=\tau(P_i-P_j^T).
\end{aligned}
$$
Moreover,
$\sum_c(P_{i,c}-P_{j,c}^T)=1-1=0$, so the gradient belongs to the centered
class subspace.
For the parameter gradient, define the finite-support Jacobian
$J_i^z=D_{\theta_i}Z_i(\theta_i)$. Thus, for parameter coordinate
$a$,
$$
\begin{aligned}
\frac{\partial}{\partial\theta_{i,a}}
\left[\tau^2D_{\rm KL}(P_j^T\|P_i)\right]
&=\frac1M\sum_{r=1}^M\sum_{c=1}^C
\frac{\partial Z_{i,r,c}}{\partial\theta_{i,a}}
\tau(P_{i,r,c}-P_{j,r,c}^T)\\
&=\left[(J_i^z)^*\tau(P_i-P_j^T)\right]_a.
\end{aligned}
$$
The adjoint is defined by
$\langle J_i^zh,U\rangle_{\mu_M}
=\langle h,(J_i^z)^*U\rangle$ for every parameter direction $h$.

Now define $\vartheta(s)=\theta_i+s\Delta\theta_i$ for $s\in[0,1]$.
For each support-class coordinate $(r,c)$, the chain rule gives
$$
\frac{d}{ds}Z_{i,r,c}(\vartheta(s))
=DZ_{i,r,c}(\vartheta(s))[\Delta\theta_i].
$$
Applying the fundamental theorem of calculus to every coordinate and
collecting them gives
$$
\begin{aligned}
Z_i(\theta_i+\Delta\theta_i)-Z_i(\theta_i)
&=\int_0^1DZ_i(\vartheta(s))\Delta\theta_i\,ds\\
&=\int_0^1J_i^z\Delta\theta_i\,ds\\
&\quad+\int_0^1\left[DZ_i(\vartheta(s))-DZ_i(\theta_i)\right]
\Delta\theta_i\,ds\\
&=J_i^z\Delta\theta_i+r_i^z.
\end{aligned}
$$

The assumed Lipschitz condition and $\|\vartheta(s)-\theta_i\|=s\|\Delta\theta_i\|$
give, for every $s\in[0,1]$,
$$
\left\|\left[DZ_i(\vartheta(s))-DZ_i(\theta_i)\right]\Delta\theta_i\right\|
\le L_Js\|\Delta\theta_i\|^2.
$$
The triangle inequality for Bochner integrals gives
$$
\begin{aligned}
\|r_i^z\|
&=\left\|\int_0^1
\left[DZ_i(\vartheta(s))-DZ_i(\theta_i)\right]
\Delta\theta_i\,ds\right\|\\
&\le\int_0^1
\left\|\left[DZ_i(\vartheta(s))-DZ_i(\theta_i)\right]
\Delta\theta_i\right\|\,ds\\
&\le\int_0^1L_Js\|\Delta\theta_i\|^2\,ds\\
&=L_J\|\Delta\theta_i\|^2\left[\frac{s^2}{2}\right]_{0}^{1}\\
&=\frac{L_J}{2}\|\Delta\theta_i\|^2.
\end{aligned}
$$
For a KD-only parameter step,
$\Delta\theta_i=-\eta\alpha(J_i^z)^*\tau(P_i-P_j^T)$. Multiplication by
$J_i^z$ gives
$$
\begin{aligned}
J_i^z\Delta\theta_i
&=J_i^z\left[-\eta\alpha(J_i^z)^*
\tau(P_i-P_j^T)\right]\\
&=-\eta\alpha\tau J_i^z(J_i^z)^*(P_i-P_j^T)\\
&=-\eta\alpha\tau\Theta_i(P_i-P_j^T).
\end{aligned}
$$
where $\Theta_i=J_i^z(J_i^z)^*$. Combining this identity with the integral
remainder proves Equation~\ref{eq:app-logit-bridge}.
\end{proof}

\begin{lemma}[Softmax spectrum]
\label{lem:f-softmax-spectrum}
For $P=\operatorname{softmax}(Z/\tau)$, let
$M(P)=\operatorname{diag}(P)-PP^\top$. On the centered class subspace,
\begin{equation}
p_{\min}I\preceq M(P)\preceq\frac12I,\qquad
\frac{p_{\min}}{\tau}I\preceq D_ZP\preceq\frac1{2\tau}I.
\label{eq:app-softmax-spectrum}
\end{equation}
\end{lemma}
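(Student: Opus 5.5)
The plan is to reduce everything to a variance identity for the softmax covariance matrix and then transfer it to $D_ZP$ by the chain rule. Throughout, work at a single reference point (the finite-support statement then follows block-diagonally, one block per support point $x_r$, since both $M(P)$ and $D_ZP$ act row-wise). Use the centered class subspace $\mathbf 1^\perp\subset\mathbb R^C$.

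\emph{Step 1 (invariance and symmetry).} First note that $M(P)=\operatorname{diag}(P)-PP^\top$ is symmetric. Since $\sum_cP_c=1$, we have $M(P)\mathbf 1=P-P=0$. Hence $M(P)$ maps $\mathbf 1^\perp$ into itself, and the quadratic form restricted to $\mathbf 1^\perp$ determines the Loewner bounds there.

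\emph{Step 2 (variance representation).} For $v\in\mathbf 1^\perp$,
\[
v^\top M(P)v=\sum_cP_cv_c^2-\Big(\sum_cP_cv_c\Big)^2=\operatorname{Var}_{c\sim P}(v_c).
\]
\textbf{Lower bound.} Use $\operatorname{Var}_P(v)=\min_{a\in\mathbb R}\sum_cP_c(v_c-a)^2$. This is at least $p_{\min}\min_a\sum_c(v_c-a)^2$. The unweighted minimum is attained at $a=\frac1C\sum_cv_c=0$ because $v$ is centered, so $v^\top M(P)v\ge p_{\min}\|v\|^2$.

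\textbf{Where $p_{\min}$ comes from.} Under A1, $|Z_c|\le G_z$, so
\[
P_c=\frac{e^{Z_c/\tau}}{\sum_ke^{Z_k/\tau}}\ge\frac{e^{-G_z/\tau}}{Ce^{G_z/\tau}}=p_{\min}.
\]
This is exactly the constant stated in A1.

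\textbf{Upper bound.} Write the variance in pairwise form,
\[
\operatorname{Var}_P(v)=\tfrac12\sum_{c\neq d}P_cP_d(v_c-v_d)^2,
\]
and apply $(v_c-v_d)^2\le2v_c^2+2v_d^2$. This gives $\operatorname{Var}_P(v)\le2\sum_cv_c^2P_c(1-P_c)$. Since $P_c(1-P_c)\le\frac14$, the right-hand side is at most $\frac12\|v\|^2$.

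\emph{Step 3 (chain rule).} Differentiate $P=\operatorname{softmax}(s)$ with $s=Z/\tau$. This gives $\partial P_c/\partial s_b=P_c(\mathbf 1\{c=b\}-P_b)$, i.e.\ $D_sP=M(P)$, and therefore $D_ZP=\tau^{-1}M(P)$. Multiplying the bounds of Step 2 by $\tau^{-1}$ yields
\[
\frac{p_{\min}}{\tau}I\preceq D_ZP\preceq\frac1{2\tau}I\quad\text{on }\mathbf 1^\perp.
\]

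The only step requiring care is the upper constant $\frac12$. The naive bound $\operatorname{Var}_P(v)\le\sum_cP_cv_c^2\le\|v\|^2$ loses a factor of two, so the pairwise form combined with $P_c(1-P_c)\le\frac14$ is what I would use. Everything else is direct computation.
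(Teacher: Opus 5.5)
Your proof is correct and follows the paper's argument essentially step for step. It uses the same lower bound via $v^\top M(P)v=\min_{a}\sum_cP_c(v_c-a)^2\ge p_{\min}\|v\|^2$ on $\mathbf 1^\perp$ with $p_{\min}=e^{-2G_z/\tau}/C$, and the same chain rule $D_ZP=\tau^{-1}M(P)$. The only difference is the upper constant: the paper applies Gershgorin's disc theorem (right endpoint $2P_c(1-P_c)\le\frac12$), whereas your pairwise-variance estimate is the quadratic-form version of that row-sum bound and yields the identical quantity $2\sum_cP_c(1-P_c)v_c^2$.
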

\begin{proof}
Let $\mathbf 1\in\mathbb R^C$ denote the all-ones vector and let
$u\in\mathbb R^C$. Since $P_c>0$ and $\sum_{c=1}^CP_c=1$, expanding the
matrix product gives
$$
\begin{aligned}
u^\top M(P)u
&=u^\top\operatorname{diag}(P)u-u^\top PP^\top u\\
&=\sum_{c=1}^CP_cu_c^2-\left(\sum_{c=1}^CP_cu_c\right)^2\\
&=\sum_{c=1}^CP_cu_c^2
 -2\left(\sum_{a=1}^CP_au_a\right)\left(\sum_{c=1}^CP_cu_c\right)\\
&\quad+\left(\sum_{a=1}^CP_au_a\right)^2\sum_{c=1}^CP_c\\
&=\sum_{c=1}^CP_c\left(u_c-\sum_{a=1}^CP_au_a\right)^2\ge0.
\end{aligned}
$$
The second-to-last line displays the cross term explicitly. With
$m_P=\sum_aP_au_a$, it is $-2m_P^2$, while the final constant term is
$m_P^2$. Equality holds exactly when $u_c=m_P$ for every $c$, which is
equivalent to $u\in\operatorname{span}\{\mathbf1\}$. Thus
$\ker M(P)=\operatorname{span}\{\mathbf1\}$.

For the lower bound, write $m_P=\sum_aP_au_a$ and expand around an arbitrary
scalar $a\in\mathbb R$:
$$
\begin{aligned}
\sum_cP_c(u_c-a)^2
&=\sum_cP_c\big((u_c-m_P)+(m_P-a)\big)^2\\
&=\sum_cP_c(u_c-m_P)^2\\
& +2(m_P-a)\sum_cP_c(u_c-m_P)\\
&\quad+(m_P-a)^2\sum_cP_c\\
&=\sum_cP_c(u_c-m_P)^2+(a-m_P)^2.
\end{aligned}
$$
The cross term is zero because
$\sum_cP_c(u_c-m_P)=m_P-m_P\sum_cP_c=0$. Hence
$$
\min_{a\in\mathbb R}\sum_cP_c(u_c-a)^2
=\sum_cP_c(u_c-m_P)^2=u^\top M(P)u.
$$
Assumption A1 gives $|Z_c-Z_a|\le2G_z$ for all $a,c$. Consequently,
$$
P_c=\frac{e^{Z_c/\tau}}{\sum_a e^{Z_a/\tau}}
\ge\frac{e^{-G_z/\tau}}{Ce^{G_z/\tau}}
=\frac{e^{-2G_z/\tau}}{C}
=:p_{\min}.
$$
Using the weighted minimizer $m_P$ in the lower probability bound gives
$$
\begin{aligned}
u^\top M(P)u
&=\sum_cP_c(u_c-m_P)^2\\
&\ge p_{\min}\sum_c(u_c-m_P)^2\\
&\ge p_{\min}\min_{a\in\mathbb R}\sum_c(u_c-a)^2.
\end{aligned}
$$
If $u\perp\mathbf1$, then $\sum_cu_c=0$. The unweighted square has the
explicit expansion
$$
\begin{aligned}
\sum_c(u_c-a)^2
&=\sum_cu_c^2-2a\sum_cu_c+Ca^2\\
&=\|u\|_2^2+Ca^2
 \ge\|u\|_2^2.
\end{aligned}
$$
The minimum is attained at $a=0$, so
$$
\min_{a\in\mathbb R}\sum_c(u_c-a)^2=\sum_cu_c^2=\|u\|_2^2.
$$
Thus $u^\top M(P)u\ge p_{\min}\|u\|_2^2$ on the centered class subspace.

For the upper eigenvalue, the diagonal entry in row $c$ is
$M_{cc}=P_c(1-P_c)$ and the sum of absolute off-diagonal entries in that row
is $\sum_{a\ne c}P_cP_a=P_c(1-P_c)$. Gershgorin's theorem states that every
eigenvalue lies in a disk centered at $P_c(1-P_c)$ with radius
$P_c(1-P_c)$. The right endpoint of that disk is
$2P_c(1-P_c)$. Since $0\le P_c\le1$ implies
$P_c(1-P_c)\le1/4$, every eigenvalue is at most $1/2$.

Finally, differentiating
$P_c=e^{Z_c/\tau}/\sum_a e^{Z_a/\tau}$ componentwise gives
$$
\begin{aligned}
\frac{\partial P_c}{\partial Z_b}
&=\frac1\tau
\frac{\mathbf1\{c=b\}e^{Z_c/\tau}\sum_a e^{Z_a/\tau}
-e^{Z_c/\tau}e^{Z_b/\tau}}
{(\sum_a e^{Z_a/\tau})^2}\\
&=\frac1\tau P_c(\mathbf1\{c=b\}-P_b).
\end{aligned}
$$
so $D_ZP=\tau^{-1}(\operatorname{diag}(P)-PP^\top)=M(P)/\tau$.
\end{proof}

\begin{lemma}[Consensus-measure equivalence]
\label{lem:f-consensus-equivalence}
For centered logits $Z,Z'$ satisfying A1 and their predictions $P,P'$,
\begin{equation}
\frac{p_{\min}}{\tau}\|Z-Z'\|
\le\|P-P'\|\le\frac1{2\tau}\|Z-Z'\|,
\label{eq:app-prob-logit}
\end{equation}
and
\begin{equation}
\frac{p_{\min}}2\|Z-Z'\|^2
\le\tau^2D_{\rm KL}(P'\|P)
\le\frac14\|Z-Z'\|^2.
\label{eq:app-kl-logit}
\end{equation}
Hence there are constants $c_{fz},C_{fz}>0$ such that
$c_{fz}\Psi^z\le\Psi^f\le C_{fz}\Psi^z$.
\end{lemma}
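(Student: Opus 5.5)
The plan is to derive everything pointwise on the support and then average with the weights $M^{-1}$ of $\mu_M$. For a single reference point, both centered logit vectors $Z,Z'$ lie in the centered class subspace. By A1 the whole segment $Z_s=Z'+s(Z-Z')$ stays in the bounded-logit box, so every $P_s=\operatorname{softmax}(Z_s/\tau)$ has entries at least $p_{\min}$.

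\textbf{Probability--logit bounds.} The mean value form gives
\[
P-P'=A\,(Z-Z'),\qquad A=\int_0^1D_ZP(Z_s)\,ds .
\]
Each integrand is symmetric and, by Lemma~\ref{lem:f-softmax-spectrum}, satisfies $\frac{p_{\min}}{\tau}I\preceq D_ZP\preceq\frac1{2\tau}I$ on the centered subspace. It also maps that subspace into itself, because $D_ZP$ annihilates $\mathbf 1$ and its columns sum to zero. Averaging over $s$ preserves these Loewner bounds. Hence $A$ restricted to the centered subspace is symmetric with spectrum in $[p_{\min}/\tau,1/(2\tau)]$, which gives
\[
\frac{p_{\min}}{\tau}\|Z-Z'\|\le\|P-P'\|\le\frac1{2\tau}\|Z-Z'\|
\]
pointwise. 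Squaring, averaging over the support, and taking square roots yields \eqref{eq:app-prob-logit}.

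\textbf{KL--logit bounds.} Fix $P'$ and set
\[
\phi(s)=\tau^2D_{\rm KL}(P'\|P_s),\qquad Z_s=Z'+s(Z-Z').
\]
Then $\phi(0)=0$. By Lemma~\ref{lem:f-kd-gradient}, $\phi'(s)=\tau\langle P_s-P',Z-Z'\rangle$, so $\phi'(0)=0$. Differentiating once more gives
\[
\phi''(s)=\tau\,(Z-Z')^\top D_ZP(Z_s)(Z-Z')=(Z-Z')^\top M(P_s)(Z-Z').
\]
This lies in $[p_{\min}\|Z-Z'\|^2,\tfrac12\|Z-Z'\|^2]$ by Lemma~\ref{lem:f-softmax-spectrum}. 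Taylor's formula with integral remainder, $\phi(1)=\int_0^1(1-s)\phi''(s)\,ds$, together with $\int_0^1(1-s)\,ds=\tfrac12$, then gives \eqref{eq:app-kl-logit} pointwise. Averaging over the reference measure transfers it to the $\mu_M$ norms.

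\textbf{Consensus equivalence.} The one real obstacle is that $\bar P=N^{-1}\sum_iP_i$ is not the softmax of $\bar Z$, so \eqref{eq:app-prob-logit} cannot be applied to the deviations from the mean directly. I avoid this with the pairwise identity
\[
\Psi=\frac1{2N^2}\sum_{i,j}\|X_i-X_j\|^2,
\]
valid for both $X=P$ and $X=Z$. Applying \eqref{eq:app-prob-logit} to each pair $(Z_i,Z_j)$, each of which satisfies A1, gives
\[
\frac{p_{\min}^2}{\tau^2}\Psi^z\le\Psi^f\le\frac1{4\tau^2}\Psi^z
\]
for each realization. Taking expectations preserves this, so $c_{fz}=p_{\min}^2/\tau^2$ and $C_{fz}=1/(4\tau^2)$.

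The only delicate points are the invariance of the centered subspace under $A$ and the convexity of the bounded-logit set, which guarantees the segment stays in the A1 region. I will check both explicitly.
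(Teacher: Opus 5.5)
Your proposal is correct and follows the paper's proof essentially step for step. It uses the integrated softmax Jacobian on the segment for the probability--logit comparison, the second-order integral-remainder expansion with weight $\int_0^1(1-s)\,ds=\tfrac12$ for the KL bounds, and the pairwise variance identity to pass to $\Psi^f$ and $\Psi^z$ with $c_{fz}=p_{\min}^2/\tau^2$ and $C_{fz}=1/(4\tau^2)$. The differences are cosmetic: the paper writes $\tau^2D_{\rm KL}(P'\|P)$ as the Bregman divergence of the log-partition $\tau^2\log\sum_c e^{Z_c/\tau}$ instead of differentiating the KL along the segment via the KD-gradient lemma. It also gets the lower probability bound from Cauchy--Schwarz on $h^\top A h$ instead of from the spectrum of $A$ on the centered subspace.
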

\begin{proof}
Let $h=Z-Z'$. Since both logits are centered, $h\in\mathbf1^\perp$, and the
segment $Z_s=Z'+sh$ remains in the centered subspace for every $s\in[0,1]$.
For each class $c$, apply the fundamental theorem of calculus to
$s\mapsto P_c(Z_s)$:
$$
P_c-P'_c=\int_0^1\sum_{b=1}^C
\frac{\partial P_c(Z_s)}{\partial Z_b}h_b\,ds.
$$
Collecting the class coordinates gives
$$
P-P'=\left(\int_0^1D_ZP(Z_s)\,ds\right)h.
$$
Set $A_h=\int_0^1D_ZP(Z_s)\,ds$. The segment is covered by A1, so Lemma
~\ref{lem:f-softmax-spectrum} applies at every $s$. Integrating the
quadratic-form bounds gives
$$
\frac{p_{\min}}\tau\|h\|^2
\le h^\top A_hh
\le\frac1{2\tau}\|h\|^2.
$$
$$
\begin{aligned}
\|P-P'\|\|h\|
&\ge h^\top(P-P')\\
&=h^\top\left(\int_0^1D_ZP(Z_s)\,ds\right)h\\
&\ge\frac{p_{\min}}\tau\|h\|^2,
\end{aligned}
$$
where the first line is Cauchy--Schwarz. If $h\ne0$, division by $\|h\|$
gives
$$
\|P-P'\|\ge\frac{p_{\min}}\tau\|h\|.
$$
For $h=0$, the same inequality holds with equality. For the upper bound,
$\|A_h\|_{\rm op}\le1/(2\tau)$, and hence
$$
\begin{aligned}
\|P-P'\|
&=\|A_hh\|\\
&\le\|A_h\|_{\rm op}\|h\|\\
&\le\frac1{2\tau}\|h\|.
\end{aligned}
$$
This proves
Equation~\ref{eq:app-prob-logit}.

For the KL statement, define the log-partition potential
$$
\phi(Z)=\tau^2\log\left(\sum_{c=1}^Ce^{Z_c/\tau}\right).
$$
Its component derivatives are
$$
\frac{\partial\phi}{\partial Z_c}
=\tau\frac{e^{Z_c/\tau}}{\sum_a e^{Z_a/\tau}}
=\tau P_c,
\qquad
\frac{\partial^2\phi}{\partial Z_c\partial Z_b}
=P_c(\mathbf1\{c=b\}-P_b).
$$
Thus $\nabla\phi(Z)=\tau P$ and $\nabla^2\phi(Z)=M(P)$. Using
$\log P_c=Z_c/\tau-\log\sum_a e^{Z_a/\tau}$ gives
$$
\begin{aligned}
\tau^2D_{\rm KL}(P'\|P)
&=\tau^2\sum_cP'_c\log\frac{P'_c}{P_c}\\
&=\tau^2\sum_cP'_c\log P'_c
-\tau\sum_cP'_cZ_c+\phi(Z).
\end{aligned}
$$
At $Z'$, the same identity gives
$\tau^2\sum_cP'_c\log P'_c
=\tau\sum_cP'_cZ'_c-\phi(Z')$. Hence
$$
\begin{aligned}
\tau^2D_{\rm KL}(P'\|P)
&=\phi(Z)-\phi(Z')-\langle\nabla\phi(Z'),Z-Z'\rangle.
\end{aligned}
$$

Set $\varphi(s)=\phi(Z'+sh)$. The chain rule gives
$$
\varphi'(s)=\langle\nabla\phi(Z'+sh),h\rangle,
\qquad
\varphi''(s)=h^\top\nabla^2\phi(Z'+sh)h.
$$
Two applications of the fundamental theorem of calculus give
$$
\begin{aligned}
\phi(Z)-\phi(Z')-&\langle\nabla\phi(Z'),h\rangle
=\varphi(1)-\varphi(0)-\varphi'(0)\\
&=\int_0^1\big(\varphi'(s)-\varphi'(0)\big)\,ds\\
&=\int_0^1\int_0^s\varphi''(r)\,dr\,ds\\
&=\int_0^1(1-r)h^\top\nabla^2\phi(Z'+rh)h\,dr.
\end{aligned}
$$
The final line changes the order of integration over
$0\le r\le s\le1$. Since $\int_0^1(1-r)\,dr=1/2$, the bounds from
Lemma~\ref{lem:f-softmax-spectrum} give
$$
\begin{aligned}
\tau^2D_{\rm KL}(P'\|P)
&\ge\int_0^1(1-r)p_{\min}\|h\|^2\,dr
=\frac{p_{\min}}2\|h\|^2,\\
\tau^2D_{\rm KL}(P'\|P)
&\le\int_0^1(1-r)\frac12\|h\|^2\,dr
=\frac14\|h\|^2.
\end{aligned}
$$
Finally, for vectors $x_1,\ldots,x_N$ and
$\bar x=N^{-1}\sum_ix_i$, expand
$$
\begin{aligned}
\sum_{i,j}\|x_i-x_j\|^2
&=\sum_{i,j}\big(\|x_i-\bar x\|^2+\|x_j-\bar x\|^2\\
&\qquad-2\langle x_i-\bar x,x_j-\bar x\rangle\big)\\
&=N\sum_i\|x_i-\bar x\|^2
+N\sum_j\|x_j-\bar x\|^2\\
&\qquad-2\left\langle\sum_i(x_i-\bar x),
\sum_j(x_j-\bar x)\right\rangle\\
&=2N\sum_i\|x_i-\bar x\|^2.
\end{aligned}
$$
The cross term is zero because
$\sum_i(x_i-\bar x)=\sum_ix_i-N\bar x=0$. Dividing by $2N^2$ gives the
variance identity and transfers the pairwise probability and KL bounds to
$\Psi^f$ and $\Psi^z$. Under the paper's normalization, the prediction
comparison permits $c_{fz}=p_{\min}^2/\tau^2$ and
$C_{fz}=1/(4\tau^2)$.
\end{proof}

\begin{proposition}[Prediction-space graph-angle diagnostic]
\label{prop:a10-graph}
For a connected undirected graph,
\begin{equation}
\frac1{|\vec E|}\sum_{(i,j)\in\vec E}
\|(I-\Pi_i^p)(p_i-p_j)\|^2
\le\frac{N\lambda_{\max}(\mathcal L)}{|E|}\Psi^f.
\label{eq:app-a10-graph}
\end{equation}
If every directed pull makes angle at most $\phi$ with $T_i^p$, the right side
is multiplied by $\sin^2\phi$. A squared normal bias whose directed-edge
average is at most $\nu_BB_f^2$ gives the analogous prediction-space bound,
with the factor $2$ from $(a+b)^2\le2a^2+2b^2$ absorbed into the constants.
This diagnostic does not by itself imply the logit-space KD clause in A10;
that implication requires a separately verified softmax/Jacobian transport
condition.
\end{proposition}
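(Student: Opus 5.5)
The plan has three steps. First remove the projector, then identify the edge sum with the graph Laplacian quadratic form, and finally bound that form by its top eigenvalue. Since $\Pi_i^p$ is an orthogonal projector, $I-\Pi_i^p$ is also an orthogonal projector, so $\|(I-\Pi_i^p)v\|\le\|v\|$ for every $v$. Hence each directed term is at most $\|p_i-p_j\|_\mu^2$. Each undirected edge appears twice in $\vec E$ with the same value of $\|p_i-p_j\|^2$, and $|\vec E|=2|E|$. The left side of Equation~\ref{eq:app-a10-graph} is therefore bounded by
\[
\frac{2}{|\vec E|}\sum_{\{i,j\}\in E}\|p_i-p_j\|_\mu^2=\frac1{|E|}\sum_{\{i,j\}\in E}\|p_i-p_j\|_\mu^2 .
\]

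Next, write the edge sum as a Laplacian quadratic form. Stack the predictions as $P=(p_1,\dots,p_N)$ and take the $\mu$-inner product coordinatewise. Then $\sum_{\{i,j\}\in E}\|p_i-p_j\|_\mu^2=\langle P,(\mathcal L\otimes I)P\rangle$. Because $\mathcal L\mathbf 1=0$, we may replace $p_i$ by $p_i-\bar p$ without changing the form. Applying the spectral bound $\mathcal L\preceq\lambda_{\max}(\mathcal L)I$ to each coordinate of the finite-support space $L^2(\mu_M;\R^C)$ gives
\[
\sum_{\{i,j\}\in E}\|p_i-p_j\|_\mu^2\le\lambda_{\max}(\mathcal L)\sum_i\|p_i-\bar p\|_\mu^2=N\lambda_{\max}(\mathcal L)\,\psi^f .
\]
Dividing by $|E|$ yields the displayed bound. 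Here $\Psi^f$ is the pathwise disagreement; taking expectations gives the expected version. Connectivity plays no role in the inequality itself.

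For the angle refinement, suppose every directed pull $v=p_i-p_j$ makes angle at most $\phi$ with $T_i^p$. Then $\|(I-\Pi_i^p)v\|^2=\sin^2\angle(v,T_i^p)\,\|v\|^2\le\sin^2\phi\,\|v\|^2$. Using this in place of the contraction bound carries the factor $\sin^2\phi$ through the same chain.

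For the bias version, decompose $(I-\Pi_i^p)(p_i-p_j)=a_{ij}+b_{ij}$. Here $a_{ij}$ is the angle-controlled part, and $b_{ij}$ is the normal bias whose directed-edge mean square is at most $\nu_BB_f^2$. The inequality $\|a+b\|^2\le2\|a\|^2+2\|b\|^2$ then gives twice the previous bound plus $2\nu_BB_f^2$.

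The closing disclaimer, that this bound does not imply the logit-space KD clause, is a remark rather than a claim to prove. I would only note why it holds: the KD clause uses $\Pi_i^z$ and $T_i^z$, whereas $T_i^p=M(P)T_i^z/\tau$ by the chain rule through Lemma~\ref{lem:f-softmax-spectrum}. Relating the two projectors needs an additional transport condition.

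The only step needing care is the Laplacian identity together with the matching of normalizations. Specifically, $|\vec E|=2|E|$ and $\sum_{\{i,j\}\in E}$ counts each edge once, so the constant comes out as $N\lambda_{\max}/|E|$ and not twice that. I expect no genuine obstacle beyond this bookkeeping.
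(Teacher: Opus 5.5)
Your proposal is correct and follows the paper's proof essentially step for step. Both use nonexpansiveness of $I-\Pi_i^p$, the orientation count $|\vec E|=2|E|$, the coordinatewise Laplacian quadratic form on the centered stack bounded by $\lambda_{\max}(\mathcal L)$, the $\sin^2\phi$ refinement, and $\|a+b\|^2\le 2\|a\|^2+2\|b\|^2$ for the bias term. The only cosmetic difference is that you split the normal component $(I-\Pi_i^p)(p_i-p_j)$ directly, whereas the paper splits $p_i-p_j=v_{ij}^{\rm tan}+b_{ij}$ before projecting and then uses $\|(I-\Pi_i^p)b_{ij}\|\le\|b_{ij}\|$; both yield the same bound.
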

\begin{proof}
Let $\vec E=\{(i,j),(j,i):\{i,j\}\in E\}$, so $|\vec E|=2|E|$.
Let $\mathcal H$ denote the prediction Hilbert space and define
$\bar p=N^{-1}\sum_i p_i$ and $q_i=p_i-\bar p$. For each directed edge,
orthogonal projection is nonexpansive because
$$
\begin{aligned}
\|(I-\Pi_i^p)(p_i-p_j)\|^2
&=\|p_i-p_j\|^2-\|\Pi_i^p(p_i-p_j)\|^2\\
&\le\|p_i-p_j\|^2.
\end{aligned}
$$
Counting both orientations gives
$$
\begin{aligned}
\frac1{|\vec E|}\sum_{(i,j)\in\vec E}\|p_i-p_j\|^2
&=\frac1{2|E|}\sum_{\{i,j\}\in E}
\left(\|p_i-p_j\|^2+\|p_j-p_i\|^2\right)\\
&=\frac1{|E|}\sum_{\{i,j\}\in E}\|p_i-p_j\|^2.
\end{aligned}
$$

Let $\mathcal L=D-A$ be the unweighted graph Laplacian. For a scalar
coordinate vector $u=(u_1,\ldots,u_N)^\top$, direct expansion gives
$$
\begin{aligned}
u^\top\mathcal Lu
&=\sum_i d_i u_i^2-\sum_{i,j}A_{ij}u_iu_j\\
&=\sum_{\{i,j\}\in E}(u_i^2+u_j^2-2u_iu_j)\\
&=\sum_{\{i,j\}\in E}(u_i-u_j)^2.
\end{aligned}
$$
The edge set in this identity is counted once. The factor $1/2$ would appear
only in the equivalent ordered-adjacency expression
$\frac12\sum_{i,j}A_{ij}(u_i-u_j)^2$.

Choose an orthonormal basis of the finite support representation of
$\mathcal H$ and apply the scalar identity coordinatewise. Parseval's
identity then gives the Hilbert-valued graph energy
$$
\sum_{\{i,j\}\in E}\|p_i-p_j\|^2
=\langle q,(\mathcal L\otimes I_{\mathcal H})q\rangle_{\mathcal H^N}.
$$
The centered vector $q=(q_1,\ldots,q_N)$ is orthogonal to the consensus
subspace because $\sum_iq_i=0$. Expanding $q$ in an orthonormal eigenbasis
of $\mathcal L$ and using that every nonconsensus eigenvalue is at most
$\lambda_{\max}(\mathcal L)$ gives
$$
\begin{aligned}
\sum_{\{i,j\}\in E}\|p_i-p_j\|^2
&=\langle q,(\mathcal L\otimes I_{\mathcal H})q\rangle\\
&\le\lambda_{\max}(\mathcal L)\sum_{i=1}^N\|q_i\|^2\\
&=N\lambda_{\max}(\mathcal L)\Psi^f.
\end{aligned}
$$
Consequently,
$$
\frac1{|E|}\sum_{\{i,j\}\in E}\|p_i-p_j\|^2
\le\frac{\lambda_{\max}}{|E|}
\sum_{i=1}^N\|p_i-\bar p\|^2
=\frac{N\lambda_{\max}}{|E|}\Psi^f.
$$
For the angle statement, define
$\sin\angle(v,T)=\|(I-\Pi_T)v\|/\|v\|$ for $v\ne0$. If each directed
pull has angle at most $\phi$, then
$$
\|(I-\Pi_i^p)(p_i-p_j)\|^2
=\sin^2\angle(p_i-p_j,T_i^p)\|p_i-p_j\|^2
\le\sin^2\phi\|p_i-p_j\|^2.
$$
Averaging this inequality inserts the factor $\sin^2\phi$ into the graph
bound. For a normal bias, assume the decomposition
$p_i-p_j=v_{ij}^{\rm tan}+b_{ij}$ and the directed average
condition
$$
\frac1{|\vec E|}\sum_{(i,j)\in\vec E}\|b_{ij}\|^2\le\nu_BB_f^2.
$$
For $a=(I-\Pi_i^p)v_{ij}^{\rm tan}$ and
$b=(I-\Pi_i^p)b_{ij}$, nonnegativity of $\|a-b\|^2$ gives
$2\langle a,b\rangle\le\|a\|^2+\|b\|^2$. Hence
$$
\begin{aligned}
\|(I-\Pi_i^p)(v_{ij}^{\rm tan}+b_{ij})\|^2
&=\|a\|^2+2\langle a,b\rangle+\|b\|^2\\
&\le2\|a\|^2+2\|b\|^2\\
&\le2\|a\|^2+2\|b_{ij}\|^2.
\end{aligned}
$$
Averaging over $\vec E$ gives the stated prediction-space bias term. No
logit-space transport is used in this argument, which is why the final clause
of the proposition remains a separate requirement.
\end{proof}

\begin{proposition}[An exact affine verification model]
\label{prop:a10-affine}
Suppose $\mathcal P_i=a_i+U_i$ is a closed affine prediction class,
$F_i(p)=\tfrac12\|p-q^\star\|^2$, and $\Pi_i$ projects onto $U_i$. Then every
$p_i\in\mathcal P_i$ satisfies
\begin{equation}
\|(I-\Pi_i)\nabla F_i(p_i)\|^2
=\operatorname{dist}(q^\star,\mathcal P_i)^2.
\label{eq:app-affine-a10}
\end{equation}
Thus the averaged task residual is exactly $B_f^2$.
\end{proposition}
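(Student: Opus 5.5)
The proof is a direct orthogonal-decomposition computation in the Hilbert space $\mathcal H$, using only the affine structure of $\mathcal P_i$ and the explicit quadratic loss. Two facts do all the work. First, $\nabla F_i(p)=p-q^\star$ for $F_i(p)=\tfrac12\|p-q^\star\|^2$, so the task gradient at any point is simply the displacement to the target. Second, since $\mathcal P_i=a_i+U_i$ with $U_i$ a closed linear subspace, the difference of any two points of $\mathcal P_i$ lies in $U_i$. Hence $(I-\Pi_i)$ annihilates that difference, and $(I-\Pi_i)(p_i-q^\star)$ does not depend on which $p_i\in\mathcal P_i$ is chosen.

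The key steps, in order, are as follows.

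\textbf{Step 1.} Let $\pi_i:=\Pi_{\mathcal P_i}(q^\star)$ be the metric projection onto the closed affine set. It exists and is unique because $\mathcal P_i$ is closed and convex. It is characterized by $q^\star-\pi_i\perp U_i$, which follows from the variational inequality, with equality because $U_i$ is a linear subspace. Consequently $(I-\Pi_i)(q^\star-\pi_i)=q^\star-\pi_i$, and $\|q^\star-\pi_i\|=\operatorname{dist}(q^\star,\mathcal P_i)$.

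\textbf{Step 2.} For arbitrary $p_i\in\mathcal P_i$, write $p_i-q^\star=(p_i-\pi_i)+(\pi_i-q^\star)$. The first summand lies in $U_i$ and is killed by $I-\Pi_i$. Therefore $(I-\Pi_i)\nabla F_i(p_i)=\pi_i-q^\star$, and squaring gives Equation~\ref{eq:app-affine-a10}.

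\textbf{Step 3.} Averaging over $i$ and using the definition of $B_f^2$ in A3 gives $\frac1N\sum_i\|(I-\Pi_i)\nabla F_i(p_i)\|^2=B_f^2$. This is A10's task clause with $\gamma_{\rm task}=\nu_{\rm task,\Psi}=0$ and $\nu_{\rm task,B}=1$.

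The only delicate point is identifying $\Pi_i$, the projector onto $U_i$, with the tangent projector $\Pi_i^p$ of A4/A10, and using the orthogonality characterization of the projection onto an affine set. For the first, the tangent space of an affine class at every point is $U_i$, so the identification holds. For the second, it suffices to minimize $\|q^\star-a_i-u\|^2$ over $u\in U_i$ and read off the first-order condition. I also note explicitly that A3's $q^\star$ is the constrained minimizer, whereas here it is the target of the quadratic loss. In this model the two coincide by construction, so no extra argument is needed beyond this remark.
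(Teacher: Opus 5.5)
Your proof is correct and is essentially the paper's argument. Both compute $\nabla F_i(p_i)=p_i-q^\star$, note that $I-\Pi_i$ annihilates the $U_i$-component so the normal part does not depend on $p_i$, and identify what remains with the distance by orthogonal decomposition. The only cosmetic difference is the anchor point: you use the metric projection $\pi_i$ and its orthogonality characterization, whereas the paper uses the offset $a_i$ and evaluates $\inf_{v\in U_i}\|a_i-q^\star+v\|^2$ directly by Pythagoras, with the explicit minimizer $v^\star=-\Pi_i(a_i-q^\star)$.
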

\begin{proof}
Let $\mathcal H$ be the prediction Hilbert space. Since $U_i$ is closed,
the orthogonal projector $\Pi_i:\mathcal H\to U_i$ exists and satisfies
$\Pi_i^*=\Pi_i$, $\Pi_i^2=\Pi_i$, and
$\langle\Pi_i u,(I-\Pi_i)v\rangle=0$ for all $u,v\in\mathcal H$.

For an arbitrary direction $h\in\mathcal H$, expand the risk increment:
$$
\begin{aligned}
F_i(p_i+\epsilon h)-F_i(p_i)
&=\frac12\|p_i-q^\star+\epsilon h\|^2
-\frac12\|p_i-q^\star\|^2\\
&=\epsilon\langle p_i-q^\star,h\rangle
+\frac{\epsilon^2}{2}\|h\|^2.
\end{aligned}
$$
Dividing by $\epsilon$ and taking $\epsilon\to0$ gives
$DF_i(p_i)[h]=\langle p_i-q^\star,h\rangle$, so
$\nabla F_i(p_i)=p_i-q^\star$.

Write $p_i=a_i+u_i$ with $u_i\in U_i$. Since $\Pi_i u_i=u_i$,
$(I-\Pi_i)u_i=0$, and
$$
\begin{aligned}
(I-\Pi_i)\nabla F_i(p_i)
&=(I-\Pi_i)(a_i+u_i-q^\star)\\
&=(I-\Pi_i)(a_i-q^\star)+(I-\Pi_i)u_i\\
&=(I-\Pi_i)(a_i-q^\star).
\end{aligned}
$$
For any $v\in U_i$, decompose
$a_i-q^\star+v$ into its $U_i$ and $U_i^\perp$ components. Expanding the
squared norm gives
$$
\begin{aligned}
\|a_i-q^\star+v\|^2
&=\|\Pi_i(a_i-q^\star)+v\|^2\\
&\quad+2\langle\Pi_i(a_i-q^\star)+v,
(I-\Pi_i)(a_i-q^\star)\rangle\\
&\quad+\|(I-\Pi_i)(a_i-q^\star)\|^2\\
&=\|\Pi_i(a_i-q^\star)+v\|^2
+\|(I-\Pi_i)(a_i-q^\star)\|^2.
\end{aligned}
$$
The cross term is zero because its first argument lies in $U_i$ and its
second lies in $U_i^\perp$. The vector
$v_i^\star=-\Pi_i(a_i-q^\star)$ belongs to $U_i$ and makes the first square
zero. Since every squared norm is nonnegative,
$$
\begin{aligned}
\operatorname{dist}(q^\star,a_i+U_i)^2
&=\inf_{v\in U_i}\|a_i-q^\star+v\|^2\\
&=\|(I-\Pi_i)(a_i-q^\star)\|^2.
\end{aligned}
$$
Together with the normal-gradient identity, this proves
Equation~\ref{eq:app-affine-a10}. Averaging over peers gives
$$
\frac1N\sum_{i=1}^N\|(I-\Pi_i)\nabla F_i(p_i)\|^2
=\frac1N\sum_{i=1}^N\operatorname{dist}(q^\star,\mathcal P_i)^2
=B_f^2.
$$
\end{proof}

\begin{proposition}[Why the relative task term is necessary]
\label{prop:a10-counterexample}
The old inequality
$\|(I-\Pi_i)g_i\|^2\le\nu(B_f^2+\Psi^f)$ does not follow from A3. Let
$\mathcal P=\{(x,x^2):x\in\mathbb R\}$,
$q^\star=(0,0)$, and $F(p)=\tfrac12\|p-q^\star\|^2$. Then $B_f=0$ and, for a
single peer, $\Psi^f=0$, while the gradient at $(x,x^2)$ has a nonzero normal
component whenever $x\ne0$.
\end{proposition}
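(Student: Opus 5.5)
The proof is a direct computation on the explicit one-dimensional curve $\mathcal P=\{(x,x^2):x\in\mathbb R\}\subset\mathbb R^2$. It has three parts: show $B_f=0$, show $\Psi^f=0$, and then compute the normal component of the gradient at a generic point of $\mathcal P$. Since all three quantities are zero or nonzero for independent reasons, the inequality $\|(I-\Pi_i)g_i\|^2\le\nu(B_f^2+\Psi^f)$ fails for every finite $\nu$. Here $\mathcal P$ plays the role of the reachable class of a single peer ($N=1$), parametrized by $\theta=x\in\mathbb R$.

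\textbf{Step 1: $B_f=0$.} Taking $x=0$ gives $(0,0)=q^\star\in\mathcal P$. Hence $\operatorname{dist}(q^\star,\mathcal P)=0$ and, by the definition in A3, $B_f^2=0$. The set $\mathcal P$ is closed, being the graph of a continuous function, so the closedness requirement of A3 holds.

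\textbf{Step 2: $\Psi^f=0$.} With one peer, $\bar p=p_1$, so $\Psi^f=\|p_1-\bar p\|^2=0$ identically.

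\textbf{Step 3: the normal component is nonzero.} As in Proposition~\ref{prop:a10-affine}, $\nabla F(p)=p-q^\star=p$, so at $p=(x,x^2)$ we have $g=(x,x^2)$.

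The Jacobian of $\theta\mapsto(\theta,\theta^2)$ is $J=(1,2x)^\top$. Its range, which is the tangent space, is $T=\operatorname{span}\{(1,2x)\}$. The normal space is spanned by $n=(-2x,1)$.

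Projecting onto $n$ gives
\[
\frac{\langle g,n\rangle}{\|n\|}=\frac{-2x^2+x^2}{\sqrt{1+4x^2}}=\frac{-x^2}{\sqrt{1+4x^2}} .
\]
Therefore
\[
\|(I-\Pi)g\|^2=\frac{x^4}{1+4x^2},
\]
which is strictly positive for every $x\neq0$.

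\textbf{Conclusion and the one point needing care.} The left side of the old inequality is positive while the right side is $0$, so no $\nu$ works. The only subtlety is the projector convention. Since A4 defines $\Pi$ via the range of the Jacobian, the relevant projector here is the linear tangent projector, not the projection onto $\mathcal P$ itself. With that choice, Step 3 exhibits a nonzero normal component even though $q^\star$ is reachable.

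The same computation shows why the relative form in A10 is the natural repair. Since $\|g\|^2=x^2+x^4$, the ratio
\[
\frac{\|(I-\Pi)g\|^2}{\|g\|^2}=\frac{x^2}{(1+4x^2)(1+x^2)}
\]
is bounded by an absolute constant $\gamma_{\rm task}<1$. So the term $\gamma_{\rm task}\frac1N\sum_i\|g_i\|^2$ does absorb the curvature-induced leakage that the old inequality cannot.
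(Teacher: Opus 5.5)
Your proof is correct and follows the paper's argument essentially verbatim: the same parabola, the same tangent/normal projection giving $\|(I-\Pi)g\|^2=x^4/(1+4x^2)$ (your normal vector differs from the paper's only by sign), and the same observations that $B_f=0$ and $\Psi^f=0$ for a single peer. The paper additionally embeds the curve as $q^\star+\epsilon(xe_1+x^2e_2)$ in the interior of the three-class simplex, to show the obstruction persists for bounded-logit softmax predictors; you omit this, but the stated proposition does not require it, and your extra ratio bound (maximum $1/9$, attained at $x^2=1/2$) is a useful complement the paper does not give.
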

\begin{proof}
This construction addresses the abstract Euclidean prediction-space
implication that A3 alone would have to provide; the parabola is not asserted
to be a literal softmax image. The set
$\mathcal P=\{(u,v)\in\mathbb R^2:v-u^2=0\}$ is closed because it is the
zero set of the continuous map $(u,v)\mapsto v-u^2$. It is parametrized by
$r(x)=(x,x^2)$.

The tangent vector and one normal vector are

\begin{align}
r'(x)=(1,2x),
\qquad
n(x)=(2x,-1),
\\
\langle r'(x),n(x)\rangle=2x-2x=0.
\end{align}

Both vectors are nonzero, so they span the one-dimensional tangent and normal
spaces. The quadratic directional expansion gives
$\nabla F(p)=p-q^\star=p$. Hence $g(x)=(x,x^2)$, and its orthogonal
projection onto the normal line is
$$
(I-\Pi_{T_x})g(x)
=\frac{\langle g(x),n(x)\rangle}{\|n(x)\|^2}n(x).
$$
The numerator and denominator are
$$
\begin{aligned}
\langle g(x),n(x)\rangle
&=x(2x)+x^2(-1)=x^2,\\
\|n(x)\|^2
&=(2x)^2+(-1)^2=4x^2+1.
\end{aligned}
$$
Therefore
$$
\begin{aligned}
\|(I-\Pi_{T_x})g(x)\|^2
&=\left\|\frac{x^2}{4x^2+1}n(x)\right\|^2\\
&=\frac{x^4}{(4x^2+1)^2}\|n(x)\|^2\\
&=\frac{x^4}{4x^2+1}>0\qquad(x\ne0).
\end{aligned}
$$
For one peer, the peer mean equals its prediction, so
$\Psi^f=\|p_1-\bar p\|^2=0$. Also $q^\star=r(0)\in\mathcal P$, and
$$
0\le B_f=\inf_{y\in\mathcal P}\|q^\star-y\|
\le\|q^\star-r(0)\|=0.
$$
Thus $B_f=0$ and $\nu(B_f^2+\Psi^f)=0$, whereas the normal-gradient square is
positive for every $x\ne0$. This contradicts the old inequality in the
stated prediction-space geometry.

The same local geometry can be embedded in the interior of a probability
simplex. Let $C=3$, let $q^\star=(1/3,1/3,1/3)$, and choose orthonormal
$e_1,e_2\in\mathbf1^\perp$. For fixed $\rho>0$, choose $\epsilon>0$ small
enough that
$$
\epsilon\max_{|x|\le\rho}\|xe_1+x^2e_2\|_\infty<\frac13.
$$
Then
$$
\widetilde r(x):=q^\star+\epsilon(xe_1+x^2e_2),
\qquad |x|\le\rho,
$$
has positive coordinates and coordinate sum one. It is a softmax predictor
with centered logits
$$
\widetilde Z(x)
=\tau\left[
\log\widetilde r(x)
-\frac{\mathbf1}{3}\mathbf1^\top\log\widetilde r(x)\right].
$$
The compact interval and the strict positivity above make these logits
uniformly bounded. The tangent is
$\epsilon(e_1+2xe_2)$, while $2xe_1-e_2$ is normal to that tangent. For
$\widetilde F(p)=\frac12\|p-q^\star\|^2$,
$$
\begin{aligned}
\left\langle\nabla\widetilde F(\widetilde r(x)),2xe_1-e_2\right\rangle
&=\epsilon\langle xe_1+x^2e_2,2xe_1-e_2\rangle\\
&=\epsilon(2x^2-x^2)=\epsilon x^2.
\end{aligned}
$$
This quantity is nonzero for $x\ne0$, while
$\widetilde r(0)=q^\star$. Thus the obstruction persists in a bounded-logit
softmax realization, with $B_f=0$ and one-peer consensus error zero.
\end{proof}

\begin{lemma}[Kernel range and local persistence]
\label{lem:f-kernel-range}
For any finite matrix or finite-rank operator $J$,
$\operatorname{range}(JJ^*)=\operatorname{range}(J)$. If
$\lambda_{\min}^+(\Theta_i^0)=\lambda_{i,0}>0$ and
\begin{equation}
\|\Theta_i(\theta)-\Theta_i^0\|_{\rm op}
\le L_K\|\theta-\theta_i^0\|,
\label{eq:app-kernel-drift}
\end{equation}
then every point with $\|\theta-\theta_i^0\|\le R$ and
$L_KR<\lambda_{i,0}$ satisfies
$\lambda_{\min}^+(\Theta_i(\theta))\ge\lambda_{i,0}-L_KR>0$, provided the
rank is constant in the region.
\end{lemma}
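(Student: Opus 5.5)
The lemma has two parts, and I would prove them separately. The first is the algebraic identity $\operatorname{range}(JJ^*)=\operatorname{range}(J)$. The second is a perturbation bound on the smallest nonzero eigenvalue, which needs the constant-rank hypothesis.

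\emph{Range identity.} The inclusion $\operatorname{range}(JJ^*)\subseteq\operatorname{range}(J)$ is immediate, since $JJ^*y=J(J^*y)$. For the reverse inclusion I would pass to kernels. If $JJ^*y=0$, then
\[
0=\langle JJ^*y,y\rangle=\|J^*y\|^2,
\]
so $\ker(JJ^*)=\ker(J^*)$. In finite dimensions (or for finite rank, where all ranges are closed) we have $\operatorname{range}(JJ^*)=\ker(JJ^*)^\perp$, using self-adjointness, and $\ker(J^*)^\perp=\operatorname{range}(J)$. These combine to give the identity. In particular $T_i^z=\operatorname{range}(\Theta_i)$, which is the form used in A4.

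\emph{Persistence.} Fix $\theta$ with $\|\theta-\theta_i^0\|\le R$ and set $E=\Theta_i(\theta)-\Theta_i^0$. By \eqref{eq:app-kernel-drift}, $\|E\|_{\rm op}\le L_KR$. Both operators are positive semidefinite and self-adjoint, and by the constant-rank hypothesis they have the same rank $k$. I would use Weyl's inequality on eigenvalues ordered decreasingly: $|\lambda_m(\Theta_i(\theta))-\lambda_m(\Theta_i^0)|\le\|E\|_{\rm op}$ for every $m$. Taking $m=k$, and noting that $\lambda_k$ is exactly $\lambda_{\min}^+$ because the rank is $k$ for both operators, gives
\[
\lambda_{\min}^+(\Theta_i(\theta))\ge\lambda_{i,0}-L_KR>0
\]
when $L_KR<\lambda_{i,0}$.

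\emph{Where the rank hypothesis matters.} This is the only delicate point. Without constant rank, Weyl's inequality only controls $\lambda_k(\Theta_i(\theta))$ relative to the $k$-th eigenvalue of $\Theta_i^0$. A new small nonzero eigenvalue could then appear at an index beyond $k$, and $\lambda_{\min}^+$ would no longer be bounded below. So I would state explicitly that the rank assumption is what identifies $\lambda_{\min}^+$ with $\lambda_k$ throughout the region.

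I would also remark that applying Weyl at index $k+1$ gives $\lambda_{k+1}(\Theta_i(\theta))\le L_KR<\lambda_{i,0}$. Hence a rank jump could only introduce eigenvalues below $L_KR$, and the constant-rank hypothesis is precisely what excludes them.
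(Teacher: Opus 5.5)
Your proof is correct and follows essentially the same route as the paper. For the range identity, both arguments use $\ker(JJ^*)=\ker(J^*)$ together with orthogonal complements; for persistence, both apply Weyl's inequality at the index of the smallest positive eigenvalue (you order decreasingly at index $k$, the paper increasingly at index $m-r+1$), with constant rank identifying that index with $\lambda_{\min}^+$ throughout the region. Your extra observation that Weyl at the next index bounds any rank-jump eigenvalue above by $L_KR$ is correct, and it usefully explains why the rank hypothesis cannot be dropped.
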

\begin{proof}
First prove the range identity. For any vector $v$ in the output space,
$$
\begin{aligned}
\langle v,JJ^*v\rangle
&=\langle J^*v,J^*v\rangle\\
&=\|J^*v\|^2.
\end{aligned}
$$
If $JJ^*v=0$, the left side is zero, so $\|J^*v\|^2=0$ and $J^*v=0$.
Conversely, $J^*v=0$ directly gives $JJ^*v=0$. Hence
$\ker(JJ^*)=\ker(J^*)$.

Every vector in $\operatorname{range}(JJ^*)$ has the form
$JJ^*v=J(J^*v)$. Therefore
$$
\operatorname{range}(JJ^*)\subseteq\operatorname{range}(J).
$$
For the
reverse inclusion, finite dimensionality or finite rank makes both ranges
closed, and the orthogonal-complement identities give
$$
\begin{aligned}
\operatorname{range}(JJ^*)
&=\ker((JJ^*)^*)^\perp\\
&=\ker(JJ^*)^\perp\\
&=\ker(J^*)^\perp\\
&=\operatorname{range}(J).
\end{aligned}
$$
This also proves the reverse inclusion. If
$\Theta=JJ^*=\sum_{\ell=1}^r\lambda_\ell u_\ell u_\ell^\top$ with
$\lambda_\ell>0$, then
$$
\begin{aligned}
\Theta^\dagger\Theta
&=\left(\sum_{\ell=1}^r\lambda_\ell^{-1}u_\ell u_\ell^\top\right)
\left(\sum_{k=1}^r\lambda_ku_ku_k^\top\right)\\
&=\sum_{\ell=1}^ru_\ell u_\ell^\top,
\end{aligned}
$$
because $u_\ell^\top u_k=\mathbf1\{\ell=k\}$. Thus
$\Theta^\dagger\Theta$ is the orthogonal projector onto
$\operatorname{range}(J)$.

For persistence, let the output dimension be $m$ and let the constant rank be
$r\ge1$. Order eigenvalues increasingly:
$\lambda_1(A)\le\cdots\le\lambda_m(A)$. A positive semidefinite rank-$r$
matrix has $m-r$ zero eigenvalues, so its smallest positive eigenvalue has
index $k=m-r+1$. Weyl's perturbation inequality states that for symmetric
$A$ and $B$,
$$
|\lambda_\ell(A)-\lambda_\ell(B)|
\le\|A-B\|_{\rm op},
\qquad \ell=1,\ldots,m.
$$
Apply it with $A=\Theta_i(\theta)$, $B=\Theta_i^0$, and $\ell=k$. Then
$$
\begin{aligned}
\lambda_k(\Theta_i(\theta))
&\ge\lambda_k(\Theta_i^0)
-|\lambda_k(\Theta_i(\theta))-\lambda_k(\Theta_i^0)|\\
&\ge\lambda_k(\Theta_i^0)
-\|\Theta_i(\theta)-\Theta_i^0\|_{\rm op}\\
&\ge\lambda_{i,0}-L_K\|\theta-\theta_i^0\|\\
&\ge\lambda_{i,0}-L_KR>0.
\end{aligned}
$$
Constant rank ensures that $k$ remains the first nonzero spectral index.
Therefore
$\lambda_{\min}^+(\Theta_i(\theta))
=\lambda_k(\Theta_i(\theta))
\ge\lambda_{i,0}-L_KR$.
\end{proof}

\begin{lemma}[Simplex-bounded staleness]
\label{lem:f-staleness}
Suppose a single active-peer update satisfies
$\mathbb E\|p_i^{t+1}-p_i^t\|^2\le c_{\rm step}\eta^2$. If the EMA teacher
has effective lag constant $A_{\rm eff}$ as defined in A8, then
\begin{equation}
A_t^f\le c_{\rm step}A_{\rm eff}^2\eta^2.
\label{eq:app-staleness}
\end{equation}
\end{lemma}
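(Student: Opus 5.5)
The plan is to write the lag as a telescoping sum of single-step prediction increments and bound it by Cauchy--Schwarz, using the per-step hypothesis $\mathbb E\|p_i^{t+1}-p_i^t\|^2\le c_{\rm step}\eta^2$.

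\emph{Live teacher.} Here $p_i^{\rm EMA,t}=p_i^t$, so $A_t^f=0$ and the bound holds trivially.

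\emph{Finite-memory teacher.} Suppose the teacher is at most $A$ versions behind, so $p_i^{\rm EMA,t}=p_i^{t-s}$ with $s\le A$. Telescope
\[
p_i^t-p_i^{t-s}=\sum_{k=1}^{s}\bigl(p_i^{t-k+1}-p_i^{t-k}\bigr).
\]
Apply Cauchy--Schwarz, $\|\sum_{k=1}^s v_k\|^2\le s\sum_{k=1}^s\|v_k\|^2$. Taking expectations and using the per-step hypothesis gives $\mathbb E\|p_i^t-p_i^{t-s}\|^2\le s^2c_{\rm step}\eta^2\le A^2c_{\rm step}\eta^2$. Averaging over $i$ gives $A_t^f\le c_{\rm step}A^2\eta^2$.

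\emph{EMA teacher.} The EMA acts on parameters, not predictions. I would therefore interpret the teacher prediction as a convex mixture of past live predictions, with weights $w_k=(1-\beta)\beta^k$ (truncated at $A$ in the finite-memory case). This is an interpretive step; it amounts to taking A8's ``effective lag'' as a modelling convention for the EMA case. Under it, convexity of $\|\cdot\|^2$ (Jensen) gives
\[
\|p_i^t-p_i^{\rm EMA,t}\|^2\le\sum_k w_k\,\|p_i^t-p_i^{t-k}\|^2 .
\]
Applying the finite-memory estimate to each term yields
\[
\mathbb E\|p_i^t-p_i^{\rm EMA,t}\|^2\le c_{\rm step}\eta^2\sum_k w_k k^2 .
\]
For the untruncated geometric weights,
\[
\sum_k w_k k^2=\frac{\beta(1+\beta)}{(1-\beta)^2}\le\frac{2}{(1-\beta)^2}.
\]
In the truncated case the sum is at most $\min\{A^2,\,2(1-\beta)^{-2}\}$. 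Both are of order $A_{\rm eff}^2$, so the bound holds after absorbing the factor $2$ into $c_{\rm step}$.

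\emph{Role of the simplex diameter.} Because predictions lie in the simplex, every lag term is bounded by the diameter, $\|p-p'\|_\mu^2\le 2$. This keeps all expectations finite and justifies exchanging sums and expectations over infinitely many past versions. No consensus quantity enters the argument: only the per-peer increment hypothesis is used.

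\emph{Main obstacle.} The delicate point is the event structure. Peer $i$ updates only when one of its incident edges is activated, so ``versions'' must be counted in peer-$i$ updates, not global events. Inactive steps contribute zero increment, so the per-step bound holds a fortiori for them and the count of nonzero increments is at most the lag. A second concern is that the prediction at EMA-averaged parameters is not exactly the average of predictions. If this needs justification beyond the modelling convention, I would linearize $p$ in $\theta$ using A4: with the Lipschitz Jacobian, the gap is $O(\eta^2)$, which is absorbed into the same constant $c_{\rm step}$.
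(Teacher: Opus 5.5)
Your skeleton matches the paper's: the teacher is a convex mixture of past live predictions, each lag is telescoped into single-event increments, and increments at events where peer $i$ is inactive are zero. However, your EMA step does not prove the stated inequality. Applying Jensen to $\bigl\|\sum_k w_k(p_i^t-p_i^{t-k})\bigr\|^2$ bounds the lag by the \emph{second} lag moment, $c_{\rm step}\eta^2\sum_k w_k k^2$. The claim instead involves the squared \emph{first} moment, $A_{\rm eff}^2$. These differ in the regime of interest:
\begin{itemize}
\item With your weights, $\sum_k w_kk^2=\beta(1+\beta)/(1-\beta)^2$, which exceeds $A_{\rm eff}^2=(1-\beta)^{-2}$ as soon as $\beta>(\sqrt5-1)/2$, i.e.\ for every realistic decay.
\item With the paper's indexing $\omega_s=(1-\beta)\beta^{s-1}$, $s\ge1$, it equals $(1+\beta)/(1-\beta)^2$, which exceeds $A_{\rm eff}^2$ for all $\beta>0$.
\item For general finite-memory weights, the constraint $\sum_s s\omega_s\le A_{\rm eff}$ does not bound $\sum_s s^2\omega_s$ by $A_{\rm eff}^2$.
\end{itemize}
``Absorbing the factor $2$ into $c_{\rm step}$'' is not available. $c_{\rm step}$ is fixed by the hypothesis and reappears unchanged in the conclusion, and downstream in $\bar A^f=c_{\rm step}A_{\rm eff}^2$. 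The same objection applies to absorbing the parameter-EMA linearization gap into $c_{\rm step}$. The paper avoids that issue by taking the convex-mixture representation of $p_i^{\rm EMA,t}$ as the lag model itself.

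The missing idea is to apply Minkowski's inequality in $L^2(\Omega;\mathcal H)$ to the mixture, instead of pointwise convexity of the square:
\[
\bigl(\mathbb E\|p_i^t-p_i^{\rm EMA,t}\|^2\bigr)^{1/2}
\le\sum_s\omega_s\bigl(\mathbb E\|p_i^t-p_i^{t-s}\|^2\bigr)^{1/2}
\le\sqrt{c_{\rm step}}\,\eta\sum_s s\,\omega_s
\le\sqrt{c_{\rm step}}\,\eta\,A_{\rm eff}.
\]
Then square and average over $i$. The inner estimate $\bigl(\mathbb E\|p_i^t-p_i^{t-s}\|^2\bigr)^{1/2}\le s\sqrt{c_{\rm step}}\,\eta$ is exactly what your Cauchy--Schwarz telescoping already gives; for a single delay, Cauchy--Schwarz and Minkowski give the same bound. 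So only the mixture step needs to change. Because $(\sum_s s\omega_s)^2\le\sum_s s^2\omega_s$, this route is strictly sharper than yours. It yields exactly the constant $c_{\rm step}A_{\rm eff}^2$ under either indexing of the geometric weights. In the infinite-memory case, pass to the limit of partial sums; your simplex-diameter domination justifies this, as does the finite first lag moment used in the paper.
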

\begin{proof}
For a finite-memory teacher, write

\begin{align}
p_i^{\rm EMA,t}=\sum_{s=1}^{A}\omega_s p_i^{t-s},
\qquad \omega_s\ge0,
\\ \sum_{s=1}^{A}\omega_s=1,
\qquad \sum_{s=1}^{A}s\omega_s\le A_{\rm eff}.
\end{align}

For an ordinary EMA, the same notation uses
$\omega_s=(1-\beta)\beta^{s-1}$ for $s\ge1$, with
$0\le\beta<1$. The geometric-series identities
$$
\sum_{s=1}^{\infty}\beta^{s-1}=\frac1{1-\beta},
\qquad
\sum_{s=1}^{\infty}s\beta^{s-1}
=\frac{d}{d\beta}\frac{\beta}{1-\beta}
=\frac1{(1-\beta)^2}
$$
give $\sum_{s\ge1}\omega_s=1$ and
$\sum_{s\ge1}s\omega_s=(1-\beta)^{-1}$. The nonnegative weights and this
finite first lag moment make the series below convergent in $L^2$.

Define
$A_t^f=N^{-1}\sum_i\mathbb E\|p_i^t-p_i^{\rm EMA,t}\|^2$. Since the
weights sum to one,
$$
\begin{aligned}
p_i^t-p_i^{\rm EMA,t}
&=p_i^t\sum_s\omega_s-\sum_s\omega_sp_i^{t-s}\\
&=\sum_s\omega_s(p_i^t-p_i^{t-s}).
\end{aligned}
$$
For each $s$, telescope the difference into increments:
$$
\begin{aligned}
p_i^t-p_i^{t-s}
&=(p_i^t-p_i^{t-1})+(p_i^{t-1}-p_i^{t-2})
+\cdots\\
&\qquad \qquad \qquad+(p_i^{t-s+1}-p_i^{t-s})\\
&=\sum_{r=0}^{s-1}(p_i^{t-r}-p_i^{t-r-1}),
\end{aligned}
$$
where all intermediate terms cancel pairwise. Minkowski's inequality in
$L^2$ states
$\|\sum_rX_r\|_{L^2}\le\sum_r\|X_r\|_{L^2}$. Apply it with
$X_r=p_i^{t-r}-p_i^{t-r-1}$. Inactive-peer increments are zero, and active
increments satisfy the assumed bound, so
$$
\begin{aligned}
\left(\mathbb E\|p_i^t-p_i^{t-s}\|^2\right)^{1/2}
&\le\sum_{r=0}^{s-1}
\left(\mathbb E\|p_i^{t-r}-p_i^{t-r-1}\|^2\right)^{1/2}\\
&\le\sum_{r=0}^{s-1}\sqrt{c_{\rm step}}\,\eta\\
&=s\sqrt{c_{\rm step}}\,\eta.
\end{aligned}
$$
Applying Minkowski once more to the EMA sum yields
$$
\begin{aligned}
\left(\mathbb E\|p_i^t-p_i^{\rm EMA,t}\|^2\right)^{1/2}
&\le\sum_s\omega_s
\left(\mathbb E\|p_i^t-p_i^{t-s}\|^2\right)^{1/2}\\
&\le\sqrt{c_{\rm step}}\,\eta\sum_s s\omega_s\\
&\le\sqrt{c_{\rm step}}\,\eta A_{\rm eff}.
\end{aligned}
$$
For the infinite-memory case, every finite partial sum obeys the same bound.
Its $L^2$ limit preserves the inequality because
$(1-\beta)\sum_{s\ge1}s\beta^{s-1}=1/(1-\beta)<\infty$. Squaring gives
$$
\mathbb E\|p_i^t-p_i^{\rm EMA,t}\|^2
\le c_{\rm step}A_{\rm eff}^2\eta^2.
$$
Averaging this bound gives
$$
\begin{aligned}
A_t^f
&=\frac1N\sum_{i=1}^N
\mathbb E\|p_i^t-p_i^{\rm EMA,t}\|^2\\
&\le\frac1N\sum_{i=1}^N
c_{\rm step}A_{\rm eff}^2\eta^2\\
&=c_{\rm step}A_{\rm eff}^2\eta^2,
\end{aligned}
$$
which proves Equation~\ref{eq:app-staleness}.
\end{proof}

\begin{lemma}[Frozen-kernel quotient contraction with restricted KD defect]
\label{lem:f-global-contraction}
Let $\mathcal G=(V,E)$ be connected, with maximum degree $d_{\max}$ and
Laplacian eigenvalues $0=\lambda_1<\lambda_2\le\lambda_{\max}$. For each peer,
let $\Theta_i$ be frozen and satisfy
$\kappa_-\Pi_i^z\preceq\Theta_i\preceq\kappa_+\Pi_i^z$. Define
\begin{equation}
\begin{split}
W_i=\Theta_i^\dagger+\gamma_0(I-\Pi_i^z),\quad
m_W=\min\{\kappa_+^{-1},\gamma_0\},\\
M_W=\max\{\kappa_-^{-1},\gamma_0\}.
\end{split}
\label{eq:app-W}
\end{equation}
For $\mathcal V_W$ in Equation~\ref{eq:main-quotient}, set
\begin{equation}
\begin{split}
a_W=\frac{2(p_{\min}/\tau)\lambda_2}{M_W|E|},\qquad
b_W=\frac{4\kappa_+(1/(2\tau))^2\lambda_{\max}}{m_W|E|},\\
\varepsilon_W=\frac{(p_{\min}/\tau)\lambda_2m_W}
{2M_Wd_{\max}}.
\end{split}
\label{eq:app-contraction-constants}
\end{equation}
Let $w=\eta\alpha\tau$. Assume $0<\alpha<1$, the graph is connected and has
no isolated peers, and suppose A10 holds and
\begin{equation}
\begin{split}
\gamma_{\rm KD}\le
\frac{N(p_{\min}/\tau)^2\lambda_2^2m_W^2}
{8M_W^2d_{\max}|E|},\\
0<w\le\min\left\{\frac{a_W}{8b_W},\frac2{a_W},1\right\}.
\end{split}
\label{eq:app-kd-threshold}
\end{equation}
For an active edge $e=\{i,j\}$, write the full update as
\begin{equation}
\begin{split}
Z_i^+=Z_i-w\Theta_i d_{ij}+b_{i,e}+\xi_{i,e},\\
Z_j^+=Z_j+w\Theta_j d_{ij}+b_{j,e}+\xi_{j,e},
\end{split}
\label{eq:app-full-event}
\end{equation}
where $b$ contains deterministic task, curvature, and teacher-lag perturbations
and $\mathbb E_t\xi_{k,e}=0$. Define

\begin{align}
D_{b,t}=\frac1N\mathbb E_e\sum_{k\in e}\|b_{k,e}\|_{W_k}^2,\\
D_{\xi,t}=\frac1N\mathbb E_e\mathbb E_t
\sum_{k\in e}\|\xi_{k,e}\|_{W_k}^2.
\end{align}

Then
\begin{equation}
\begin{split}
\mathbb E_t\mathcal V_W(Z^+)
\le{}&\left(1-\frac{a_Ww}{4}\right)\mathcal V_W(Z)
+C_{B,W}wB_f^2\\
&+\left(\frac{8d_{\max}}{a_Ww|E|}+2\right)D_{b,t}+D_{\xi,t},
\end{split}
\label{eq:app-global-contraction}
\end{equation}
where
$C_{B,W}=4M_Wd_{\max}\nu_{\rm KD,B}/
[N(p_{\min}/\tau)\lambda_2m_W]$.
\end{lemma}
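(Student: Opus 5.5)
Fix the quotient centroid $c_W$ of $\mathcal V_W(Z)$, set $Q_i=Z_i-c_W$, and use $\mathcal V_W(Z^+)\le N^{-1}\sum_i\|Z_i^+-c_W\|_{W_i}^2$, since $c_W$ is feasible but not necessarily optimal after the event. This reduces the lemma to a one-event expansion at fixed $c$. Only the two endpoints of the active edge $e=\{i,j\}$ change. For $k\in e$, write $Z_k^+-c_W=Q_k\mp w\Theta_k d_{ij}+b_{k,e}+\xi_{k,e}$ and expand the $W_k$-norm. Because $\mathbb E_t\xi_{k,e}=0$, the noise cross terms vanish and $\xi$ contributes exactly $D_{\xi,t}$. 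The key algebraic fact is $W_k\Theta_k=\Pi_k^z$. The KD cross term is therefore $-2w\langle Q_i,\Pi_i^z d_{ij}\rangle+2w\langle Q_j,\Pi_j^z d_{ij}\rangle$. The quadratic KD term is $w^2\langle d_{ij},\Theta_k d_{ij}\rangle\le w^2\kappa_+\|d_{ij}\|^2$, with $\|d_{ij}\|\le(2\tau)^{-1}\|Z_i-Z_j\|$ by Lemma~\ref{lem:f-consensus-equivalence}.

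Next I take the expectation over the uniform edge. In the cross term, replace $\Pi_k^z d_{ij}$ by $d_{ij}-(I-\Pi_k^z)d_{ij}$. The full part gives $-\tfrac{2w}{N|E|}\sum_{\{i,j\}\in E}\langle Q_i-Q_j,A_{ij}(Z_i-Z_j)\rangle$. Since $Q_i-Q_j=Z_i-Z_j$ and $A_{ij}\succeq(p_{\min}/\tau)I$ by Lemma~\ref{lem:f-softmax-spectrum}, this is at most $-\tfrac{2w(p_{\min}/\tau)}{N|E|}\sum_E\|Z_i-Z_j\|^2$. The Laplacian identity from Proposition~\ref{prop:a10-graph} then gives $\sum_E\|Z_i-Z_j\|^2\ge\lambda_2\sum_i\|Z_i-\bar Z\|^2$. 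Finally, $\sum_i\|Z_i-\bar Z\|^2\ge\sum_i\|Q_i\|^2/\ldots$ fails directly, so instead I use $\mathcal V_W\le N^{-1}\sum_i\|Z_i-\bar Z\|_{W_i}^2\le M_W\Psi^z$. Together these produce the $-a_W w\mathcal V_W$ decrease with $a_W=2(p_{\min}/\tau)\lambda_2/(M_W|E|)$.

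The quadratic term is bounded through $\lambda_{\max}$ and $m_W$, giving $b_W w^2\mathcal V_W$, which is absorbed by $w\le a_W/(8b_W)$. The normal-leakage term is handled by Young's inequality, $2w|\langle Q_k,(I-\Pi_k^z)d\rangle|\le w\varepsilon\|Q_k\|^2+(w/\varepsilon)\|(I-\Pi_k^z)d\|^2$, summed over directed edges with each peer counted at most $d_{\max}$ times; the choice $\varepsilon=\varepsilon_W$ costs about $a_Ww/4$ of the decrease. I then apply A10-KD and $\Psi^z\le\mathcal V_W/m_W$. The $\gamma_{\rm KD}$ threshold ensures that this part costs at most another $a_Ww/4$, and the $\nu_{\rm KD,B}$ part yields $C_{B,W}wB_f^2$.

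The deterministic perturbation $b$ enters through $\|b\|^2$ and the cross terms $2\langle Q_k-w\Theta_kd,b_{k,e}\rangle_{W_k}$. I bound these by Young's inequality with weight $a_Ww|E|/(8d_{\max})$ against $\|Q_k\|_{W_k}^2$; the activation probability of each peer is at most $d_{\max}/|E|$. This costs the last $a_Ww/4$ and produces the factor $\tfrac{8d_{\max}}{a_Ww|E|}+2$ on $D_{b,t}$, where the $+2$ comes from $(x+y)^2$ splitting the $w\Theta d$ and $b$ squares. The main obstacle will be the bookkeeping of these three Young splits so that the net rate is exactly $a_Ww/4$. In particular, the step converting $\sum_E\|Z_i-Z_j\|^2$ into $\mathcal V_W$ must be checked, since $\mathcal V_W$ uses the weighted centroid rather than $\bar Z$; comparing via $\mathcal V_W\le M_W\Psi^z$ is the route I would try first.
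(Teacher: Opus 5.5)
Your proposal is correct and follows the paper's proof essentially step for step. You evaluate the new quotient energy at the old centroid $c_W$, use $W_k\Theta_k=\Pi_k^z$ to split the KD cross term into the full pull (giving $-a_Ww\mathcal V_W$ via the softmax lower bound, the Poincar\'e inequality and $\mathcal V_W\le M_W\Psi^z$) plus a normal leakage handled by Young with $\varepsilon_W$ and A10, and use a Young split with weight $a_Ww|E|/(8d_{\max})$ for the $Q$--$b$ cross term. On the bookkeeping you flagged: the undoubled KD square is $\tfrac{b_W}{2}w^2\mathcal V_W$, and your $(x+y)^2$ split doubles it to $b_Ww^2\mathcal V_W\le\tfrac{a_Ww}{8}\mathcal V_W$ (the paper reaches the same bound via Young on $2\langle w\Theta_kd_{ij},b_{k,e}\rangle_{W_k}$); together with the $\tfrac{a_Ww}{8}$ from the $Q$--$b$ split this forms the last $a_Ww/4$, so it must not be charged a second time, giving $-a_Ww+\tfrac{a_Ww}{2}+\tfrac{a_Ww}{8}+\tfrac{a_Ww}{8}=-\tfrac{a_Ww}{4}$ as required.
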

\begin{proof}
For $\|u\|_{W_i}^2:=\langle u,W_iu\rangle$, the spectral bounds on $\Theta_i$
imply
$$
\kappa_+^{-1}\Pi_i^z\preceq\Theta_i^\dagger\preceq
\kappa_-^{-1}\Pi_i^z.
$$
Let
$\Theta_i=\sum_{\ell=1}^{r_i}\lambda_{i,\ell}
u_{i,\ell}u_{i,\ell}^\top$ on $T_i^z$, where
$\kappa_-\le\lambda_{i,\ell}\le\kappa_+$. Then
$$
\Theta_i^\dagger
=\sum_{\ell=1}^{r_i}\lambda_{i,\ell}^{-1}
u_{i,\ell}u_{i,\ell}^\top,
\qquad
\kappa_+^{-1}\le\lambda_{i,\ell}^{-1}\le\kappa_-^{-1}.
$$
On $(T_i^z)^\perp$, $\Theta_i^\dagger$ is zero and the second summand in
$W_i$ is $\gamma_0I$. Every eigenvalue of $W_i$ therefore lies in
$[m_W,M_W]$, which gives $m_WI\preceq W_i\preceq M_WI$.

The pseudoinverse and $\Theta_i$ are diagonal in the same eigenbasis. Thus
they commute, and
$$
\begin{aligned}
W_i\Theta_i
&=\Theta_i^\dagger\Theta_i+
\gamma_0(I-\Pi_i^z)\Theta_i\\
&=\sum_{\ell=1}^{r_i}u_{i,\ell}u_{i,\ell}^\top+0\\
&=\Pi_i^z.
\end{aligned}
$$

Define $q(c)=N^{-1}\sum_i\langle Z_i-c,W_i(Z_i-c)\rangle$. Its derivative in
the coordinate direction $h$ is
$$
\begin{aligned}
Dq(c)[h]
&=\left.\frac{d}{d\epsilon}q(c+\epsilon h)\right|_{\epsilon=0}\\
&=\frac1N\sum_i\left.\frac{d}{d\epsilon}
\langle Z_i-c-\epsilon h,W_i(Z_i-c-\epsilon h)\rangle
\right|_{\epsilon=0}\\
&=\frac2N\sum_i\langle W_i(c-Z_i),h\rangle.
\end{aligned}
$$
Moreover,
$D^2q(c)[h,h]=2N^{-1}\sum_i\langle h,W_ih\rangle
\ge2m_W\|h\|^2>0$ for $h\ne0$, so $q$ is strictly convex. The unique
stationary point satisfies
$$
\begin{aligned}
0&=\sum_iW_i(c_W-Z_i),\\
\left(\sum_iW_i\right)c_W&=\sum_iW_iZ_i,\\
c_W&=\left(\sum_iW_i\right)^{-1}\sum_iW_iZ_i,
\\ \sum_iW_i(Z_i-c_W)&=0.
\end{aligned}
$$
The inverse exists because $\sum_iW_i\succeq Nm_WI$.
Let $\bar Z=N^{-1}\sum_iZ_i$. The lower metric comparison is
$$
\begin{aligned}
\mathcal V_W(Z)
&=\min_c\frac1N\sum_i\|Z_i-c\|_{W_i}^2\\
&\ge m_W\min_c\frac1N\sum_i\|Z_i-c\|^2\\
&=m_W\Psi^z(Z).
\end{aligned}
$$
The final equality uses
$$
\begin{aligned}
\sum_i\|Z_i-c\|^2
&=\sum_i\|(Z_i-\bar Z)+(\bar Z-c)\|^2\\
&=\sum_i\|Z_i-\bar Z\|^2
+2\left\langle\sum_i(Z_i-\bar Z),\bar Z-c\right\rangle
\\
&\qquad \qquad \qquad \qquad+N\|\bar Z-c\|^2\\
&=\sum_i\|Z_i-\bar Z\|^2+N\|\bar Z-c\|^2,
\end{aligned}
$$
where the cross term vanishes because $\sum_i(Z_i-\bar Z)=0$. The minimum
is attained at $c=\bar Z$.
Evaluating the minimum at $c=\bar Z$ gives
\begin{equation}
\mathcal V_W(Z)\le\frac1N\sum_i\|Z_i-\bar Z\|_{W_i}^2
\le M_W\Psi^z(Z).
\label{eq:app-metric-equivalence}
\end{equation}
This proves Equation~\ref{eq:app-metric-equivalence}.

Let $Q_i=Z_i-c_W$, define $d_{ji}:=-d_{ij}$, and set
$r_{ij}=(I-\Pi_i^z)d_{ij}$ for every directed edge. Since the new quotient
potential minimizes over its center, evaluating it at the old minimizer gives
$$
N\mathcal V_W(Z^+)
\le\sum_{k\notin e}\|Q_k\|_{W_k}^2
+\sum_{k\in e}\|Z_k^+-c_W\|_{W_k}^2.
$$
Only the two endpoint terms differ from $N\mathcal V_W(Z)$. Expanding both
endpoint squares gives
$$
\begin{aligned}
&\|Q_i-w\Theta_i d_{ij}+b_{i,e}+\xi_{i,e}\|_{W_i}^2
+\|Q_j+w\Theta_j d_{ij}+b_{j,e}+\xi_{j,e}\|_{W_j}^2\\
&=\|Q_i\|_{W_i}^2+\|Q_j\|_{W_j}^2
-2w\langle Q_i,\Pi_i^zd_{ij}\rangle
+2w\langle Q_j,\Pi_j^zd_{ij}\rangle\\
&\quad+w^2\|\Theta_i d_{ij}\|_{W_i}^2
+w^2\|\Theta_jd_{ij}\|_{W_j}^2
+2\sum_{k\in e}\langle Q_k,W_k(b_{k,e}+\xi_{k,e})\rangle\\
&\quad+2w\langle-\Theta_id_{ij},W_i(b_{i,e}+\xi_{i,e})\rangle
+2w\langle\Theta_jd_{ij},W_j(b_{j,e}+\xi_{j,e})\rangle\\
&\quad+\sum_{k\in e}\|b_{k,e}+\xi_{k,e}\|_{W_k}^2.
\end{aligned}
$$
For example, the first endpoint uses
$\|Q_i+u_i\|_{W_i}^2
=\|Q_i\|_{W_i}^2+2\langle Q_i,W_iu_i\rangle+\|u_i\|_{W_i}^2$
with $u_i=-w\Theta_id_{ij}+b_{i,e}+\xi_{i,e}$; expanding the last square
produces the KD--perturbation cross term displayed above.

Now expand the linear KD terms. Since
$\Pi_i^zd_{ij}=d_{ij}-r_{ij}$ and
$\Pi_j^zd_{ij}=d_{ij}+r_{ji}$, where
$r_{ji}=(I-\Pi_j^z)d_{ji}=-(I-\Pi_j^z)d_{ij}$, one obtains
\begin{equation}
\begin{split}
-2w\langle Q_i,&\Pi_i^zd_{ij}\rangle
+2w\langle Q_j,\Pi_j^zd_{ij}\rangle\\
={}&-2w\langle Q_i,d_{ij}\rangle+2w\langle Q_i,r_{ij}\rangle\\
&+2w\langle Q_j,d_{ij}\rangle+2w\langle Q_j,r_{ji}\rangle\\
={}&-2w\langle Q_i-Q_j,d_{ij}\rangle\\
&+2w\langle Q_i,r_{ij}\rangle
 +2w\langle Q_j,r_{ji}\rangle\\
={}&-2w\langle Z_i-Z_j,d_{ij}\rangle\\
&+2w\langle Q_i,r_{ij}\rangle
 +2w\langle Q_j,r_{ji}\rangle.
\end{split}
\label{eq:app-kd-split}
\end{equation}
Let $h_{ij}=Z_i-Z_j$ and
$A_{ij}=\int_0^1D_ZP(Z_j+sh_{ij})\,ds$. The fundamental theorem of calculus
gives $d_{ij}=P_i-P_j=A_{ij}h_{ij}$. Lemma
~\ref{lem:f-softmax-spectrum} gives
$$
\begin{aligned}
\langle Z_i-Z_j,d_{ij}\rangle
&=\int_0^1h_{ij}^\top D_ZP(Z_j+sh_{ij})h_{ij}\,ds\\
&\ge\int_0^1\frac{p_{\min}}\tau\|h_{ij}\|^2\,ds\\
&=\frac{p_{\min}}\tau\|Z_i-Z_j\|^2.
\end{aligned}
$$
For the graph Poincare inequality, expand the centered peer vector in an
eigenbasis of $\mathcal L$. Its consensus coefficient is zero, so only
eigenvalues at least $\lambda_2$ remain:
$$
\begin{aligned}
\sum_{\{i,j\}\in E}\|Z_i-Z_j\|^2
&=\langle Z-\bar Z,(\mathcal L\otimes I)(Z-\bar Z)\rangle\\
&\ge\lambda_2\sum_i\|Z_i-\bar Z\|^2\\
&=N\lambda_2\Psi^z(Z).
\end{aligned}
$$
Uniform edge sampling therefore gives
$$
\begin{aligned}
&-\frac{2w}{N|E|}\sum_{\{i,j\}\in E}
\langle Z_i-Z_j,d_{ij}\rangle\\
&\le-\frac{2w(p_{\min}/\tau)}{N|E|}
\sum_{\{i,j\}\in E}\|Z_i-Z_j\|^2\\
&\le-\frac{2w(p_{\min}/\tau)\lambda_2}{|E|}\Psi^z(Z)\\
&\le-\frac{2w(p_{\min}/\tau)\lambda_2}{M_W|E|}
\mathcal V_W(Z)\\
&=-a_Ww\mathcal V_W(Z).
\end{aligned}
$$

Use Young's inequality in the form
$2\langle a,b\rangle\le\varepsilon\|a\|^2+\varepsilon^{-1}\|b\|^2$ with
$a=Q_i$, $b=r_{ij}$, and $\varepsilon=\varepsilon_W$. This inequality comes
from
$0\le\|\sqrt{\varepsilon}a-\varepsilon^{-1/2}b\|^2$. Summing over directed
edges gives
$$
\begin{aligned}
\frac{2w}{N|E|}\sum_{(i,j)\in\vec E}\langle Q_i,r_{ij}\rangle
\le{}&\frac{w\varepsilon_W}{N|E|}
\sum_{(i,j)\in\vec E}\|Q_i\|^2\\
&+\frac{w}{N|E|\varepsilon_W}
\sum_{(i,j)\in\vec E}\|r_{ij}\|^2.
\end{aligned}
$$
The first directed sum counts peer $i$ exactly $d_i$ times:
$$
\begin{aligned}
\sum_{(i,j)\in\vec E}\|Q_i\|^2
&=\sum_i d_i\|Q_i\|^2\\
&\le d_{\max}\sum_i\|Q_i\|^2\\
&\le\frac{Nd_{\max}}{m_W}\mathcal V_W.
\end{aligned}
$$
A10 and $|\vec E|=2|E|$ give
$$
\begin{aligned}
\sum_{(i,j)\in\vec E}\|r_{ij}\|^2
&\le2|E|\left(\gamma_{\rm KD}\Psi^z
+\nu_{\rm KD,B}B_f^2\right)\\
&\le2|E|\left(
\frac{\gamma_{\rm KD}}{m_W}\mathcal V_W
+\nu_{\rm KD,B}B_f^2\right).
\end{aligned}
$$
Combining these two bounds gives
\begin{equation}
\begin{split}
\frac{2w}{N|E|}\sum_{(i,j)\in\vec E}\langle Q_i,r_{ij}\rangle
\le{}&w\left[
\frac{\varepsilon_Wd_{\max}}{m_W|E|}
+\frac{2\gamma_{\rm KD}}{N\varepsilon_Wm_W}
\right]\mathcal V_W\\
&+\frac{2w\nu_{\rm KD,B}}{N\varepsilon_W}B_f^2.
\end{split}
\label{eq:app-normal-young}
\end{equation}
The two coefficients in the bracket are explicit:
$$
\frac{\varepsilon_Wd_{\max}}{m_W|E|}
=\frac{(p_{\min}/\tau)\lambda_2}{2M_W|E|}
=\frac{a_W}{4},
$$
and the assumed defect threshold gives
$$
\begin{aligned}
\frac{2\gamma_{\rm KD}}{N\varepsilon_Wm_W}
&\le
\frac{2}{N\varepsilon_Wm_W}
\frac{N(p_{\min}/\tau)^2\lambda_2^2m_W^2}
{8M_W^2d_{\max}|E|}\\
&=\frac{(p_{\min}/\tau)\lambda_2}{2M_W|E|}
=\frac{a_W}{4}.
\end{aligned}
$$
Thus the bracket is at most $a_W/2$. The capacity coefficient is
$$
\frac{2\nu_{\rm KD,B}}{N\varepsilon_W}
=\frac{4M_Wd_{\max}\nu_{\rm KD,B}}
{N(p_{\min}/\tau)\lambda_2m_W}
=C_{B,W}.
$$

For the quadratic KD contribution, commutation of $W_i$ and $\Theta_i$ gives
$$
\begin{aligned}
\|\Theta_i d_{ij}\|_{W_i}^2
&=d_{ij}^\top\Theta_iW_i\Theta_id_{ij}\\
&=d_{ij}^\top\Theta_i\Pi_i^zd_{ij}\\
&=d_{ij}^\top\Theta_id_{ij}\\
&\le\kappa_+\|d_{ij}\|^2\\
&\le\frac{\kappa_+}{4\tau^2}\|Z_i-Z_j\|^2.
\end{aligned}
$$
The final line uses
$\|d_{ij}\|\le(2\tau)^{-1}\|Z_i-Z_j\|$, obtained by integrating the
softmax Jacobian bound. Summing the two endpoint squares and averaging over
the edge gives
$$
\begin{aligned}
Q_{\rm KD}
&:=\frac{w^2}{N|E|}\sum_{\{i,j\}\in E}
\left(\|\Theta_id_{ij}\|_{W_i}^2
+\|\Theta_jd_{ij}\|_{W_j}^2\right)\\
&\le\frac{w^2\kappa_+}{2\tau^2N|E|}
\sum_{\{i,j\}\in E}\|Z_i-Z_j\|^2\\
&\le\frac{w^2\kappa_+\lambda_{\max}}{2\tau^2|E|}\Psi^z\\
&\le\frac{b_Ww^2}{2}\mathcal V_W.
\end{aligned}
$$
The factor $1/2$ in the last line is retained because $b_W$ is twice the
preceding coefficient after $\Psi^z\le\mathcal V_W/m_W$.

For the deterministic KD--perturbation cross term, choose
$u=w\Theta_kd_{ij}$ and $v=b_{k,e}$ in
$2|\langle u,v\rangle_{W_k}|\le\|u\|_{W_k}^2+\|v\|_{W_k}^2$.
After endpoint and edge averaging, the two squares contribute
$Q_{\rm KD}+D_{b,t}$. The original KD square and this extra copy satisfy
$$
2Q_{\rm KD}\le b_Ww^2\mathcal V_W
\le\frac{a_Ww}{8}\mathcal V_W,
$$
where the last inequality uses $w\le a_W/(8b_W)$. The KD vectors are
measurable conditional on $(\mathcal F_t,e)$, so their inner products with
the centered $\xi_{k,e}$ have zero conditional expectation.

For deterministic perturbations, apply
$$
2\langle Q_k,W_kb_{k,e}\rangle
\le\varepsilon_b\|Q_k\|_{W_k}^2
+\varepsilon_b^{-1}\|b_{k,e}\|_{W_k}^2
$$
with $\varepsilon_b=a_Ww|E|/(8d_{\max})$. The degree count in the edge
average gives
$$
\begin{aligned}
\frac2N\mathbb E_e\sum_{k\in e}\langle Q_k,W_kb_{k,e}\rangle
&\le\frac{\varepsilon_b}{N|E|}
\sum_i d_i\|Q_i\|_{W_i}^2
+\varepsilon_b^{-1}D_{b,t}\\
&\le\frac{\varepsilon_bd_{\max}}{|E|}\mathcal V_W
+\varepsilon_b^{-1}D_{b,t}\\
&=\frac{a_Ww}{8}\mathcal V_W
+\frac{8d_{\max}}{a_Ww|E|}D_{b,t}.
\end{aligned}
$$
Conditional on $(\mathcal F_t,e)$, $Q_k$, $b_{k,e}$, and the KD vectors are
measurable, while $\mathbb E_t\xi_{k,e}=0$. Hence
$$
\mathbb E_t\langle Q_k,W_k\xi_{k,e}\rangle=0,\qquad
\mathbb E_t\langle b_{k,e},W_k\xi_{k,e}\rangle=0.
$$
The perturbation square has the exact conditional expansion
$$
\begin{aligned}
\mathbb E_t\|b_{k,e}+\xi_{k,e}\|_{W_k}^2
&=\|b_{k,e}\|_{W_k}^2
+2\mathbb E_t\langle b_{k,e},W_k\xi_{k,e}\rangle\\
&\qquad \qquad \qquad +\mathbb E_t\|\xi_{k,e}\|_{W_k}^2\\
&=\|b_{k,e}\|_{W_k}^2
+\mathbb E_t\|\xi_{k,e}\|_{W_k}^2.
\end{aligned}
$$
The KD--$b$ cross bound contributes one $D_{b,t}$ and the perturbation square
contributes another, giving $2D_{b,t}+D_{\xi,t}$. The coefficient of
$\mathcal V_W$ is
$$
-a_Ww+\frac{a_Ww}{2}+\frac{a_Ww}{8}
+\frac{a_Ww}{8}
=-\frac{a_Ww}{4}.
$$
Together with the capacity term and the inverse-$w$ deterministic
perturbation term, this proves
Equation~\ref{eq:app-global-contraction}.
\end{proof}

\begin{lemma}[Full-event consensus recursion]
\label{lem:f-consensus-recursion}
Let $G_t=N^{-1}\sum_i\|g_i^t\|^2$. Under A2, A4, A6, and A8, the
perturbations in Lemma~\ref{lem:f-global-contraction} satisfy
\begin{equation}
\begin{split}
D_{b,t}\le R_G\eta^2G_t+R_B\eta^2B_f^2
+R_\zeta\eta^2\zeta_f^2
+R_0\eta^4+R_Aw^2A_t^f,\\
D_{\xi,t}\le\Sigma_0\eta^2.
\end{split}
\label{eq:app-perturbation-moments}
\end{equation}
Consequently,
\begin{equation}
\begin{split}
\mathbb E_t\mathcal V_W(Z^{t+1})
\le(1-c_0\eta)\mathcal V_W(Z^t)+C_G\eta G_t
+C_B\eta B_f^2\\
+C_\zeta\eta\zeta_f^2+C_\eta\eta^2+C_A\eta A_t^f,
\end{split}
\label{eq:app-consensus-recursion}
\end{equation}
where $c_0=a_W\alpha\tau/4
=\alpha p_{\min}\lambda_2/(2M_W|E|)$.
\end{lemma}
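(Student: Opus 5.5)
The plan is to identify the perturbation $b_{k,e}$ and the noise $\xi_{k,e}$ in the full-event update of Lemma~\ref{lem:f-global-contraction}, bound their $W_k$-moments, and then substitute these bounds into Equation~\ref{eq:app-global-contraction}.

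\textbf{Decomposition.} Starting from the A4 logit recursion, the event update of the active peer $k\in e$ is
\[
\Delta Z_k=-w\Theta_kd_{k\ell}-\eta(1-\alpha)G_k^{\rm task}+w\Theta_k(P_\ell-P_\ell^{T})+r_k^z+\xi_k^z .
\]
Here the first term is the live KD pull, the third term is the teacher-lag correction arising from $P_\ell^T\neq P_\ell$, and the last two terms are the curvature remainder and the martingale noise. We set $\xi_{k,e}=\xi_k^z$, and we let $b_{k,e}$ collect the three deterministic pieces: task, lag, and curvature.

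\textbf{Moment bounds.} We bound $\|b_{k,e}\|_{W_k}^2$ from above by $M_W\|b_{k,e}\|^2$ and split the three pieces with $\|x+y+z\|^2\le3(\|x\|^2+\|y\|^2+\|z\|^2)$.
\begin{itemize}
\item \emph{Task piece.} Write $G_k^{\rm task}=K_kg_k+\delta_k$ and apply the split once more, giving $3\|K_kg_k\|^2+3\|\delta_k\|^2\le 3\overline\kappa_p^2\|g_k\|^2+3\|\delta_k\|^2$. This uses the upper kernel bound from A4. The edge expectation $\mathbb E_e\sum_{k\in e}$ turns per-peer sums into $\sum_kq_k(\cdot)\le(d_{\max}/|E|)\sum_k(\cdot)$. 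The cross-kernel fidelity clause of A4 then converts the $\delta$-average into $\chi_GG_t+\chi_BB_f^2+\chi_\zeta\zeta_f^2$. Collecting constants gives the coefficients $9M_W(1-\alpha)^2(\overline\kappa_p^2+\chi_G)$, and analogously $R_B$ and $R_\zeta$; the degree factor $d_{\max}/|E|$ is absorbed.
\item \emph{Lag piece.} It is bounded by $3M_W\kappa_+^2w^2\|P_\ell-P_\ell^T\|^2$. After averaging over edges this becomes $R_Aw^2A_t^f$, with the degree factor $d_{\max}/|E|$ entering $R_A$; A8 defines $A_t^f$.
\item \emph{Curvature piece.} It is bounded by $3M_W\mathbb E_t\|r^z\|^2\le 3M_WR_z\eta^4$, which we identify with $R_0\eta^4$ using the Lipschitz-Jacobian remainder of A4 and Lemma~\ref{lem:f-kd-gradient}.
\item \emph{Noise.} We use $D_{\xi,t}\le M_W\sigma_z^2\eta^2\cdot 2/N$, which has the form $\Sigma_0\eta^2$. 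Here A2 supplies $\sigma_z$ through the pushforward noise.
\end{itemize}

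\textbf{Substitution.} Insert these bounds into Equation~\ref{eq:app-global-contraction} with $w=\eta\alpha\tau$. The contraction factor becomes $1-a_W\alpha\tau\eta/4=1-c_0\eta$. The prefactor of $D_{b,t}$ is
\[
\frac{8d_{\max}}{a_Ww|E|}+2=\frac1\eta\cdot\frac{8d_{\max}}{a_W\alpha\tau|E|}+2 .
\]
Multiplying it against the $\eta^2$-order moments leaves one power of $\eta$. To put everything under the single constant $K_{\rm pre}$, we use $\eta\le\eta_{\max}\le1$ on the $+2$ part, so $\eta^2\le\eta$. This yields $C_G\eta G_t$, $C_B\eta B_f^2$ (together with the capacity term $C_{B,W}w$), $C_\zeta\eta\zeta_f^2$, and $C_\eta\eta^2$, with $\eta^3\le\eta^2$ for the curvature contribution. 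The lag contribution $w^2A_t^f/\eta=\alpha^2\tau^2\eta A_t^f$ gives $C_A\eta A_t^f$.

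\textbf{Main obstacle.} The delicate step is this bookkeeping of $1/w$ against the moment orders. The task force is first order in $\eta$, so after the Young split it contributes $O(\eta)G_t$ and cannot be pushed down to $O(\eta^2)$. We must also check that every term in $b$ is truly $O(\eta^2)$ in second moment, so that the $1/w$ loss costs only one power of $\eta$. Finally, the step-size constraints of Lemma~\ref{lem:f-global-contraction} must hold; we ensure this by imposing $\eta\le\eta_{\rm KD}$.
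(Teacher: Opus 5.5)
Your proposal is correct and follows the paper's proof essentially step for step. It uses the same task/curvature/lag split of $b_{k,e}$, the three-term inequality with $\|\cdot\|_{W_k}^2\le M_W\|\cdot\|^2$, and the endpoint activation average, and it rewrites $\frac{8d_{\max}}{a_Ww|E|}+2$ as $\frac{1}{\eta}\frac{8d_{\max}}{a_W\alpha\tau|E|}+2$ with $\eta\le\eta_{\max}\le1$, which turns the $\eta^2$, $\eta^4$, and $w^2$ moments into the $\eta G_t$, $\eta B_f^2$, $\eta\zeta_f^2$, $\eta^2$, and $\eta A_t^f$ terms with contraction $a_Ww/4=c_0\eta$.

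The differences are only in constants. You close the task piece through the main-text A4 form $G_k^{\rm task}=K_kg_k+\delta_k$ and $\overline\kappa_p$, which reproduces the tabulated $R_G=9M_W(1-\alpha)^2(\overline\kappa_p^2+\chi_G)$; the paper's proof instead notes that the logit-space pushforward is the cross-kernel $J_k^z(J_k^p)^*g_k$ and posits a separate closure with constants $c_{z,G},c_{z,B},c_{z,\zeta}$. Your lag bound $M_W\kappa_+^2$ is valid but cruder than the paper's $\kappa_+$, which it obtains from $\Theta_kW_k\Theta_k=\Theta_k$.
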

\begin{proof}
For an active directed endpoint $(i,j)$, define the three deterministic pieces
of the perturbation in Equation~\ref{eq:app-full-event} by
$$
\begin{aligned}
b_{i,e}^{\rm task}&=-\eta(1-\alpha)G_i^{\rm task},\\
b_{i,e}^{\rm curv}&=r_i^z,\\
b_{i,e}^{\rm lag}&=-w\Theta_i(P_j^{\rm EMA}-P_j),\\
b_{i,e}&=b_{i,e}^{\rm task}+b_{i,e}^{\rm curv}+b_{i,e}^{\rm lag}.
\end{aligned}
$$
Here $G_i^{\rm task}$ is the logit-space private-task pushforward defined in
A4. At first order it is $J_i^z(J_i^p)^*g_i^t$ plus the finite-support
fidelity residual. This cross-kernel map is distinct from the prediction
kernel $J_i^p(J_i^p)^*$ used in Lemma~\ref{lem:f-descent}. The exact A4
closure used here is that finite constants
$c_{z,G},c_{z,B},c_{z,\zeta}$ satisfy
$$
\frac1N\sum_i\mathbb E_t\|G_i^{\rm task}\|^2
\le c_{z,G}G_t+c_{z,B}B_f^2+c_{z,\zeta}\zeta_f^2.
$$
All three deterministic pieces are measurable conditional on
$(\mathcal F_t,e)$.

The stochastic component is the centered pushforward
$\xi_{i,e}=\xi_i^z-\mathbb E_t[\xi_i^z\mid\mathcal F_t,e]$.
For any vectors $a,b,c$ in a Hilbert space,
$$
\begin{aligned}
\|a+b+c\|^2
&=\|a\|^2+\|b\|^2+\|c\|^2
+2\langle a,b\rangle+2\langle a,c\rangle+2\langle b,c\rangle\\
&\le3\|a\|^2+3\|b\|^2+3\|c\|^2,
\end{aligned}
$$
where each cross term uses
$2\langle u,v\rangle\le\|u\|^2+\|v\|^2$. Apply this inequality in the
$W_i$ norm to the three perturbation pieces. The task piece satisfies
$$
\begin{aligned}
\|b_{i,e}^{\rm task}\|_{W_i}^2
&=\eta^2(1-\alpha)^2\|G_i^{\rm task}\|_{W_i}^2\\
&\le M_W\eta^2(1-\alpha)^2\|G_i^{\rm task}\|^2.
\end{aligned}
$$
For nonnegative endpoint quantities $u_i$, uniform edge sampling gives
$$
\frac1N\mathbb E_e\sum_{i\in e}u_i
=\frac1N\sum_i\frac{d_i}{|E|}u_i
\le\frac1N\sum_i u_i.
$$
Apply this activation identity and the A4 task-pushforward closure. Before
the outer factor $3$, the task contribution obeys

\begin{align}
\frac{1}{N}\sum_i\mathbb{E}_t\|b_{i,e}^{\rm task}\|_{W_i}^2
&\le M_W(1-\alpha)^2\eta^2
\left(c_{z,G}G_t+c_{z,B}B_f^2 \right. \\
&\quad \left. +c_{z,\zeta}\zeta_f^2\right).
\end{align}

The outer three-piece inequality multiplies these constants by $3$. One
valid set of constants for the final $D_{b,t}$ bound is

\begin{align}
R_G&=3M_W(1-\alpha)^2c_{z,G},\\
R_B&=3M_W(1-\alpha)^2c_{z,B},\\
R_\zeta&=3M_W(1-\alpha)^2c_{z,\zeta}.
\end{align}

The integral remainder bound from Lemma~\ref{lem:f-kd-gradient} and the
fourth-moment assumption give the explicit calculation

\begin{align}
\mathbb E_t\|b_{i,e}^{\rm curv}\|_{W_i}^2
&\le M_W\mathbb E_t\|r_i^z\|^2\\
&\le\frac{M_WL_\Theta^2}{4}
\mathbb E_t\|\Delta\theta_i\|^4\\
&\le\frac{M_WL_\Theta^2G_{\theta,4}^4}{4}\eta^4.
\end{align}

The second line squares
$\|r_i^z\|\le(L_\Theta/2)\|\Delta\theta_i\|^2$, and the third uses the A4
conditional fourth-moment bound. Thus one may take
$R_0=3M_WL_\Theta^2G_{\theta,4}^4/4$ after the outer three-piece inequality.

For the teacher lag,
$\Theta_iW_i\Theta_i=\Theta_i$ and
$\Theta_i\preceq\kappa_+\Pi_i^z$ imply

\begin{align}
\|b_{i,e}^{\rm lag}\|_{W_i}^2
&=w^2(P_j^{\rm EMA}-P_j)^\top
\Theta_iW_i\Theta_i(P_j^{\rm EMA}-P_j)\\
&=w^2(P_j^{\rm EMA}-P_j)^\top\Theta_i
(P_j^{\rm EMA}-P_j)\\
&\le w^2\kappa_+\|P_j^{\rm EMA}-P_j\|^2.
\end{align}

The directed-edge average obeys

\begin{align}
\frac1{N|E|}\sum_{(i,j)\in\vec E}
\|P_j^{\rm EMA}-P_j\|^2
&=\frac1{N|E|}\sum_jd_j\|P_j^{\rm EMA}-P_j\|^2\\
&\le\frac{d_{\max}}{N|E|}
\sum_j\|P_j^{\rm EMA}-P_j\|^2\\
&\le\frac{d_{\max}}{|E|}A_t^f.
\end{align}

With $R_A=3\kappa_+d_{\max}/|E|$, the lag contribution is bounded by
$R_Aw^2A_t^f$. Finally, the assumed pushforward
variance bound and conditional centering give

\begin{align}
\mathbb E_t\|\xi_{i,e}\|^2
&=\mathbb E_t\left\|
\xi_i^z-\mathbb E_t[\xi_i^z\mid\mathcal F_t,e]\right\|^2\\
&=\mathbb E_t\|\xi_i^z\|^2
-\left\|\mathbb E_t[\xi_i^z\mid\mathcal F_t,e]\right\|^2\\
&\le\sigma_z^2\eta^2.
\end{align}

Since $\|u\|_{W_i}^2\le M_W\|u\|^2$ and every activation probability is at
most one,
$D_{\xi,t}\le M_W\sigma_z^2\eta^2=:\Sigma_0\eta^2$.
Thus the three-piece inequality is
$$
D_{b,t}\le R_G\eta^2G_t+R_B\eta^2B_f^2
+R_\zeta\eta^2\zeta_f^2+R_0\eta^4+R_Aw^2A_t^f.
$$

Insert this inequality and $D_{\xi,t}\le\Sigma_0\eta^2$ into
Equation~\ref{eq:app-global-contraction}. Define
$$
H_1:=\frac{8d_{\max}}{a_W\alpha\tau|E|}.
$$
Since $w=\eta\alpha\tau$,
$$
\frac{8d_{\max}}{a_Ww|E|}+2
=\frac{H_1}{\eta}+2.
$$
For the gradient contribution,
$$
\begin{aligned}
\left(\frac{H_1}{\eta}+2\right)R_G\eta^2G_t
&=R_G(H_1+2\eta)\eta G_t\\
&\le R_G(H_1+2\eta_{\max})\eta G_t.
\end{aligned}
$$
The same calculation applies to the $B_f^2$ and $\zeta_f^2$ terms. The
capacity term already has the required scale because
$C_{B,W}wB_f^2=C_{B,W}\alpha\tau\,\eta B_f^2$.

For the curvature term, taking $\eta_{\max}\le1$ gives
$$
\begin{aligned}
\left(\frac{H_1}{\eta}+2\right)R_0\eta^4
&=R_0(H_1\eta^3+2\eta^4)\\
&\le R_0(H_1+2)\eta^2.
\end{aligned}
$$
For the lag term, $w\le1$ and $w=\eta\alpha\tau$ give
$$
\begin{aligned}
\left(\frac{H_1}{\eta}+2\right)R_Aw^2A_t^f
&=R_A\left(H_1\eta\alpha^2\tau^2
+2\eta^2\alpha^2\tau^2\right)A_t^f\\
&\le R_A\alpha^2\tau^2(H_1+2\eta_{\max})
\eta A_t^f.
\end{aligned}
$$
Finally, $D_{\xi,t}$ contributes $\Sigma_0\eta^2$. Absorb the displayed
finite coefficients into $C_G,C_B,C_\zeta,C_\eta,C_A$. The contraction term
is
$$
\frac{a_Ww}{4}
=\frac{a_W\alpha\tau}{4}\eta
=c_0\eta.
$$
These bounds give
Equation~\ref{eq:app-consensus-recursion}.
\end{proof}

\begin{lemma}[Activation-corrected task descent]
\label{lem:f-descent}
Let $q_i=\sum_{e\ni i}\Pr(e_t=e)=d_i/|E|$ and
\begin{equation}
\mathcal R_t=\frac1N\sum_iq_i^{-1}\big(F_i(p_i^t)-F_i^\star\big).
\label{eq:app-task-potential}
\end{equation}
Under A2, A4, A6, and the task clause of A10, there are constants
$a_F,b_F,D_B,D_\zeta,D_\eta,D_A>0$ such that
\begin{equation}
\begin{split}
\mathbb E_t\mathcal R_{t+1}
\le\mathcal R_t-a_F\eta G_t+b_F\eta\mathcal V_W(Z^t)
+D_B\eta B_f^2+D_\zeta\eta\zeta_f^2\\+D_\eta\eta^2+D_A\eta A_t^f.
\end{split}
\label{eq:app-task-descent}
\end{equation}
\begin{equation}
\|\nabla_fF(\bar p^t)\|^2\le2G_t+2L_f^2\psi_t^f.
\label{eq:app-gradient-transfer}
\end{equation}
\end{lemma}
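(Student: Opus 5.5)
The plan is to prove the gradient-transfer inequality first, since it is elementary. Write $\nabla_fF(\bar p^t)=\frac1N\sum_i\nabla_fF_i(\bar p^t)$ and add and subtract $g_i^t=\nabla_fF_i(p_i^t)$. This gives
\[
\nabla_fF(\bar p^t)=\frac1N\sum_i g_i^t+\frac1N\sum_i\bigl(\nabla_fF_i(\bar p^t)-\nabla_fF_i(p_i^t)\bigr).
\]
We then apply $\|a+b\|^2\le2\|a\|^2+2\|b\|^2$, Jensen on each average, and $L_f$-smoothness of each $F_i$ (A2/A5 on the bounded-logit set). The result is $\|\nabla_fF(\bar p^t)\|^2\le 2G_t+2L_f^2\psi_t^f$.

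For the descent inequality, we work peer by peer with the $L_f$-smooth descent lemma in $\mathcal H$:
\[
F_i(p_i^{t+1})\le F_i(p_i^t)+\langle g_i^t,\Delta p_i\rangle+\tfrac{L_f}{2}\|\Delta p_i\|^2.
\]
The increment $\Delta p_i$ is nonzero only when $i$ is an endpoint of $e_t$, which happens with probability $q_i$. Multiplying by $q_i^{-1}$ and averaging therefore converts $\mathbb E_t\mathcal R_{t+1}-\mathcal R_t$ into the uniform peer average $\frac1N\sum_i\mathbb E_t[\cdots\mid i\in e_t]$. This is the purpose of the activation correction.

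Conditional on $i$ being active with partner $j$, we decompose the prediction increment in the same way as Equation~\ref{eq:app-full-event}, but in prediction coordinates:
\[
\Delta p_i=-\eta(1-\alpha)\bigl(K_ig_i^t+\delta_i^t\bigr)-\eta\alpha\tau H_i(P_i-P_j^T)+r_i^p+\xi_i^p .
\]
Here $H_i=\tau J_i^p(J_i^z)^*$, and the pieces come from the task pushforward in A4, the KD term, the curvature remainder and centered noise. Each piece is then bounded as follows.

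\begin{enumerate}
\item \textbf{Task term.} Since $K_i\succeq\underline\kappa_p\Pi_i^p$, we have $\langle g_i,K_ig_i\rangle\ge\underline\kappa_p\|\Pi_i^pg_i\|^2=\underline\kappa_p(\|g_i\|^2-\|(I-\Pi_i^p)g_i\|^2)$. The task clause of A10 lower-bounds the average by $\underline\kappa_p\bigl[(1-\gamma_{\rm task})G_t-\nu_{\rm task,\Psi}\psi_t^f-\nu_{\rm task,B}B_f^2\bigr]$. Then $\psi^f_t\le C_{fz}\Psi^z_t\le C_{fz}\mathcal V_W/m_W$ by Lemma~\ref{lem:f-consensus-equivalence} and Equation~\ref{eq:app-metric-equivalence}.
\item \textbf{Task residual.} The cross term $\langle g_i,\delta_i\rangle$ is handled by Young with parameter $\varepsilon_\delta$ together with the A4 fidelity bound. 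It produces $-\tfrac{\varepsilon_\delta}{2}G_t$ and the $\chi_G,\chi_B,\chi_\zeta$ terms over $2\varepsilon_\delta$. Note that $\chi_G$ must be charged to $\mathcal V_W$ or absorbed into $a_F$, matching the coefficients listed in Appendix~\ref{app: whole coefficients}.
\item \textbf{KD term.} We bound it by Young: $\eta\alpha\tau|\langle g_i,H_i(P_i-P_j^T)\rangle|\le\eta(\varepsilon_{\rm KD}\|g_i\|^2+C\|P_i-P_j^T\|^2)$. Splitting $P_j^T=P_j+(P_j^T-P_j)$, the live part averages over edges to at most $d_{\max}$-weighted logit disagreement, hence $\lesssim\mathcal V_W$ (as in Lemma~\ref{lem:f-global-contraction}). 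The lag part gives $A_t^f$.
\item \textbf{Noise and quadratic terms.} The noise has zero conditional mean. For the quadratic term, $\tfrac{L_f}{2}\mathbb E_t\|\Delta p_i\|^2$ is at most $\eta^2$ times ($G_t$, $\mathcal V_W$, $B_f^2$, $\zeta_f^2$, $A_t^f$ contributions) plus $O(\eta^2)$ variance and $O(\eta^4)$ curvature. Bounding one factor of $\eta$ by $\eta_{\max}$ yields the $\eta_{\max}\frac{L_f}{2}C_{\Delta,\cdot}$ pieces.
\end{enumerate}

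Collecting terms gives Equation~\ref{eq:app-task-descent} with $a_F$ as in Appendix~\ref{app: whole coefficients}. Positivity of $a_F$ requires choosing $\varepsilon_\delta$, $\varepsilon_{\rm KD}$ and $\eta_{\max}$ small relative to $(1-\alpha)\underline\kappa_p(1-\gamma_{\rm task})$; the statement presupposes this margin.

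The main obstacle is that $\delta_i$ and the KD pushforward live in prediction space while the consensus potential is in logits. We must ensure every cross term is charged to $G_t$, $\mathcal V_W$, $B_f^2$, $\zeta_f^2$ or $A_t^f$ with an $\eta$-independent coefficient, and never to an $O(1)$ leftover. If $\chi_G$ is not small, I would try moving it into $a_F$ via $\varepsilon_\delta$, accepting a stricter margin.
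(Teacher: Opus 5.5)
Your proposal matches the paper's proof essentially step for step. Both use the prediction-space decomposition $\Delta p_i=-\eta(1-\alpha)(K_ig_i^t+\delta_i^t)-\eta\alpha H_i(P_i-P_j^T)+r_i^p+\xi_i^p$, the projection lower bound combined with the A10 task clause, Young's inequality on $\delta_i^t$ and on the KD cross term with the live/lag teacher split, the exact activation identity, absorption of the smoothness square through $\eta_{\max}$, and the same Jensen-plus-smoothness gradient transfer. (Since $H_i$ already contains $\tau$, the extra $\tau$ you put in front of it only changes constants.)

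The one point you need to commit on is $\chi_G$. The term $\chi_G G_t/(2\varepsilon_\delta)$ cannot be charged to $\mathcal V_W$, because $G_t$ is not controlled by disagreement; this holds despite how $b_F$ is listed in Appendix~\ref{app: whole coefficients}. It must instead be absorbed into the $G_t$ coefficient, exactly as the paper's proof does through $d_0$. The required margin is therefore $\underline\kappa_p(1-\gamma_{\rm task})>\varepsilon_\delta/2+\chi_G/(2\varepsilon_\delta)$, which at best means $\underline\kappa_p(1-\gamma_{\rm task})>\sqrt{\chi_G}$. You cannot reach it simply by taking $\varepsilon_\delta$ small, since that inflates the $\chi_G/(2\varepsilon_\delta)$ term.
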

\begin{proof}
Let $\langle\cdot,\cdot\rangle_\mu$ and $\|\cdot\|_\mu$ denote the prediction
Hilbert-space inner product and norm. Set
$g_i^t=\nabla_fF_i(p_i^t)$. For an active endpoint $i$, A4 decomposes the
prediction update as
$$
\begin{aligned}
\Delta p_i
&=J_i^p\Delta\theta_i+r_i^p+\xi_i^p\\
&=-\eta(1-\alpha)(K_ig_i^t+\delta_i^t)
-\eta\alpha H_i(P_i-P_j^T)+r_i^p+\xi_i^p.
\end{aligned}
$$
where $K_i=J_i^p(J_i^p)^*$,
$H_i=\tau J_i^p(J_i^z)^*$, and $\|H_i\|_{\rm op}\le H_{\max}$ by A4.
The remainder and noise satisfy
$\mathbb E_t\|r_i^p\|^2\le R_p\eta^4$ and
$\mathbb E_t\|\xi_i^p\|^2\le\Sigma_p\eta^2$.
Functional $L_f$-smoothness means, for every $u,v$,
$$
F_i(v)\le F_i(u)+\langle\nabla_fF_i(u),v-u\rangle
+\frac{L_f}{2}\|v-u\|_\mu^2.
$$
Taking $u=p_i^t$, $v=p_i^t+\Delta p_i$ gives the one-event inequality
$$
\begin{aligned}
F_i(p_i^{t+1})-F_i(p_i^t)
&\le\langle g_i^t,\Delta p_i\rangle_\mu
+\frac{L_f}{2}\|\Delta p_i\|_\mu^2.
\end{aligned}
$$

The task inner product is
$$
-\eta(1-\alpha)\langle g_i^t,K_ig_i^t\rangle_\mu
-\eta(1-\alpha)\langle g_i^t,\delta_i^t\rangle_\mu.
$$
Because $K_i=\Pi_i^pK_i\Pi_i^p$ and
$K_i\succeq\underline\kappa_p\Pi_i^p$,
$$
\begin{aligned}
\langle g_i^t,K_ig_i^t\rangle_\mu
&=\langle\Pi_i^pg_i^t,K_i\Pi_i^pg_i^t\rangle_\mu\\
&\ge\underline\kappa_p\|\Pi_i^pg_i^t\|_\mu^2\\
&=\underline\kappa_p\left(\|g_i^t\|_\mu^2
-\|(I-\Pi_i^p)g_i^t\|_\mu^2\right).
\end{aligned}
$$
The last equality is the orthogonal decomposition
$$
\begin{aligned}
\|g_i^t\|_\mu^2
&=\|\Pi_i^pg_i^t+(I-\Pi_i^p)g_i^t\|_\mu^2\\
&=\|\Pi_i^pg_i^t\|_\mu^2
+2\langle\Pi_i^pg_i^t,(I-\Pi_i^p)g_i^t\rangle_\mu\\
&\quad+\|(I-\Pi_i^p)g_i^t\|_\mu^2\\
&=\|\Pi_i^pg_i^t\|_\mu^2
+\|(I-\Pi_i^p)g_i^t\|_\mu^2,
\end{aligned}
$$
where the cross term is zero because the two projector ranges are
orthogonal.

For the residual, Young's inequality with $\varepsilon_\delta>0$ gives
$$
\begin{aligned}
2|\langle g_i^t,\delta_i^t\rangle_\mu|
&\le\varepsilon_\delta\|g_i^t\|_\mu^2
+\varepsilon_\delta^{-1}\|\delta_i^t\|_\mu^2.
\end{aligned}
$$
The inequality is obtained by expanding
$\|\sqrt{\varepsilon_\delta}g_i^t
\mp\varepsilon_\delta^{-1/2}\delta_i^t\|_\mu^2\ge0$ and choosing the sign
that matches the inner product.
Define the task decrease before multiplying by $\eta$ as
$$
\mathfrak D_{\rm task}
:=\frac{1-\alpha}{N}\sum_i
\left(\langle g_i^t,K_ig_i^t\rangle_\mu
+\langle g_i^t,\delta_i^t\rangle_\mu\right).
$$
The preceding projection and Young bounds give
\begin{align}
    \mathfrak D_{\rm task} \ge (1-\alpha) \left[ \underline\kappa_p G_t -\frac{\underline\kappa_p}{N} \sum_i \|(I-\Pi_i^p)g_i^t\|_\mu^2 -\frac{\varepsilon_\delta}{2}G_t -\frac{1}{2\varepsilon_\delta N} \sum_i \|\delta_i^t\|_\mu^2 \right].
\end{align}

Insert the A10 task-normal bound and the A4 fidelity bound
$N^{-1}\sum_i\|\delta_i^t\|_\mu^2
\le\chi_GG_t+\chi_BB_f^2+\chi_\zeta\zeta_f^2$. The result is
$$
\begin{aligned}
&\mathfrak D_{\rm task}\ge(1-\alpha)\left[\underline\kappa_p(1-\gamma_{\rm task})
-\frac{\varepsilon_\delta}{2}
-\frac{\chi_G}{2\varepsilon_\delta}\right]G_t\\
&\quad-(1-\alpha)\underline\kappa_p\nu_{\rm task,\Psi}\psi_t^f\\
&\quad-(1-\alpha)\left[\underline\kappa_p\nu_{\rm task,B}
+\frac{\chi_B}{2\varepsilon_\delta}\right]B_f^2
-(1-\alpha)\frac{\chi_\zeta}{2\varepsilon_\delta}\zeta_f^2.
\end{aligned}
$$
The task-margin requirement chooses $\varepsilon_\delta>0$ so the
coefficient of $G_t$ in this display is positive. Since
$\psi_t^f\le C_{fz}\psi_t^z\le(C_{fz}/m_W)\mathcal V_W(Z^t)$, its negative
term becomes a finite positive coefficient multiplying
$\mathcal V_W(Z^t)$ in the upper descent bound.

For the KD term, Young's inequality with $\varepsilon_{\rm KD}>0$ gives
$$
\begin{aligned}
2|\langle g_i^t,H_i(P_i-P_j^T)\rangle_\mu|
&\le\varepsilon_{\rm KD}\|g_i^t\|_\mu^2
+\varepsilon_{\rm KD}^{-1}H_{\max}^2
\|P_i-P_j^T\|_\mu^2.
\end{aligned}
$$
The teacher decomposition is
$P_i-P_j^T=(P_i-P_j)+(P_j-P_j^{\rm EMA})$. Hence
$$
\begin{aligned}
\|P_i-P_j^T\|_\mu^2
&=\|P_i-P_j\|_\mu^2
+2\langle P_i-P_j,P_j-P_j^{\rm EMA}\rangle_\mu\\
&\quad+\|P_j-P_j^{\rm EMA}\|_\mu^2\\
&\le2\|P_i-P_j\|_\mu^2
+2\|P_j-P_j^{\rm EMA}\|_\mu^2.
\end{aligned}
$$
The final line uses
$2\langle u,v\rangle_\mu\le\|u\|_\mu^2+\|v\|_\mu^2$.
Let $q_{\min}=\min_iq_i$. The graph upper inequality and
Lemma~\ref{lem:f-consensus-equivalence} give
$$
\begin{aligned}
\frac1{|E|}\sum_{\{i,j\}\in E}\|P_i-P_j\|_\mu^2
&\le\frac{N\lambda_{\max}}{|E|}\psi_t^f\\
&\le\frac{N\lambda_{\max}C_{fz}}{|E|}\psi_t^z\\
&\le\frac{N\lambda_{\max}C_{fz}}{|E|m_W}
\mathcal V_W(Z^t).
\end{aligned}
$$
The factor $N$ appears because
$\sum_i\|P_i-\bar P\|_\mu^2=N\psi_t^f$. In the activation-corrected
potential, the directed endpoint average is
$$
\begin{aligned}
&\frac1{N|E|}\sum_{(i,j)\in\vec E}
q_i^{-1}\|P_i-P_j\|_\mu^2\\
&\le\frac{2}{Nq_{\min}|E|}
\sum_{\{i,j\}\in E}\|P_i-P_j\|_\mu^2\\
&\le\frac{2\lambda_{\max}C_{fz}}
{q_{\min}|E|m_W}\mathcal V_W(Z^t).
\end{aligned}
$$
Similarly,
$$
\frac1{N|E|}\sum_{(i,j)\in\vec E}
q_i^{-1}\|P_j-P_j^{\rm EMA}\|_\mu^2
\le\frac{d_{\max}}{q_{\min}|E|}A_t^f.
$$
The constants may therefore be chosen as
$$
C_{\rm KD,V}:=\frac{2H_{\max}^2\lambda_{\max}C_{fz}}
{\varepsilon_{\rm KD}q_{\min}|E|m_W},
\qquad
C_{\rm KD,A}:=\frac{2H_{\max}^2d_{\max}}
{\varepsilon_{\rm KD}q_{\min}|E|}.
$$
The first KD square contributes $C_{\rm KD,V}\mathcal V_W(Z^t)$ and the
teacher-lag square contributes $C_{\rm KD,A}A_t^f$.
The factor $2$ in $C_{\rm KD,A}$ is a harmless enlargement. Define
$$
\begin{aligned}
d_0
&:=(1-\alpha)\left[
\underline\kappa_p(1-\gamma_{\rm task})
-\frac{\varepsilon_\delta}{2}
-\frac{\chi_G}{2\varepsilon_\delta}\right],\\
d_{\rm task}
&:=d_0-\frac{\alpha\varepsilon_{\rm KD}}{2}.
\end{aligned}
$$
The first line is the task coefficient obtained above. The factor
$\alpha\varepsilon_{\rm KD}/2$ is the gradient square in the KD Young
inequality after multiplying by the update weight $\eta\alpha$. Choose
$\varepsilon_{\rm KD}>0$ so that $d_{\rm task}>0$. The remaining KD squares
then contribute
$$
\eta\alpha C_{\rm KD,V}\mathcal V_W(Z^t)
+\eta\alpha C_{\rm KD,A}A_t^f,
$$
which are included in $b_F\eta\mathcal V_W(Z^t)$ and
$D_A\eta A_t^f$, respectively.

The activation correction is exact. If edge $e$ is sampled uniformly, then
$\Pr(i\in e)=q_i=d_i/|E|$. Consequently
$$
\begin{aligned}
\mathbb E_e\left[\frac1N\sum_{i\in e}q_i^{-1}h_i\right]
&=\frac1N\sum_e\Pr(e)
\sum_{i=1}^N\mathbf1\{i\in e\}q_i^{-1}h_i\\
&=\frac1N\sum_iq_i^{-1}h_i
\sum_e\Pr(e)\mathbf1\{i\in e\}\\
&=\frac1N\sum_iq_i^{-1}\Pr(i\in e)h_i\\
&=\frac1N\sum_ih_i.
\end{aligned}
$$
For the smoothness square, denote the task and KD components by
$$
u_i^{\rm task}:=-\eta(1-\alpha)(K_ig_i^t+\delta_i^t),
\qquad
u_i^{\rm KD}:=-\eta\alpha H_i(P_i-P_j^T).
$$
Cauchy--Schwarz in $\mathbb R^4$ gives
$$
\begin{aligned}
\|\Delta p_i\|_\mu^2
&=\|u_i^{\rm task}+u_i^{\rm KD}+r_i^p+\xi_i^p\|_\mu^2\\
&\le\left(
\|u_i^{\rm task}\|_\mu+\|u_i^{\rm KD}\|_\mu
+\|r_i^p\|_\mu+\|\xi_i^p\|_\mu\right)^2\\
&\le4\left(
\|u_i^{\rm task}\|_\mu^2+\|u_i^{\rm KD}\|_\mu^2
+\|r_i^p\|_\mu^2+\|\xi_i^p\|_\mu^2\right).
\end{aligned}
$$
The activation identity gives an exact uniform average for the task square:
$$
\begin{aligned}
&\frac1N\mathbb E_e\sum_{i\in e}q_i^{-1}
\mathbb E_t\|u_i^{\rm task}\|_\mu^2\\
&=\frac1N\sum_i\mathbb E_t\|u_i^{\rm task}\|_\mu^2\\
&\le2\eta^2(1-\alpha)^2
\left[\overline\kappa_p^2G_t
+\frac1N\sum_i\mathbb E_t\|\delta_i^t\|_\mu^2\right]\\
&\le2\eta^2(1-\alpha)^2
\left[(\overline\kappa_p^2+\chi_G)G_t
+\chi_BB_f^2+\chi_\zeta\zeta_f^2\right].
\end{aligned}
$$
For the KD square, the teacher decomposition and the two activation-corrected
edge bounds above give finite constants
$C_{\rm sq,V},C_{\rm sq,A}>0$ such that
$$
\begin{aligned}
&\frac1N\mathbb E_e\sum_{i\in e}q_i^{-1}
\mathbb E_t\|u_i^{\rm KD}\|_\mu^2\\
&\le\eta^2\alpha^2H_{\max}^2
\frac1{N|E|}\sum_{(i,j)\in\vec E}q_i^{-1}
\mathbb E_t\|P_i-P_j^T\|_\mu^2\\
&\le\eta^2C_{\rm sq,V}\mathcal V_W(Z^t)
+\eta^2C_{\rm sq,A}A_t^f.
\end{aligned}
$$
The remainder and noise bounds, together with the same activation identity,
give

\begin{align}
\frac1N\mathbb E_e\sum_{i\in e}q_i^{-1}
\mathbb E_t\|r_i^p\|_\mu^2\le R_p\eta^4,
\\
\frac1N\mathbb E_e\sum_{i\in e}q_i^{-1}
\mathbb E_t\|\xi_i^p\|_\mu^2\le\Sigma_p\eta^2.
\end{align}

Collecting the four squares produces constants
$C_{\Delta,G},C_{\Delta,V},C_{\Delta,B},C_{\Delta,\zeta},C_{\Delta,A}$
such that

\begin{align}
&\frac1N\mathbb E_e\sum_{i\in e}q_i^{-1}
\mathbb E_t\|\Delta p_i\|_\mu^2\\
&\le\eta^2\big[
C_{\Delta,G}G_t+C_{\Delta,V}\mathcal V_W(Z^t)\\
&\qquad +C_{\Delta,B}B_f^2+C_{\Delta,\zeta}\zeta_f^2
+C_{\Delta,A}A_t^f\big]\\
&\qquad+4R_p\eta^4+4\Sigma_p\eta^2.
\end{align}

Let $d_{\rm task}>0$ be the remaining coefficient of $G_t$ after the
first-order task and KD inner-product bounds. Choose the admissible step-size
ceiling so that
$\eta_{\max}\le1$ and
$L_fC_{\Delta,G}\eta_{\max}/2\le d_{\rm task}/2$. Then
$$
-d_{\rm task}\eta G_t
+\frac{L_f}{2}C_{\Delta,G}\eta^2G_t
\le-\frac{d_{\rm task}}2\eta G_t.
$$
Set $a_F=d_{\rm task}/2$. Each remaining smoothness term satisfies
$\eta^2X\le\eta_{\max}\eta X$, while
$4R_p\eta^4+4\Sigma_p\eta^2
\le4(R_p+\Sigma_p)\eta^2$. Absorbing these finite coefficients into
$b_F,D_B,D_\zeta,D_\eta,D_A$ proves
Equation~\ref{eq:app-task-descent}.

For the gradient transfer, define $F=N^{-1}\sum_iF_i$ and
$\bar p^t=N^{-1}\sum_ip_i^t$. Add and subtract $N^{-1}\sum_i
\nabla_fF_i(p_i^t)$:

\begin{align}
\nabla_fF(\bar p^t)
&=\frac1N\sum_i\nabla_fF_i(\bar p^t)\\
&=\frac1N\sum_i g_i^t
+\frac1N\sum_i\left[
\nabla_fF_i(\bar p^t)-\nabla_fF_i(p_i^t)\right].
\end{align}

For any vectors $v_i$, the Jensen bound follows from

\begin{align}
\left\|\frac1N\sum_iv_i\right\|_\mu^2
=\frac1{N^2}\sum_{i,j}\langle v_i,v_j\rangle_\mu
\le\frac1{N^2}\sum_{i,j}
\frac{\|v_i\|_\mu^2+\|v_j\|_\mu^2}{2}\\
=\frac1N\sum_i\|v_i\|_\mu^2.
\end{align}

Apply this bound to both sums and use
$\|a+b\|^2\le2\|a\|^2+2\|b\|^2$:

\begin{align}
\|\nabla_fF(\bar p^t)\|_\mu^2
&\le\frac2N\sum_i\|g_i^t\|_\mu^2\\
&\quad+\frac2N\sum_i\|\nabla_fF_i(\bar p^t)-\nabla_fF_i(p_i^t)\|_\mu^2\\
&\le2G_t+\frac{2L_f^2}{N}\sum_i\|\bar p^t-p_i^t\|_\mu^2\\
&=2G_t+2L_f^2\psi_t^f.
\end{align}

\end{proof}

\begin{lemma}[Strict Lyapunov recursion]
\label{lem:f-lyapunov}
Assume there exists $\lambda>0$ such that
\begin{equation}
a_F-\lambda C_G>0,\qquad \lambda c_0-b_F>0.
\label{eq:app-small-gain}
\end{equation}
Such a $\lambda$ exists exactly when $b_FC_G<a_Fc_0$; this is the
explicit task--gossip small-gain requirement.
For $\mathcal L_t=\mathcal R_t+\lambda\mathcal V_W(Z^t)$, there are constants
$c_G,c_V,K_B,K_\zeta,K_\eta>0$ such that
\begin{equation}
\begin{split}
\mathbb E_t\mathcal L_{t+1}
\le\mathcal L_t-c_G\eta G_t-c_V\eta\mathcal V_W(Z^t)
+K_B\eta B_f^2\\+K_\zeta\eta\zeta_f^2+K_\eta\eta^2.
\end{split}
\label{eq:app-lyapunov}
\end{equation}
\end{lemma}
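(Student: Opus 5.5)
The plan is to add the task-descent recursion of Lemma~\ref{lem:f-descent} to $\lambda$ times the consensus recursion of Lemma~\ref{lem:f-consensus-recursion}, and then absorb the cross terms using the small-gain condition. Taking conditional expectations of $\mathcal L_{t+1}=\mathcal R_{t+1}+\lambda\mathcal V_W(Z^{t+1})$ and using linearity gives
\[
\begin{aligned}
\mathbb E_t\mathcal L_{t+1}\le{}&\mathcal R_t+\lambda\mathcal V_W(Z^t)
-(a_F-\lambda C_G)\eta G_t-(\lambda c_0-b_F)\eta\mathcal V_W(Z^t)\\
&+(D_B+\lambda C_B)\eta B_f^2+(D_\zeta+\lambda C_\zeta)\eta\zeta_f^2\\
&+(D_\eta+\lambda C_\eta)\eta^2+(D_A+\lambda C_A)\eta A_t^f .
\end{aligned}
\]
Here the consensus term $-\lambda c_0\eta\mathcal V_W$ and the task cross term $+b_F\eta\mathcal V_W$ combine into a single coefficient. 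The $G_t$ terms combine in the same way. This fixes $c_G:=a_F-\lambda C_G$ and $c_V:=\lambda c_0-b_F$, which are positive by Equation~\ref{eq:app-small-gain}. It also fixes $K_B:=D_B+\lambda C_B$ and $K_\zeta:=D_\zeta+\lambda C_\zeta$. All of these are independent of $\eta$ and $T$, because every constant in the two input lemmas is.

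The staleness term is handled next. By Lemma~\ref{lem:f-staleness}, $A_t^f\le c_{\rm step}A_{\rm eff}^2\eta^2=:\bar A^f\eta^2$. The term $(D_A+\lambda C_A)\eta A_t^f$ is therefore at most $(D_A+\lambda C_A)\bar A^f\eta^3$. Since $\eta\le\eta_{\max}\le1$, this is in turn at most $(D_A+\lambda C_A)\bar A^f\eta^2$. Setting
\[
K_\eta:=D_\eta+\lambda C_\eta+(D_A+\lambda C_A)\bar A^f
\]
then yields Equation~\ref{eq:app-lyapunov}. The step-size ceiling must be the minimum of the ceilings required by Lemmas~\ref{lem:f-global-contraction}, \ref{lem:f-consensus-recursion} and \ref{lem:f-descent}, so that both input recursions hold at the same $\eta$.

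It remains to justify the equivalence asserted in the statement. Suppose $\lambda$ satisfies both inequalities of Equation~\ref{eq:app-small-gain}. Then $b_F/c_0<\lambda<a_F/C_G$, and hence $b_FC_G<a_Fc_0$, using $C_G,c_0>0$. Conversely, if $b_FC_G<a_Fc_0$, then the interval $(b_F/c_0,\,a_F/C_G)$ is nonempty. Any $\lambda$ in it works, for example the midpoint.

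The main obstacle is bookkeeping rather than a genuine inequality. One issue is that the staleness coupling must not feed back into consensus. This holds because Lemma~\ref{lem:f-staleness} bounds $A_t^f$ purely by the simplex and the step bound $c_{\rm step}$. The other issue is that the Young-inequality choices must be made before $\lambda$ is fixed: $\varepsilon_\delta$ and $\varepsilon_{\rm KD}$ in $a_F,b_F$, and $\varepsilon_b$ in $C_G$. This ordering ensures that the small-gain condition compares fixed numbers. Positivity of $K_B,K_\zeta,K_\eta$ is automatic, since each is a sum of nonnegative terms, and any zero term can be enlarged to a positive one.
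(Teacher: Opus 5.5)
Your proposal is correct and follows the paper's proof essentially line by line. You add $\lambda$ times the consensus recursion to the task-descent recursion, define $c_G,c_V,K_B,K_\zeta$ exactly as the paper does, absorb the staleness term via $A_t^f\le\bar A^f\eta^2$ and $\eta\le1$ into $K_\eta=D_\eta+\lambda C_\eta+(D_A+\lambda C_A)\bar A^f$, and obtain the small-gain equivalence from the interval $(b_F/c_0,a_F/C_G)$; your extra remarks on fixing the Young parameters and the common step-size ceiling before choosing $\lambda$ are sound bookkeeping that the paper leaves implicit.
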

\begin{proof}
The first inequality in Equation~\ref{eq:app-small-gain} is equivalent to
$\lambda<a_F/C_G$, while the second is equivalent to
$\lambda>b_F/c_0$. Thus a valid $\lambda$ exists precisely when
$$
\begin{aligned}
\frac{b_F}{c_0}&<\frac{a_F}{C_G}
\quad\Longleftrightarrow\quad
b_FC_G<a_Fc_0.
\end{aligned}
$$
When this strict inequality holds, every
$\lambda\in(b_F/c_0,a_F/C_G)$ satisfies both margins. Fix one such
$\lambda$.
Multiply Equation~\ref{eq:app-consensus-recursion} by $\lambda$ and display
each term:
$$
\begin{aligned}
\lambda\mathbb E_t\mathcal V_W&(Z^{t+1})
\le\lambda\mathcal V_W(Z^t)-\lambda c_0\eta\mathcal V_W(Z^t)
+\lambda C_G\eta G_t\\
&\quad+\lambda C_B\eta B_f^2+\lambda C_\zeta\eta\zeta_f^2
+\lambda C_\eta\eta^2+\lambda C_A\eta A_t^f.
\end{aligned}
$$
Equation~\ref{eq:app-task-descent} is
$$
\begin{aligned}
\mathbb E_t\mathcal R_{t+1}
&\le\mathcal R_t-a_F\eta G_t+b_F\eta\mathcal V_W(Z^t)
+D_B\eta B_f^2+D_\zeta\eta\zeta_f^2\\
&\quad+D_\eta\eta^2+D_A\eta A_t^f.
\end{aligned}
$$
By definition,
$\mathcal L_{t+1}=\mathcal R_{t+1}
+\lambda\mathcal V_W(Z^{t+1})$. Adding the two displays term by term gives
$$
\begin{aligned}
\mathbb E_t\mathcal L_{t+1}
&\le\mathcal L_t-(a_F-\lambda C_G)\eta G_t
-(\lambda c_0-b_F)\eta\mathcal V_W(Z^t)\\
&\quad+(D_B+\lambda C_B)\eta B_f^2
+(D_\zeta+\lambda C_\zeta)\eta\zeta_f^2\\
&\quad+(D_\eta+\lambda C_\eta)\eta^2
+(D_A+\lambda C_A)\eta A_t^f.
\end{aligned}
$$
Set
$$
\begin{aligned}
c_G&:=a_F-\lambda C_G>0,\\
c_V&:=\lambda c_0-b_F>0,\\
K_B&:=D_B+\lambda C_B,\\
K_\zeta&:=D_\zeta+\lambda C_\zeta,\\
K_A&:=D_A+\lambda C_A,\\
K_\eta^{(0)}&:=D_\eta+\lambda C_\eta.
\end{aligned}
$$
Lemma~\ref{lem:f-staleness} gives
$A_t^f\le\bar A^f\eta^2$ with
$\bar A^f=c_{\rm step}A_{\rm eff}^2$. For $\eta\le1$,
$$
K_A\eta A_t^f
\le K_A\bar A^f\eta^3
\le K_A\bar A^f\eta^2.
$$
Defining $K_\eta:=K_\eta^{(0)}+K_A\bar A^f$ proves
Equation~\ref{eq:app-lyapunov}.
\end{proof}

\begin{theorem}[Constant-step convergence]
\label{thm:fa}
Under A1--A10, $0<\alpha<1$, a connected graph with no isolated peers, the
frozen-kernel condition, the thresholds in
Equations~\ref{eq:app-kd-threshold} and \ref{eq:app-small-gain}, and
$0<\eta\le\eta_{\max}$, there are constants $c,C_0,C_1,C_2,C_3>0$ such that
\begin{equation}
\frac1T\sum_{t=0}^{T-1}
\left[\mathbb E\|\nabla_fF(\bar p^t)\|_\mu^2+c\Psi_t^f\right]
\le\frac{C_0}{\eta T}+C_1\eta+C_2B_f^2+C_3\zeta_f^2.
\label{eq:app-main-theorem}
\end{equation}
\end{theorem}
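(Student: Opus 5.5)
The plan is to telescope the strict Lyapunov recursion of Lemma~\ref{lem:f-lyapunov} and then convert its two descent quantities, $G_t$ and $\mathcal V_W(Z^t)$, into the two observables in Equation~\ref{eq:app-main-theorem}. The thresholds in Equation~\ref{eq:app-kd-threshold} and the condition $\eta\le\eta_{\max}$ make Lemma~\ref{lem:f-global-contraction} and Lemma~\ref{lem:f-consensus-recursion} applicable. Lemma~\ref{lem:f-descent} then supplies the task recursion, and the small-gain condition in Equation~\ref{eq:app-small-gain} supplies $\lambda$. Together these give Equation~\ref{eq:app-lyapunov} at every event, with the staleness term already absorbed through Lemma~\ref{lem:f-staleness}.

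First I take total expectations of Equation~\ref{eq:app-lyapunov} using the tower property and sum over $t=0,\dots,T-1$. The sum telescopes, and since $\mathcal L_T\ge0$ (because $\mathcal R_t\ge0$ as each $F_i\ge F_i^\star$, and $\mathcal V_W\ge0$), I obtain
\[
\eta\sum_{t<T}\bigl(c_G\mathbb E G_t+c_V\mathbb E\mathcal V_W(Z^t)\bigr)\le\mathcal L_0+T\eta\bigl(K_B B_f^2+K_\zeta\zeta_f^2\bigr)+T K_\eta\eta^2 .
\]
Dividing by $\eta T$ bounds the time averages of $c_G\mathbb E G_t+c_V\mathbb E\mathcal V_W$ by $\mathcal L_0/(\eta T)+K_\eta\eta+K_BB_f^2+K_\zeta\zeta_f^2$.

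Next I convert to the observables. Taking expectations in Equation~\ref{eq:app-gradient-transfer} gives $\mathbb E\|\nabla_fF(\bar p^t)\|^2\le 2\mathbb E G_t+2L_f^2\Psi_t^f$. Lemma~\ref{lem:f-consensus-equivalence} together with the lower metric comparison $\mathcal V_W\ge m_W\Psi^z$ from the proof of Lemma~\ref{lem:f-global-contraction} gives $\Psi_t^f\le (C_{fz}/m_W)\mathbb E\mathcal V_W(Z^t)$. Hence the bracket in Equation~\ref{eq:app-main-theorem} is at most $2\mathbb E G_t+(2L_f^2+c)(C_{fz}/m_W)\mathbb E\mathcal V_W(Z^t)$, and this is at most $K_*\bigl(c_G\mathbb E G_t+c_V\mathbb E\mathcal V_W\bigr)$ with $K_*$ as in Appendix~\ref{app: whole coefficients}. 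Any $c>0$ works, for instance $c=1$. Multiplying the averaged bound by $K_*$ yields $C_0=K_*\mathcal L_0$, $C_1=K_*K_\eta$, $C_2=K_*K_B$ and $C_3=K_*K_\zeta$.

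The main obstacle is bookkeeping rather than analysis. The recursions in Lemmas~\ref{lem:f-consensus-recursion} and~\ref{lem:f-descent} are stated conditionally and involve the random quantities $\psi_t$ and $G_t$, so I must check that every step survives unconditional expectation. The bounds are linear in these quantities, so this holds. I must also check that the staleness absorption is valid even though $A_t^f$ is already an expectation: Lemma~\ref{lem:f-staleness} bounds it deterministically, so it passes through unchanged. Finally, $\mathcal L_0$ has to be finite, which follows from A1, since bounded logits give bounded CE, together with the finiteness of the initial disagreement.
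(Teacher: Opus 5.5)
Your proposal is correct and matches the paper's proof essentially step for step. Both take total expectations in the strict Lyapunov recursion, telescope, and drop the nonnegative $\mathcal L_T$. Both then convert $G_t$ and $\mathcal V_W$ into the two observables using the gradient-transfer inequality and $\Psi_t^f\le (C_{fz}/m_W)\,\mathbb E\mathcal V_W(Z^t)$, arriving at the same $K_*$ and the same constants $C_0,\dots,C_3$; your extra bookkeeping remarks (staleness passing through, finiteness of $\mathcal L_0$) are consistent with the paper.
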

\begin{proof}
Take total expectation in Equation~\ref{eq:app-lyapunov}. The tower property
$\mathbb E[\mathbb E_tX]=\mathbb EX$ gives, for each $t$,
$$
\begin{aligned}
c_G\eta\mathbb EG_t+c_V\eta\mathbb E\mathcal V_W(Z^t)
&\le\mathbb E\mathcal L_t-\mathbb E\mathcal L_{t+1}\\
&\quad+K_B\eta B_f^2+K_\zeta\eta\zeta_f^2+K_\eta\eta^2.
\end{aligned}
$$
Summing from $t=0$ to $T-1$ telescopes the Lyapunov differences:
$$
\sum_{t=0}^{T-1}
\left(\mathbb E\mathcal L_t-\mathbb E\mathcal L_{t+1}\right)
=\mathbb E\mathcal L_0-\mathbb E\mathcal L_T.
$$
Both terms in
$\mathcal L_T=\mathcal R_T+\lambda\mathcal V_W(Z^T)$ are nonnegative,
because $F_i(p_i^T)-F_i^\star\ge0$ and $\mathcal V_W\ge0$. Thus
$$
\begin{aligned}
&c_G\eta\sum_{t=0}^{T-1}\mathbb EG_t
+c_V\eta\sum_{t=0}^{T-1}\mathbb E\mathcal V_W(Z^t)\\
&\le\mathbb E\mathcal L_0+T(K_B\eta B_f^2+K_\zeta\eta\zeta_f^2+K_\eta\eta^2).
\end{aligned}
$$
For deterministic initialization, $\mathbb E\mathcal L_0=\mathcal L_0$.

Lemma~\ref{lem:f-consensus-equivalence} and the metric comparison provide a
finite constant $C_{fz}$ such that
$\psi_t^f\le C_{fz}\mathcal V_W(Z^t)/m_W$ pathwise. Taking expectation gives
$$
\Psi_t^f
\le\frac{C_{fz}}{m_W}\mathbb E\mathcal V_W(Z^t).
$$
Taking expectation in Equation~\ref{eq:app-gradient-transfer} gives
$$
\begin{aligned}
\mathbb E\|\nabla_fF(\bar p^t)\|_\mu^2
&\le2\mathbb EG_t+2L_f^2\Psi_t^f\\
&\le2\mathbb EG_t+\frac{2L_f^2C_{fz}}{m_W}
\mathbb E\mathcal V_W(Z^t).
\end{aligned}
$$
For any fixed $c>0$, define
$$
K_*:=\max\left\{\frac2{c_G},
\frac{2L_f^2C_{fz}/m_W+cC_{fz}/m_W}{c_V}\right\}.
$$
The definition of $K_*$ ensures, for each $t$,
$$
\begin{aligned}
&\mathbb E\|\nabla_fF(\bar p^t)\|_\mu^2+c\Psi_t^f\\
&\le2\mathbb EG_t
+\frac{(2L_f^2+c)C_{fz}}{m_W}
\mathbb E\mathcal V_W(Z^t)\\
&\le K_*\left(
c_G\mathbb EG_t+c_V\mathbb E\mathcal V_W(Z^t)\right).
\end{aligned}
$$
Summing this inequality, applying the telescoped Lyapunov bound, and dividing
by $\eta T$ gives
$$
\begin{aligned}
\frac1T\sum_{t=0}^{T-1}
\left[\mathbb E\|\nabla_fF(\bar p^t)\|_\mu^2+c\Psi_t^f\right]
&\le\frac{K_*\mathcal L_0}{\eta T}+K_*K_\eta\eta\\
&\quad+K_*K_BB_f^2+K_*K_\zeta\zeta_f^2.
\end{aligned}
$$
Set $C_0=K_*\mathcal L_0$, $C_1=K_*K_\eta$,
$C_2=K_*K_B$, and $C_3=K_*K_\zeta$. This proves
Equation~\ref{eq:app-main-theorem}.
\end{proof}

The $\zeta_f^2$ term cannot generally be removed. With two identical
full-rank peers, $B_f=0$, no noise, and local quadratic risks centered at $b$
and $-b$, the symmetric CE+KD recursion has a nonzero disagreement fixed point
independent of $\eta$. This example satisfies the kernel assumptions and has
zero KD projection defect. Deterministic non-IID task drift is therefore a
separate neighbourhood contribution rather than a discretization error.

\section{Experimental Protocol and Complete Results}
\label{app:experiments}

This appendix gives the complete empirical validation of the convergence
claims. The campaign contains 25 stored trajectories: 20 random-edge KD runs,
four isolated local-training controls, and one homogeneous D-SGD control.
Twenty-four histories end at 70,000 edge events. The Setting-C, $\eta=0.05$,
seed-1 history contains 101 snapshots through 100,000 events, although its
copied configuration retains a 70,000-event target. All reported values use
the actual event indices stored in the JSON history. The figures and summary
statistics in this appendix are regenerated by
\texttt{appendix\_analysis.py} from the files in
\texttt{results/full}, but due to the limit of 50MB of data supplement, we only upload the code with the detailed execution procedures in \texttt{README.md}.
For running the whole experiment, we used MacStudio M4 Max 128 GB and we ran the whole experiments set for $\approx100$ hours. 

\subsection{Experimental Setup and Theorem-to-Protocol Mapping}
\label{app:exp-setup}

We simulate $N=20$ peers on CIFAR-10. Each peer receives a private shard from a
Dirichlet label partition with concentration $\gamma_{\rm part}=0.3$. For each
seed, the same connected $\operatorname{ER}(0.3)$ graph is used throughout the
run. Seed 0 has 51 edges and seed 1 has 58 edges, with graph spectral gaps
$1.232$ and $1.822$, respectively. The reference measure $\mu$ is the
empirical measure on a fixed 2,000-image CIFAR-10 test subset. The same subset
is used for KD and evaluation, so all function-space measurements are
transductive on $\mu$.

\begin{table}[H]
\centering
\caption{Model rosters used in the four settings. The parameter counts are
approximate and are included to describe capacity scale, not to estimate
$B_f$.}
\label{tab:app-architectures}
\begin{tabular}{@{}p{.09\linewidth}p{.52\linewidth}p{.22\linewidth}@{}}
\toprule
Set. & Roster / size & Purpose \\
\midrule
A & $20\times$ ResNet-m ($2.8$M each) & homogeneous reference \\
B & $7$s/$7$m/$6$l ($.7$--$6.3$M) & width heterogeneity \\
C & $7$xs/$7$m/$6$xl ($.7$--$11.2$M) & wider capacity range \\
D & ResNet-18, MobileNetV2, ShuffleNetV2 ($1.3$--$11.2$M) &
mixed families \\
\bottomrule
\end{tabular}
\end{table}

Every peer first receives 1,500 local cross-entropy pretraining steps. The
gossip phase samples one undirected edge per event and updates both endpoints.
The teacher is live, so there is no stored-version delay in these runs. Each
KD update uses the shared reference batch, $\alpha=0.5$, $\tau=4$, constant
step size $\eta$, batch size 64, weight decay $5\times10^{-4}$, label
smoothing $0.1$, logit clipping at 10, and parameter-gradient clipping at 5.
The main runs use $\eta=0.05$ and seeds 0 and 1. The seed-0 sensitivity sweep
uses $\eta\in\{0.0125,0.025,0.05,0.1\}$. Evaluation is recorded every 1,000
events. Table~\ref{tab:app-protocol} records the remaining settings.

\begin{table}[H]
\centering
\caption{Fixed protocol for the completed campaign.}
\label{tab:app-protocol}
\begin{tabular}{@{}p{.42\linewidth}p{.45\linewidth}@{}}
\toprule
Item & Value \\
\midrule
Dataset / peers & CIFAR-10 / $N=20$ \\
Private partition & Dirichlet, $\gamma_{\rm part}=0.3$ \\
Graph & connected $\operatorname{ER}(0.3)$ \\
Reference / probe support & 2,000 images \\
Pretraining & 1,500 CE steps, lr $0.05$, batch 128 \\
Gossip horizon / logging & 70,000 events / every 1,000 \\
KD weight / temperature & $\alpha=0.5$ / $\tau=4$ \\
Teacher / gossip momentum & live / 0 \\
Batch / weight decay & 64 / $5\times10^{-4}$ \\
Label smoothing & $0.1$ \\
Logit / gradient clip & $10$ / $5$ \\
Device & PyTorch MPS (MacStudio M4 Max 128GB)\\
\bottomrule
\end{tabular}
\end{table}

The measured consensus statistic is
\begin{equation}
\widehat\Psi_t^f
=\frac{1}{N}\sum_{i=1}^N
\left\|p_i^t-\overline p^t\right\|_\mu^2,
\qquad
\overline p^t=\frac{1}{N}\sum_{i=1}^N p_i^t,
\label{eq:app-consensus}
\end{equation}
where $p_i^t$ is the temperature-scaled prediction of peer $i$ on the
reference support. For the stationarity channel, let
$\widehat S_t$ denote the stored squared residual between the mean
temperature-one prediction and the label-smoothed target on $\mu$. If
$T_k=1000k$ denotes a logged horizon, the theorem-facing prefix statistic is
\begin{equation}
\begin{aligned}
\widehat G_T
&=\frac{1}{K(T)+1}\sum_{k=0}^{K(T)}\widehat S_{T_k},\\
K(T)&=\max\{k:T_k\leq T\}.
\end{aligned}
\label{eq:app-stationarity}
\end{equation}
Thus, $\widehat G_T$ is a sampled time average, whereas the final raw value
$\widehat S_T$ is an instantaneous terminal residual.

The protocol provides operational controls corresponding to several analytical
objects. The finite reference support, logit clipping, and gradient clipping
make the numerical observables bounded, which is the implementation analogue
of the bounded-support and moment controls used in the analysis. The common
prediction space makes $\widehat\Psi_t^f$ well-defined for all four rosters. The restricted task/KD alignment condition associated with
A10 is probed by Settings B--D, especially the mixed-family Setting D. This appendix uses the experiments to test the theorem's observable consequences.

\subsection{Synthetic Verification of the Convergence-Rate Diagnostic}
\label{app:synthetic-rate}

Before interpreting neural-network traces, we calibrated the rate estimator on
known signals. For a prescribed floor $b$ and exponent $p$, the calibration
trace has the form
\begin{equation}
G_T=b+aT^{-p}+\varepsilon_T,
\label{eq:app-synthetic-rate}
\end{equation}
with deterministic $p\in\{0.5,1.0,1.5\}$ and small zero-mean perturbations.
The diagnostic fits a line to
$\log(G_T-b)$ against $\log T$ over the positive-excess region. The recovered
exponents are $0.500$, $0.988$, and $1.428$, respectively. Figure
\ref{fig:app-synthetic-rate} shows the traces, the fitted lines, and the
prescribed reference slopes.

\begin{figure*}[t]
\centering
\includegraphics[width=0.98\textwidth]{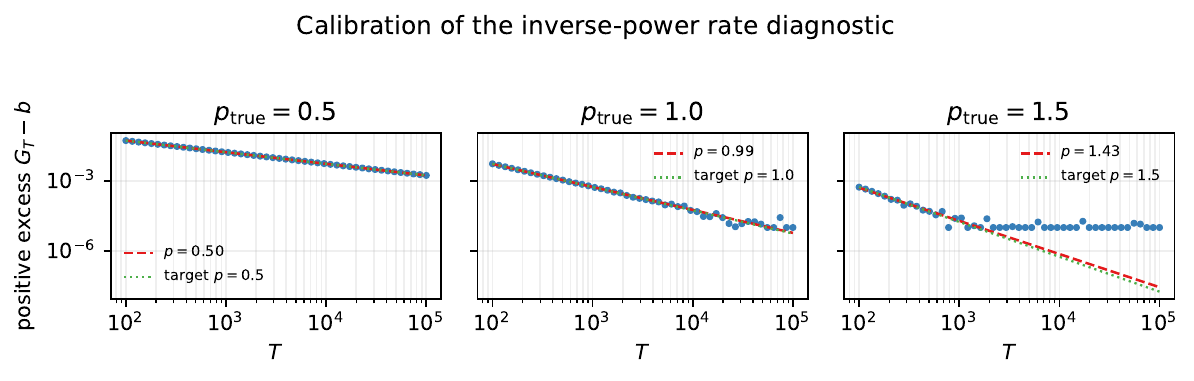}
\caption{Synthetic calibration of the inverse-power rate diagnostic. The
estimator recovers the known exponents $p=0.5$, $1.0$, and $1.5$ from
finite noisy traces. This validates the numerical diagnostic. This is regression, so it does not serve as any approximation for the exponent used in the theory, it only shows the variation of the data towards different exponents.}
\label{fig:app-synthetic-rate}
\end{figure*}

First, the implementation can recover an inverse-power exponent when the floor
and transient are known. Second, a neural trajectory must estimate its
neighbourhood from the same finite data, which introduces floor-estimation
error and makes the early transient and terminal noise relevant. The neural
results below therefore report the global fit and the late dynamic exponent
separately.

\subsection{Stored Convergence Traces}
\label{app:traces}

Figure~\ref{fig:app-convergence-traces} displays the seed-0 main trajectories.
The large panels show the instantaneous function-space disagreement
$\widehat\Psi_t^f$. The inset in each panel shows the prefix stationarity
statistic $\widehat G_T$. KD produces a rapid reduction in disagreement in all
four rosters, including the mixed-family Setting D where in the system, both architecture and the parameter sizes are different across models. The stationarity trace is less uniform across
rosters, which is expected because it also contains local-task and
representation effects.
\begin{figure}[H]
\centering
\includegraphics[width=1\linewidth]{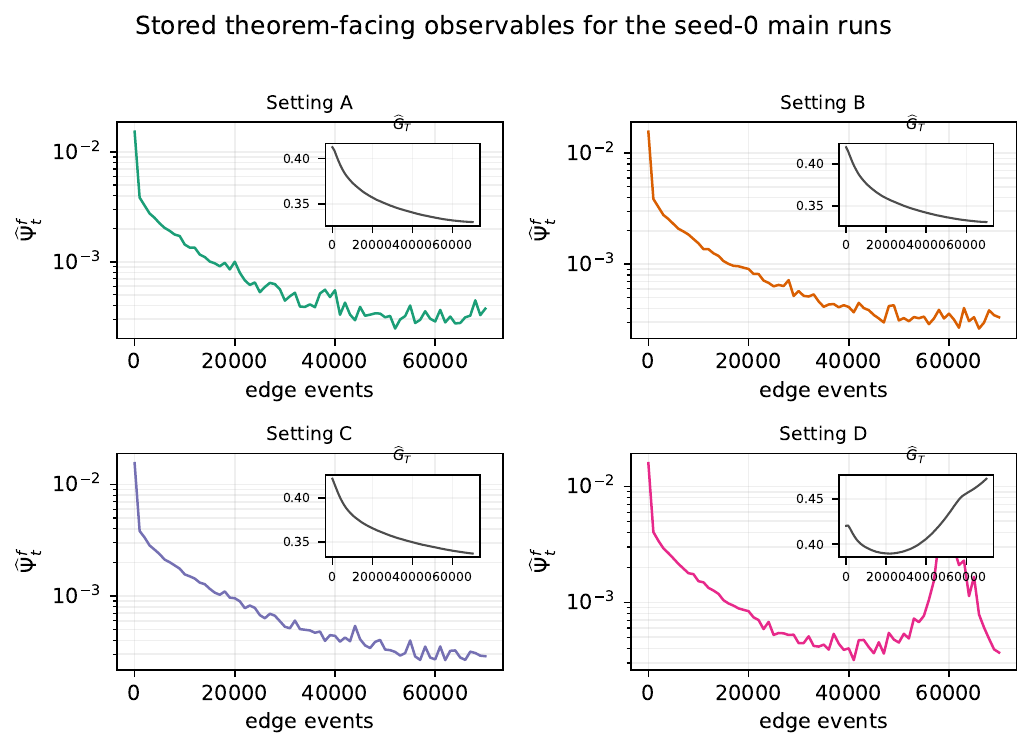}
\caption{Seed-0 main trajectories. The large panels show the instantaneous
function-space disagreement in Equation~\ref{eq:app-consensus}; each inset
shows the prefix stationarity statistic in Equation~\ref{eq:app-stationarity}.
The same output-space observable is available for homogeneous, width-heterogeneous,
and mixed-family peers.}
\label{fig:app-convergence-traces}
\end{figure}

\subsection{Empirical Convergence-Rate Diagnostics}
\label{app:rate-diagnostics}

For each stored history, the rate analysis estimates a terminal neighbourhood
$\widehat b_G$ by regressing the prefix statistic on $1/T$ over the final 80
percent of positive horizons (after pre-training, because if the models start from nothing but noise parameters, it does not make sense). It then defines the regularized excess
\begin{equation}
E_T^+
=\max\{\widehat G_T-\widehat b_G,\varepsilon_b\},
\label{eq:app-rate-excess}
\end{equation}
where $\varepsilon_b$ is a robust terminal-noise estimate. The dynamic
exponent is the negative local slope of $\log E_T^+$ against $\log T$ in a
seven-snapshot moving window. The late value
$\widetilde p_{\rm dyn}$ is the median over the latter half of the
resolvable positive-excess region.

Table~\ref{tab:app-main-results} gives the main-run diagnostics. Settings A and
B have late $85\% $ dynamic exponents close to $1.00$ for both seeds. Setting C reaches
$1.21$ for seed 0 and $1.90$ for the extended seed-1 history. The mixed-family
Setting D also reaches the consensus neighbourhood. 
\begin{table*}[t]
\centering
\caption{Main KD trajectories at $\eta=0.05$. $\widehat b_\Psi$ and
$\widehat b_G$ are finite-horizon final-85\% $1/T$-regression intercepts.
$\widehat G_T$ is the prefix statistic in Equation~\ref{eq:app-stationarity}.
A dash indicates that no positive stationarity excess remained above three
terminal-noise standard errors. $\widetilde p_{\rm dyn}$ is the best fit exponent for that specific experiment.}
\label{tab:app-main-results}
\begin{tabular}{@{}llrrrrrrr@{}}
\toprule
Set & Seed & $T$ & $\widehat\Psi_T^f$ & $\widehat b_\Psi$ &
$\widehat G_T$ & $\widehat b_G$ & $\widetilde p_{\rm dyn}$ & Ens. \\
\midrule
A & 0 & 70k & $3.739\mathrm e{-4}$ & $3.868\mathrm e{-4}$ &
.330 & .316 & 1.03 & .729 \\
A & 1 & 70k & $3.522\mathrm e{-4}$ & $2.793\mathrm e{-4}$ &
.397 & .389 & 1.02 & .643 \\
B & 0 & 70k & $3.326\mathrm e{-4}$ & $3.983\mathrm e{-4}$ &
.331 & .316 & 1.03 & .721 \\
B & 1 & 70k & $2.809\mathrm e{-4}$ & $2.714\mathrm e{-4}$ &
.399 & .391 & .99 & .622 \\
C & 0 & 70k & $2.882\mathrm e{-4}$ & $4.081\mathrm e{-4}$ &
.337 & .323 & 1.21 & .726 \\
C & 1 & 100k & $4.101\mathrm e{-4}$ & $2.782\mathrm e{-4}$ &
.399 & .398 & 1.90 & .603 \\
D & 0 & 70k & $3.688\mathrm e{-4}$ & $8.264\mathrm e{-4}$ &
.472 & .526 & -- & .535 \\
D & 1 & 70k & $3.100\mathrm e{-4}$ & $3.155\mathrm e{-4}$ &
.642 & .729 & -- & .329 \\
\bottomrule
\end{tabular}
\end{table*}

The global log-log fit is intentionally not used as the primary rate claim.
It includes early transients and is sensitive to the fitted floor. In the
present data its exponent is below one for many runs, while the late dynamic
exponent for A--D is near or above $1.00$. This pattern is consistent with a
finite trajectory entering the inverse-time regime after an initial
communication and optimization transient. The synthetic calibration shows why
the distinction is necessary: a correct estimator still returns a biased
global slope if the fit includes a non-asymptotic region or a poorly resolved
floor.

\subsection{Analysis and Verification of the Error Neighbourhood}
\label{app:error-neighbourhood}

The convergence bound has the schematic form
\begin{equation}
\frac{1}{T}\sum_{t<T}
\left(
\|\nabla_f F(\overline p^t)\|_\mu^2
c_\Psi\Psi_t^f
\right)
\leq
\frac{C_0}{\eta T}
C_1\eta
C_2B_f^2
C_3\zeta_f^2.
\label{eq:app-bound}
\end{equation}
The step-size sweep tests the two finite-horizon consequences that are
identifiable in the present campaign. At fixed $T$, a smaller $\eta$ reduces
the explicit discretization contribution but contracts more slowly because the
communication rate is proportional to $\eta$. A larger $\eta$ reaches the
neighbourhood sooner but can leave a larger stochastic neighbourhood. The
measured intercepts are not used to identify $B_f^2$ and $\zeta_f^2$
separately.

Figure~\ref{fig:app-convergence-traces} reports the fitted consensus and
stationarity intercepts for the complete seed-0 sweep. For A--C, the consensus
intercept decreases substantially from $\eta=0.0125$ to $\eta=0.05$ and then
changes little at $\eta=0.1$. Setting D has a higher and less regular stationarity
intercept, which is consistent with the additional representation and local
task mismatch in the mixed-model architecture roster.
\begin{figure}[H]
\centering
\includegraphics[width=1\linewidth]{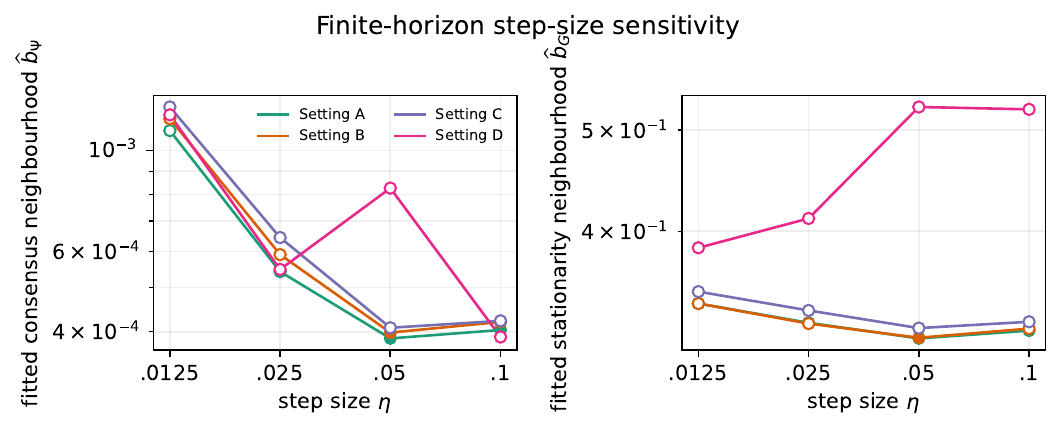}
\caption{Finite-horizon step-size sensitivity. Filled markers satisfy the
stored convergence gate and open markers do not. The fitted intercepts
summarize the observed neighbourhood of the prefix statistic and are not
identifications of individual terms in Equation~\ref{eq:app-bound}.}
\label{fig:app-eta-sensitivity}
\end{figure}

\begin{table}
\centering
\caption{Seed-0 step-size sweep. The intercepts are descriptive fits to the
stored histories. The convergence flag is the simulator's fixed tail-stability
criterion, not a theorem test.}
\label{tab:app-eta-sweep}
\begin{tabular}{@{}lrrrr}
\toprule
Set & $\eta$ & $\widehat b_\Psi$ & $\widehat b_G$ &
$\widehat\Psi_T^f$ \\
\midrule
A & .0125 & $1.105\mathrm e{-3}$ & .341 & $6.870\mathrm e{-4}$ \\
A & .0250 & $5.420\mathrm e{-4}$ & .327 & $3.046\mathrm e{-4}$ \\
A & .0500 & $3.868\mathrm e{-4}$ & .316 & $3.739\mathrm e{-4}$ \\
A & .1000 & $4.036\mathrm e{-4}$ & .321 & $3.389\mathrm e{-4}$ \\
\midrule
B & .0125 & $1.174\mathrm e{-3}$ & .341 & $7.553\mathrm e{-4}$ \\
B & .0250 & $5.907\mathrm e{-4}$ & .326 & $3.260\mathrm e{-4}$ \\
B & .0500 & $3.983\mathrm e{-4}$ & .316 & $3.326\mathrm e{-4}$ \\
B & .1000 & $4.204\mathrm e{-4}$ & .323 & $3.284\mathrm e{-4}$ \\
\midrule
C & .0125 & $1.245\mathrm e{-3}$ & .350 & $8.229\mathrm e{-4}$ \\
C & .0250 & $6.437\mathrm e{-4}$ & .336 & $3.615\mathrm e{-4}$ \\
C & .0500 & $4.081\mathrm e{-4}$ & .323 & $2.882\mathrm e{-4}$ \\
C & .1000 & $4.226\mathrm e{-4}$ & .328 & $2.483\mathrm e{-4}$ \\
\midrule
D & .0125 & $1.197\mathrm e{-3}$ & .386 & $7.098\mathrm e{-4}$ \\
D & .0250 & $5.479\mathrm e{-4}$ & .411 & $3.153\mathrm e{-4}$ \\
D & .0500 & $8.264\mathrm e{-4}$ & .526 & $3.688\mathrm e{-4}$ \\
D & .1000 & $3.895\mathrm e{-4}$ & .523 & $2.729\mathrm e{-4}$ \\
\bottomrule
\end{tabular}
\end{table}

The low-step-size rows should not be read as contradictions of the
$\mathcal O(1/(\eta T))$ transient. At the common event horizon, decreasing
$\eta$ also decreases the amount of function-space mixing completed by the
experiment. A longer horizon proportional to $1/\eta$ would be required to
compare the asymptotic neighbourhoods at equal effective communication time.
Likewise, the current data do not include an independent capacity probe that
would estimate $B_f$ directly. The results support the aggregate
transient--neighbourhood decomposition in Equation~\ref{eq:app-bound}, while
leaving the separate constants $B_f$ and $\zeta_f$ unidentifiable.

\subsection{Statistical Analysis of the Data Results}
\label{app:statistics}

The main KD campaign has two graph and partition seeds per setting. We report
means and sample standard deviations over those two seeds, rather than treating
the 20 peers within one run as independent repetitions. Figure
\ref{fig:app-seed-statistics} shows the resulting dispersion. The consensus
statistics are relatively stable: the seed standard deviations of terminal
$\widehat\Psi_T^f$ are $1.5\times10^{-5}$, $3.7\times10^{-5}$,
$8.6\times10^{-5}$, and $4.2\times10^{-5}$ for A--D, respectively. The
variation in the stationarity prefix is also small for A--C, with means
$0.363$, $0.365$, and $0.368$, while D has mean $0.557$ and a larger seed
standard deviation of $0.120$. This separates the robust consensus effect from
the more variable task-stationarity channel.

\begin{figure}[H]
\centering
\includegraphics[width=1\linewidth]{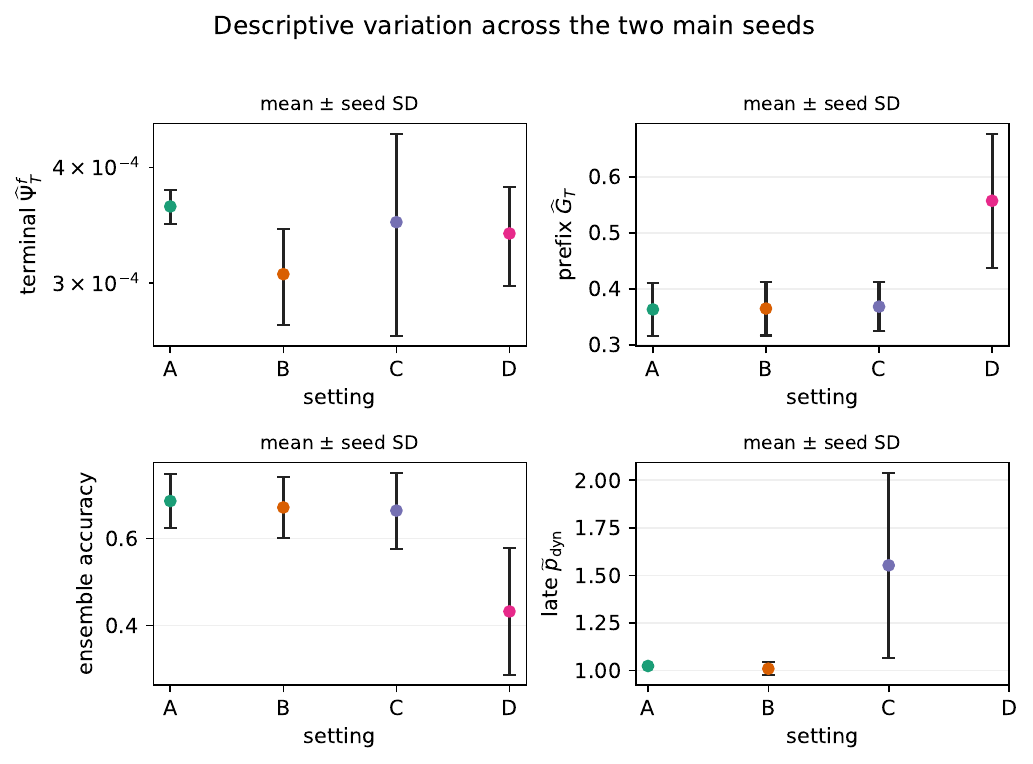}
\caption{Descriptive mean and seed standard deviation for the two main runs in
each setting. The lower-right panel omits Setting D because its stationarity
excess is not resolvable for either seed. }
\label{fig:app-seed-statistics}
\end{figure}

The mechanism control compares the seed-0 KD run at $\eta=0.05$ with isolated
local training using the same roster, data partition, and pretraining
protocol. The isolated runs finish with disagreement between
$1.36\times10^{-2}$ and $1.51\times10^{-2}$, whereas the KD runs finish
between $2.88\times10^{-4}$ and $3.74\times10^{-4}$. The paired reductions are
$36.3\times$, $40.9\times$, $47.5\times$, and $41.0\times$ for A--D, with a
mean reduction of $41.4\times$. An exploratory bootstrap interval over the four
setting-level ratios is $[37.4,45.8]\times$. Because the bootstrap has only four
paired setting units and the runs are not independent samples from a target
population, this interval is descriptive and is not used as a significance
claim.

\begin{figure}[H]
\centering
\includegraphics[width=1\linewidth]{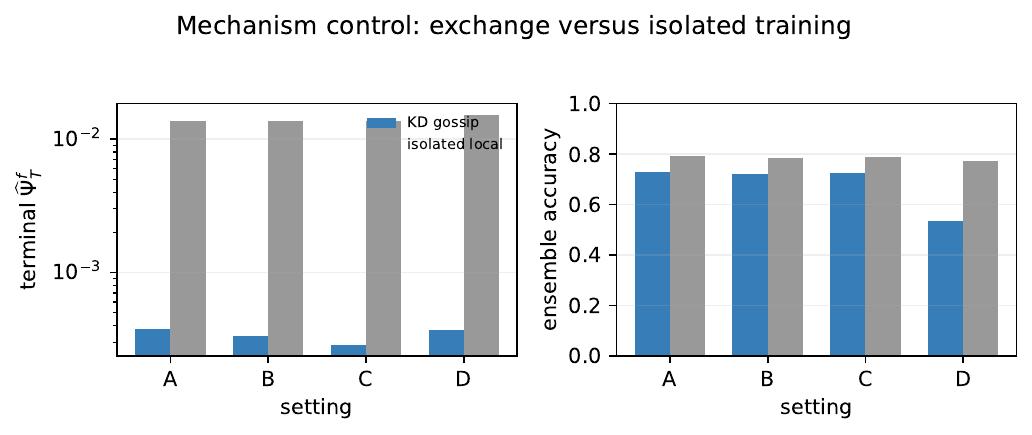}
\caption{Mechanism control at $\eta=0.05$, seed 0. KD gossip reduces
function-space disagreement by 36--47 times relative to isolated local
training across all rosters. The accuracy panel is included to show that the
consensus result is not being presented as a task-accuracy ranking.}
\label{fig:app-kd-isolated}
\end{figure}

The accuracy values are not interpreted as evidence that KD dominates local
training on task performance. In fact, the isolated control has higher
ensemble accuracy in these four runs. This is compatible with our
theoretical objective: the theorem concerns functional consensus and a
stationarity proxy, while the task channel contains local-data heterogeneity
and representation effects. 

\subsection{Sensitivity to Learning Rate and Model Heterogeneity}
\label{app:sensitivity}

The learning-rate sweep and the four rosters jointly test whether the
function-space mechanism is restricted to the homogeneous reference case.
Settings A--C have shared ResNet skeletons with different width schedules.
Their late rate diagnostics remain close to one at $\eta=0.05$, and their
terminal consensus values remain on the order of $10^{-4}$ across both main
seeds. Setting D replaces width changes with architecture-family changes. Its
terminal consensus values are of the same order as A--C, but its stationarity
residual is higher. This is the expected distinction between a communication observable,
which remains defined in prediction space, and a task-stationarity observable,
which depends on the reachable function classes and local objectives.

The isolated and D-SGD controls are interpreted as mechanism references. D-SGD
is reported only for homogeneous Setting A because parameter averaging is not
defined for B--D. Its final disagreement is $1.34\times10^{-3}$ and its
ensemble accuracy is $0.905$, but its optimizer and momentum configuration
differ from the KD phase. It is therefore not an optimizer ranking baseline.
The isolated control is more directly aligned with the consensus question of
local CE updates alone do not create the two-order reduction in
$\widehat\Psi_t^f$ observed after prediction exchange.

\subsection{Scope of the Empirical Evidence}
\label{app:empirical-scope}

The experimental evidence supports our theoretical statements. First, a
random-edge KD event contracts a common prediction-space disagreement measure
for homogeneous, width-heterogeneous, and mixed-family rosters. Second, the
synthetic calibration verifies the rate estimator, and the late dynamic
exponents for the shared-skeleton main runs are compatible with the inverse-time
transient in the theorem. Third, the learning-rate sweep exhibits the
predicted finite-horizon tradeoff between slow contraction at small $\eta$ and
the observed terminal neighbourhood at larger $\eta$. Fourth, the mechanism
control shows that the consensus reduction requires peer exchange rather than
local task training alone.

The evidence does not identify the numerical values of $B_f$, $\zeta_f$, or
the constants in Equation~\ref{eq:app-bound}. It also does not prove the
assumptions A4 and A10 as grounded truth. Their role is to provide
mathematical bridges in the proof. We evaluate the observables produced by those bridges, check the
rate-diagnostic implementation on known signals, vary capacity and
architecture family, and reports the finite-horizon behaviour.

\end{document}